\theoremstyle{plain}
\newtheorem{theorem}{Theorem}[section]
\newtheorem{lemma}[theorem]{Lemma}
\theoremstyle{definition}
\newtheorem{assumption}[theorem]{Assumption}
\theoremstyle{remark}
\icmltitlerunning{Learning Parametric Distributions from Samples and Preferences}
\begin{document}

\twocolumn[
\icmltitle{Learning Parametric Distributions from Samples and Preferences}

% It is OKAY to include author information, even for blind
% submissions: the style file will automatically remove it for you
% unless you've provided the [accepted] option to the icml2025
% package.

% List of affiliations: The first argument should be a (short)
% identifier you will use later to specify author affiliations
% Academic affiliations should list Department, University, City, Region, Country
% Industry affiliations should list Company, City, Region, Country

% You can specify symbols, otherwise they are numbered in order.
% Ideally, you should not use this facility. Affiliations will be numbered
% in order of appearance and this is the preferred way.
\icmlsetsymbol{equal}{*}

\begin{icmlauthorlist}
\icmlauthor{Marc Jourdan}{epfl}
\icmlauthor{Gizem Yüce}{epfl}
\icmlauthor{Nicolas Flammarion}{epfl}
\end{icmlauthorlist}

\icmlaffiliation{epfl}{School of Computer and Communication Sciences, EPFL, Lausanne, Vaud, Switzerland}

\icmlcorrespondingauthor{Marc Jourdan}{marc.jourdan@epfl.ch}

% You may provide any keywords that you
% find helpful for describing your paper; these are used to populate
% the "keywords" metadata in the PDF but will not be shown in the document
\icmlkeywords{Machine Learning, ICML}

\vskip 0.3in
]

% this must go after the closing bracket ] following \twocolumn[ ...

% This command actually creates the footnote in the first column
% listing the affiliations and the copyright notice.
% The command takes one argument, which is text to display at the start of the footnote.
% The \icmlEqualContribution command is standard text for equal contribution.
% Remove it (just {}) if you do not need this facility.

\printAffiliationsAndNotice{}  % leave blank if no need to mention equal contribution
%\printAffiliationsAndNotice{\icmlEqualContribution} % otherwise use the standard text.

\begin{abstract}    
    Recent advances in language modeling have underscored the role of preference feedback in enhancing model performance. 
    This paper investigates the conditions under which preference feedback improves parameter estimation in classes of continuous parametric distributions.
    In our framework, the learner observes pairs of samples from an unknown distribution along with their relative preferences depending on the same unknown parameter.
    We show that preference-based M-estimators achieve a better asymptotic variance than sample-only M-estimators, further improved by deterministic preferences.
    Leveraging the hard constraints revealed by deterministic preferences, we propose an estimator achieving an estimation error scaling of $\mathcal{O}(1/n)$---a significant improvement over the $\Theta(1/\sqrt{n})$ rate attainable with samples alone.
    Next, we establish a lower bound that matches this accelerated rate; up to dimension and problem-dependent constants.
    While the assumptions underpinning our analysis are restrictive, they are satisfied by notable cases such as Gaussian or Laplace distributions for preferences based on the log-probability reward.
\end{abstract}

\section{Introduction}
\label{sec:intro}
Recent progress in language modeling has showcased the effectiveness of preference feedback for fine-tuning~\citep{ziegler2019fine,ouyang2022training,bai2022constitutional,touvron2023llama,dubey2024llama}. 
Preference data---indicating relative quality between outcomes---consistently outperforms approaches using positive examples only like supervised fine-tuning~\citep{ivison2024unpacking}. 
This empirical success suggests that preference feedback introduces new, complementary information beyond the observed data. 
Understanding how and why preferences provide this advantage requires connecting the preference model to the data-generating process~\citep{ge2024learning}.

To understand the role of preference feedback, we focus on a simpler yet illustrative problem: parameter estimation for parametric distributions and preferences. 
Specifically, the learner observes pairs of samples from an unknown distribution, along with preferences informed by the same parameter. 
For instance, preferences based on log-probabilities naturally link the preference and probability models, though other formulations are possible~\citep{huang2024self}.

For continuous distributions, we uncover a significant statistical learning gap between preference-based and sample-only estimators. 
This paper primarily investigates this gap, taking the sample-only maximum likelihood estimator (MLE)---optimal among unbiased estimators---as a baseline.
The well-established theory of M-estimators~\citep{van2000asymptotic} suggests that preference-based M-estimators improve asymptotic variance under certain conditions. 
Yet, this improvement is modest: when samples are of similar quality, preference feedback approaches a fair coin toss, providing minimal additional information. 
While reducing asymptotic variance is encouraging, it does not fully explain the substantial performance gains observed empirically in large-scale language models.

For deterministic preferences, we prove a more striking result: preference-based estimators achieve a statistically significant acceleration in parameter estimation. 
Specifically, we show that the estimation error scales as $\cO(1/n)$ instead of the $\cO(1/\sqrt{n})$ rate achieved by sample-only estimators. 
This acceleration is supported by a matching lower bound, up to dimension and problem-dependent constants.

While this acceleration might sound surprising, the $\Theta(1/n)$ rate can already be observed in a special case of sample-only parameter estimation.
For instance, consider estimating the location parameter $\theta$ of a uniform distribution on $[\theta, \theta + 1]$ based solely on samples~\citep{wainwright2019high}.
The minimax rate for estimation error is $\Theta(1/n)$. 
The optimal estimator achieving the accelerated rate is the minimum of uniform observations whose density is positive at $\theta$.
This improved rate arises from the accumulation of random variables having a positive density at a specific point through a minimum (or maximum) operator, in contrast to the slower aggregation inherent to averaging. 
Similarly, for deterministic preferences with log-likelihood rewards, we observe the true ordering between likelihoods.
As it enforces hard constraints---through a minimum operator---on the admissible parameters, our preference-based estimator achieves accelerated convergence.

To illustrate this acceleration, consider the standard normal distribution with preferences based on log-probabilities.
Let $n \in \N$ and $[n] \eqdef \{1, \cdots, n\}$.
For each $i \in [n]$, observe samples $(X_i,Y_i) \sim \cN(0_2, I_2)$ along with their log-likelihood deterministic preference $Z_i \eqdef \sign((Y_i - X_i) S_{i})$, where $S_i \eqdef (X_i + Y_i)/2$ is their average.
The triplet $(X_i,Y_i,Z_i)$ imposes a hard constraint based on $S_i$ on the location of candidate estimators $\theta$ that are consistent with this log-likelihood deterministic preference.
Specifically, they satisfy $\theta \le S_i$ if $S_i > 0$, and $\theta \ge S_i$ otherwise.
The set of feasible parameters satisfying all constraints is thus $\left[\max_{i \in [n], \: S_i < 0 } S_i , \min_{i \in [n], \: S_i > 0} S_i \right]$.
Since the density of $\cN(0,1/2)$ is positive near zero, the length of this interval decreases as $\cO(1/n)$ with high probability.

\subsection{Contributions}
\label{ssec:contributions}

For continuous parametric probability distributions, we study the statistical learning gap between preference-based estimators and sample-only estimators.
\begin{itemize}
\item First, we show that preference-based M-estimators achieve a better asymptotic variance than sample-only M-estimators. 
The variance is further improved for deterministic preference.
\item Second, we introduce an estimator satisfying the constraints revealed by the deterministic preferences, and prove an accelerated estimation error rate of $\mathcal{O}(1/n)$.
This constitutes a significant improvement over the $\Theta(1/\sqrt{n})$ rate achieved by M-estimators.
\item Third, we provide a lower bound of $\Omega(1/n)$, matching our upper bound up to problem-specific constants.
\end{itemize}
Our results are derived under general assumptions on the distributions and the preferences.
While restrictive, they are satisfied by notable cases such as Gaussian or Laplace distributions for preferences based on log-probabilities.

\subsection{Related Work}
\label{ssec:related_works}

\paragraph{Learning parametric distributions.}
Parametric estimation is a central approach in statistics, reducing inference about a distribution to the estimation of a finite-dimensional parameter~\citep{lehmann2006theory,wasserman2013all}. The maximum likelihood estimator (MLE) is the most fundamental method in this setting. Its asymptotic properties are well studied~\citep{cramer1946mathematical,ibragimov2013statistical,van2000asymptotic}, while non-asymptotic guarantees have been established in~\citet{birge1993rates} and~\citet{spokoiny2012parametric}. Lower bounds in parametric estimation rely on techniques such as Le Cam’s two-point method~\citep{lecam1973convergence}, Fano’s method~\citep{fano1952class}, and Assouad’s method~\citep{assouad1983deux}, and provide fundamental limits on estimation accuracy~\citep{tsybakov2009nonparametric}.

\paragraph{Learning parametric value/preference functions.}
In the tabular setting, learning from pairwise comparisons aligns with the ranking problem. The performance of MLE under the Bradley-Terry model~\citep{bradley1952rank} and its extensions has been extensively studied~\citep{hunter2004mm,negahban2012iterative,hajek2014minimax,rajkumar2014statistical,shah2016estimation,shah2018simple,mao2018minimax}. 
The continuous setting, where generalization beyond observed preferences is required, has received less attention, except for linear utility functions~\citep{zhu2023principled,ge2024learning,yao2025leveraging}.
Beyond analyzing the sample complexity of reward learning with MLE under the Bradley-Terry noise model, \citet{zhu2023principled} study the performance of policies trained on the learned reward model. They show that while MLE may fail, a pessimistic variant can yield a policy with improved performance.
Relaxing the noise assumption, \citet{ge2024learning} show that utility parameters remain unidentifiable without strong modeling assumptions, even with noise-free query responses. However, they demonstrate that, in the active learning setting, utility can still be learned, even in the absence of noise.
Their results highlight that the sampling distribution of observations must be aligned with the utility function to achieve improved sample complexity. 
\citet{yao2025leveraging} leverages sparsity in the preference model and establish sharp estimation rates depending on the sparsity level.
Finally, related estimation problems have also been studied in the contexts of dueling bandits and reinforcement learning~\citep{faury2020improved,saha2023dueling}.

\paragraph{Fine-tuning with preference data.} 
Large language models often go through a post-training phase focusing mainly on learning from preference feedback~\citep{rlhf2024}, to improve capabilities such as summarization, instruction following, and reasoning. The standard approach, reinforcement learning from human feedback (RLHF)~\citep{ziegler2019fine}, trains a reward model to align with human preferences and then optimizes the policy using reinforcement learning, typically with PPO~\citep{schulman2017proximal}. RLHF follows three main steps: supervised fine-tuning, reward model training, and policy optimization.
Another line of work has explored alternatives to PPO to simplify training. One such method, direct preference optimization (DPO), reformulates the reward function to learn a policy directly from preference data, avoiding an explicit reward model. Other preference optimization objectives have also been proposed~\citep{meng2024simpo}.
Finally, while preference data has traditionally been gathered through human annotators, the learning paradigm has recently expanded to include self-play where the model critiques its own generations~\citep{dubey2024llama,huang2024self}.

\section{Problem Statement}
\label{sec:setting}

\paragraph{Parameter estimation.}
Let $\Theta \subseteq \R^k$ be a set of parameters for a class of continuous probability distributions $\cF$ over $\cX \subseteq \R^d$.
Let $B_{\Theta} \eqdef \max_{\theta \in \Theta} \|\theta\|$ be the bound on $\Theta$ for the norm $\|\cdot\|$ specific to $\cF$.
Let $\cS_{k-1}$ be the unit sphere for this norm.
Let $p_{\theta}^{\otimes 2}$ be the distribution of two independent observations of $p_{\theta}$. 

Let $\theta^\star$ be an unknown parameter to estimate.
Our samples are drawn from $p_{\theta^\star}^{\otimes 2}$, i.e., $(X,Y) \sim p_{\theta^\star}^{\otimes 2}$.
We use two archetypal examples satisfying our assumptions.
First, the class $\cF_{\cN, \Sigma}$ of multivariate Gaussian distributions with known covariance $\Sigma$, where $\Theta$ are the natural parameters with norm $\|\cdot\|_{\Sigma}$ where $\|x\|_{\Sigma} \eqdef \sqrt{x\transpose \Sigma x}$.
Second, the class $\cF_{\lap, b}$ of Laplace distributions with known scale $b$, where $\Theta$ are the mean parameters with norm $|\cdot|$.

\paragraph{Preference feedback.}
Let $\ell_{\theta} : \cX^2 \to \R$ be a parametric preference function.
Given a parametric reward function $r_{\theta}$, a reward-based preference function is defined as $\ell_{\theta}(x,y) = r_{\theta}(x) - r_{\theta}(y)$.
As a concrete example for our derivations, we consider preference based on the log-probability reward $r_{\theta} = \log p_{\theta}$.
Given observations $(x,y)$, the true preference $z$ of $x$ over $y$ is governed by $\sign(\ell_{\theta}(x,y)) \in \{\pm 1, 0\}$.  
In many settings, however, the observed preference $Z$ can be stochastic due to noise or randomness in human feedback. 

Conditioned on $(X,Y) \sim p_{\theta}^{\otimes 2}$, we denote the p.d.f. of the law of the preference $Z$ by $h(\ell_{\theta}(X,Y), \cdot)$.
On $\cX^2 \times \{\pm 1, 0\}$, the p.d.f. of the law of $(X,Y,Z)$ is denoted as $ q_{\theta,h}(x,y,z) \eqdef p_{\theta}^{\otimes 2}(x,y) h(\ell_{\theta}(x,y), z)$. Under deterministic feedback, the true preferences are observed:
\begin{align}\label{eq:hdet}
h_{\det}(\cdot, z)  \eqdef  \indi{z = \sign( \cdot )} \: .
\end{align}
Under stochastic feedback, noisy preferences $z \in \{\pm 1\}$ are observed based on the sigmoid link:
\begin{align}\label{eq:hsto}
h_{\sto}(\cdot, z)  \eqdef \sigma( z \cdot ) \quad \text{with } \quad \sigma(x) \eqdef (1+e^{-x})^{-1} \: .
\end{align}

\paragraph{Informative preferences.}
A natural question is to see when preference $Z \sim h(\ell_{\theta^\star}(X,Y), \cdot)$ helps to estimate $\theta^\star$ compared to using samples $(X,Y) \sim p_{\theta^\star}^{\otimes 2}$ only. 
Intuitively, given observations with null preference gradient, parameters close to $\theta^\star$ could have similar preferences. 
Therefore, those samples are not sufficient to discriminate between them.
For that, let $\cG_0(\theta^\star) = \{(x,y) \in \cX^2 \mid \left| \ell_{\theta^\star}(x,y) \right| > 0\}$ (resp. $\cG_1(\theta^\star) = \{(x,y) \in \cG_0(\theta^\star) \mid \left\| \nabla_{\theta^\star} \ell_{\theta^\star}(x,y) \right\| > 0\}$) be the set of pairs with non-zero preference (resp. gradient) function.
For observations in $\cG_{0}(\theta^\star)^{\complement}$, the preference is zero, hence uninformative.
For observations in $\cG_{1}(\theta^\star)^{\complement}$, the preference is locally independent of the parameter.
Therefore, they do not provide gradient information to distinguish $\theta^\star$ from a neighboring alternative parameter.
Only the preferences of samples in $\cG_1(\theta^\star)$ can provide information on $\theta^\star$, hence preference learning is meaningful if these samples are observed, i.e., $\bP_{p_{\theta^\star}^{\otimes 2}}(\cG_1(\theta^\star)) > 0$ for all $\theta^\star \in \Theta$.

\paragraph{Negative examples.}
The above condition is restrictive both on $\ell_{\theta}$ and $p_{\theta}$, even when considering $r_{\theta} = \log p_{\theta}$.
For example, taking $p_{\theta}$ as the uniform distribution over $[0,\theta]$, we have $\bP_{p_{\theta}^{\otimes 2}}(\cG_1(\theta)) = 0$.

\subsection{Sample-only MLE}

In the absence of preference observations, a natural baseline is to estimate $\theta^\star$ directly from the observations. 
Given $(X_i,Y_i)_{i \in [n]} \sim  p_{\theta^\star}^{\otimes 2n}$, the sample-only (SO) MLE is
\begin{align} \label{eq:somle}
    \wh \theta^{\so}_{n} \in & \argmin_{\theta} L^{\so}_{n}(\theta) \quad \text{ with } \nonumber\\
    & L^{\so}_{n}(\theta) \eqdef -\sum_{i \in [n]} \log p_{\theta}^{\otimes 2}(X_i,Y_i) \: . \tag{SO MLE}
\end{align}
\paragraph{Asymptotic normality.}
Under enough regularity~\citep{van2000asymptotic}, \ref{eq:somle} is asymptotically normal, i.e.,
\[
     \sqrt{n}(\wh \theta^{\so}_{n}  - \theta^\star ) \rightsquigarrow_{n \to + \infty}  \cN(0_{k}, \cI(p_{\theta^\star}^{\otimes 2})^{-1}) \: ,
\]
where $\cI(p_{\theta}) \eqdef \bE_{p_{\theta}}[-\nabla^2_{\theta} \log p_{\theta}]$ is the Fisher information matrix of $p_{\theta}$ and $\rightsquigarrow$ denote the convergence in distribution.
Let $\succeq$ denote the Loewner order on p.s.d. matrices. 
By the Cram\'er-Rao bound~\citep{rao1992information}, \ref{eq:somle} has optimal asymptotic covariance among the class of unbiased sample-only estimators, i.e., all sample-only unbiased estimator with asymptotic variance $V$ satisfy $V \succeq  \cI(p_{\theta^\star}^{\otimes 2})^{-1}$.

While asymptotic guarantees provide insight into estimator behavior as $n\to\infty$,  they do not capture performance in the relevant regime of moderate sample sizes.
%Asymptotic guarantees hide the behavior of the estimator in the relevant regime of moderate sample sizes.
Modern statistics gives meaningful non-asymptotic concentration results on empirical estimators, e.g., for high-dimensional statistics~\citep{vershynin2018high,wainwright2019high}.

\paragraph{Regularity conditions.}
The asymptotic statistics literature has devised weak regularity conditions under which asymptotic normality holds. 
``Classical conditions'' assume stronger conditions, e.g., $\theta \mapsto \log p_{\theta}(x)$ is three times continuously differentiable for every $x \in \cX$ and the integral of its third derivative converges uniformly for all $\theta$~\citep[Chapter~5.6]{van2000asymptotic}.
Those ``weak'' or ``classical'' conditions ensure that integrals and derivatives can be exchanged, and Taylor approximations around $\theta^\star$ are well controlled.
Throughout this paper, we use ``under enough regularity'' to refer to these regularity conditions on both  $p_{\theta}$ and $\ell_{\theta}$. 

For preferences based on the reward $r_{\theta} = \log p_{\theta}$, the regularity of $p_{\theta}$ implies the one of the preference  $\ell_{\theta}$ due to the properties of the logarithm.
Moreover, those regularity conditions are satisfied for numerous well-known distributions such as $\cF_{\cN, \Sigma}$ and $\cF_{\lap, b}$.
When studying deterministic preferences, we introduce general geometric assumptions on $p_{\theta}$ and $\ell_{\theta}$.
Since these conditions are inherently more restrictive, our goal is not to identify the weakest possible regularity assumptions under which our derivations hold.

\section{Preference-based M-estimator}
\label{sec:prefMestimators}

In this section, we investigate when preference-based estimators can improve upon sample-only estimators.
Given preference-labeled observations $\{(X_i,Y_i, Z_i)\}_{i \in [n]}$, we define the stochastic preferences MLE (\ref{eq:spmle}) as 
\begin{align}
    \wh \theta^{\spe}_{n} \in \argmin_{\theta} L^{\spe}_{n}(\theta) \quad \text{ with } \nonumber  \\ 
     L^{\spe}_{n}\!(\theta) \!\eqdef \! L^{\so}_{n}(\theta) - \!\!\sum_{i \in [n]} \!\log \sigma (Z_i\ell_{\theta}(X_i,Y_i)) \: .  \label{eq:spmle}
\tag{SP MLE}
\end{align}
This objective extends \ref{eq:somle} by adding a preference-based term: a binary classification loss using the logistic function ($-\log \sigma(x)$).
When preferences are stochastic, this estimator corresponds to the MLE under a probabilistic preference model, justifying its name.
Under sufficient regularity, M-estimators achieve asymptotic normality, so our goal is to obtain lower asymptotic covariance for \ref{eq:spmle} than for
 \ref{eq:somle}.
In addition, we want to show that \ref{eq:spmle} reaches a lower asymptotic covariance for deterministic preferences than for stochastic preferences.

\subsection{Stochastic Preferences}
\label{ssec:stochastic}

Under stochastic feedback, we are given noisy preference observations $(X_i,Y_i, Z_i)_{i \in [n]} \sim q_{\theta^\star, h_{\sto}}^{\otimes n}$, where $h_{\sto}$ is defined in Equation \eqref{eq:hsto}. 
\ref{eq:spmle} is a specific instance of M-estimator. 
Under enough regularity~\citep[Chapter~5.5]{van2000asymptotic}, \ref{eq:spmle} is asymptotically normal, i.e.,
\[
     \sqrt{n}(\wh \theta^{\spe}_{n}  - \theta^\star ) \rightsquigarrow_{n \to + \infty} \cN(0_{k}, \cI(q_{\theta^\star, h_{\sto}})^{-1}) \: ,
\]
where $\cI(q_{\theta, h_{\sto}}) \eqdef \bE_{q_{\theta, h_{\sto}}}[-\nabla^2_{\theta} \log q_{\theta, h_{\sto}}]$ denotes the Fisher information matrix of $q_{\theta, h_{\sto}}$.
By the Cram\'er-Rao bound~\citep{rao1992information}, this variance is optimal among unbiased estimators that rely on stochastic preferences.
Lemma~\ref{lem:sp_mle_asymptotic_variance} compares its efficiency to the sample-only MLE.
\begin{lemma} \label{lem:sp_mle_asymptotic_variance}
    Let $\Delta^{\spe}_{\theta} \eqdef \bE_{p_{\theta^\star}^{\otimes 2}}[\sigma(\ell_{\theta}) \sigma(-\ell_{\theta}) \nabla_{\theta} \ell_{\theta}  \nabla_{\theta} \ell_{\theta}\transpose]$.
    Then, $\cI(q_{\theta^\star, h_{\sto}}) = \cI(p_{\theta^\star}^{\otimes 2}) + \Delta^{\spe}_{\theta^\star}$.
    The p.s.d. matrix $\Delta^{\spe}_{\theta^\star}$ is definite if $\bP_{p_{\theta^\star}^{\otimes 2}}(|\langle u, \nabla_{\theta^\star}  \ell_{\theta^\star} \rangle| > 0) > 0$ for all $u \in \cS_{k-1}$.
\end{lemma}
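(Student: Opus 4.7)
The statement is a standard Fisher-information decomposition for a correctly specified conditional model, together with a definiteness check. The plan is to split $\log q_{\theta, h_{\sto}}(x,y,z) = \log p_{\theta}^{\otimes 2}(x,y) + \log \sigma(z\,\ell_{\theta}(x,y))$, take the (negative) Hessian of each piece, and then take expectation under $q_{\theta^\star, h_{\sto}}$. The first piece contributes $\cI(p_{\theta^\star}^{\otimes 2})$ by definition, since $Z$ does not appear. For the second piece I would compute, using $\partial_u \log \sigma(u) = \sigma(-u)$, the two derivatives
\[
\nabla_\theta \log \sigma(z\,\ell_\theta) = z\,\sigma(-z\,\ell_\theta)\,\nabla_\theta \ell_\theta,
\]
\[
\nabla^2_\theta \log \sigma(z\,\ell_\theta) = -\sigma(z\,\ell_\theta)\sigma(-z\,\ell_\theta)\,\nabla_\theta \ell_\theta \nabla_\theta \ell_\theta\transpose + z\,\sigma(-z\,\ell_\theta)\,\nabla^2_\theta \ell_\theta,
\]
where I used $z^2 = 1$ and the identity $\sigma(z\,\ell)\sigma(-z\,\ell) = \sigma(\ell)\sigma(-\ell)$.

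\textbf{Key cancellation.} Taking the conditional expectation over $Z \mid (X,Y)$ under $h_{\sto}$, I would observe that $\bE[Z \mid X,Y] \cdot \sigma(-Z\,\ell_\theta)$ averages to $\sigma(\ell_\theta)\sigma(-\ell_\theta) - \sigma(-\ell_\theta)\sigma(\ell_\theta) = 0$, so the term involving the Hessian $\nabla^2_\theta \ell_\theta$ vanishes (this is the standard ``score has zero mean'' phenomenon under correct specification, which I expect to be the only conceptually nontrivial point). What remains after averaging out $Z$ and then over $(X,Y) \sim p_{\theta^\star}^{\otimes 2}$, applied at $\theta = \theta^\star$, is exactly $\Delta^{\spe}_{\theta^\star}$, yielding $\cI(q_{\theta^\star, h_{\sto}}) = \cI(p_{\theta^\star}^{\otimes 2}) + \Delta^{\spe}_{\theta^\star}$. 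Interchanging differentiation and integration is justified by the regularity conditions invoked in the section.

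\textbf{Definiteness.} For any $u \in \cS_{k-1}$,
\[
u\transpose \Delta^{\spe}_{\theta^\star} u = \bE_{p_{\theta^\star}^{\otimes 2}}\!\bigl[\sigma(\ell_{\theta^\star})\sigma(-\ell_{\theta^\star})\,\langle u,\nabla_{\theta^\star}\ell_{\theta^\star}\rangle^{2}\bigr].
\]
Since $\sigma(\cdot)\sigma(-\cdot) > 0$ on all of $\R$, the integrand is nonnegative and is strictly positive on the event $\{|\langle u, \nabla_{\theta^\star}\ell_{\theta^\star}\rangle| > 0\}$. Under the stated hypothesis this event has positive $p_{\theta^\star}^{\otimes 2}$-probability for every $u \in \cS_{k-1}$, hence $u\transpose \Delta^{\spe}_{\theta^\star} u > 0$, giving positive definiteness. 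The main obstacle, as noted, is the Hessian-term cancellation; once that is in place, the rest is a direct computation and a one-line nonnegativity argument.
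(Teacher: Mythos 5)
Your proposal is correct and follows essentially the same route as the paper: decompose $\log q_{\theta,h_{\sto}}$ into the sample term plus $\log\sigma(z\,\ell_\theta)$, compute the Hessian using $(\log\sigma)'(x)=\sigma(-x)$ and $\sigma(z\ell)\sigma(-z\ell)=\sigma(\ell)\sigma(-\ell)$, cancel the $\nabla^2_\theta\ell_\theta$ term via the conditional expectation $\bE[Z\,\sigma(-Z\ell_\theta)\mid X,Y]=0$ under $h_{\sto}$, and establish definiteness through the quadratic form $u\transpose\Delta^{\spe}_{\theta^\star}u=\bE[\sigma(\ell_{\theta^\star})\sigma(-\ell_{\theta^\star})\langle u,\nabla_{\theta^\star}\ell_{\theta^\star}\rangle^2]$. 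The only cosmetic issue is the phrasing "$\bE[Z\mid X,Y]\cdot\sigma(-Z\ell_\theta)$," which should read $\bE[Z\,\sigma(-Z\ell_\theta)\mid X,Y]$, but the computed cancellation is exactly the paper's.
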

Lemma~\ref{lem:sp_mle_asymptotic_variance} shows that $\cI(q_{\theta^\star, h_{\sto}}) \succeq \cI(p_{\theta^\star}^{\otimes 2})$ and exhibits a condition under which $\wh \theta^{\spe}_{n}$ is asymptotically better than $\wh \theta^{\so}_{n}$, meaning that incorporating preference data improves asymptotic efficiency.
The condition in Lemma~\ref{lem:sp_mle_asymptotic_variance} ensures that $\nabla_{\theta^\star} \ell_{\theta^\star}$ spans all directions with some probability, making the preference-based estimator asymptotically superior to the sample-only MLE.

For preferences based on the reward $r_{\theta} = \log p_{\theta}$, this condition holds for both Laplace and Gaussian distributions: $\Delta^{\spe}_{\theta^\star} = \frac{4}{b^2} \Delta^{\spe}_{\lap(0,1)}$ for $\cF_{\lap, b}$ (Appendix~\ref{app:laplace}), and $\Delta^{\spe}_{\theta^\star} = 2 \Sigma^{1/2} \Delta^{\spe}_{\cN(0_d,I_d)} \Sigma^{1/2} $ for $\cF_{\cN, \Sigma}$ (Appendix~\ref{app:gaussian}).

Thus, stochastic preferences can improve parameter estimation compared to sample-only estimators. 
However, non-asymptotic performance can differ, and in practice, the reduction in asymptotic variance may be small, as we investigate empirically in Section~\ref{sec:experiments}. Next, we examine whether M-estimators based on deterministic preferences can further improve upon their stochastic counterparts.

\subsection{Deterministic Preferences}
\label{ssec:deterministic}
We now consider the setting where true preferences are observed, meaning that the preference labels $Z_i$ are deterministic. 
We observe $(X_i,Y_i, Z_i)_{i \in [n]} \sim q_{\theta^\star, h_{\det}}^{\otimes [n]}$, where $h_{\det}$ is defined in Equation~\eqref{eq:hdet}.  
We use the same M-estimator as in the stochastic setting, $\wh \theta^{\spe}_{n}(\theta) \in \argmin_{\theta} L^{\spe}_{n}(\theta)$, but now with deterministic preferences. 
To distinguish this setting, we introduce the notation $\lle$ for the preference-based estimator under deterministic feedback.   

\paragraph{Consistency of $\lle$.}
Define the population-level objective:
$M(\theta) \eqdef \bE_{p_{\theta^\star}^{\otimes 2}}[\log q_{\theta, h_{\sto}}(X,Y,\sign(\ell_{\theta^\star}(X,Y)))]$.
Under enough regularity, $\wh \theta^{\lle}_{n}$  converges to a maximizer of $M(\theta)$~\citep[Chapter~5.2]{van2000asymptotic}.
However, unlike in the stochastic setting, $\theta^\star$ may not be a maximizer of $M$ since standard regularity conditions on $p_{\theta}$ and $\ell_{\theta}$ are insufficient. 
A sufficient condition for consistency is
\begin{equation} \label{eq:consistency_lle}
     \bE_{p_{\theta^\star}^{\otimes 2}}[\sign(\ell_{\theta^\star}) \sigma(-|\ell_{\theta^\star}|) \nabla_{\theta^\star} \ell_{\theta^\star} ] = 0_k \: ,
\end{equation}
which holds for $\cF_{\cN, \Sigma}$ (Appendix~\ref{app:gaussian}) and $\cF_{\lap, b}$ (Appendix~\ref{app:laplace}) when using reward $r_{\theta} = \log p_{\theta}$.

\paragraph{Asymptotic variance of $\lle$.}
If Equation~\eqref{eq:consistency_lle} holds, then under additional regularity conditions~\citep[Chapter~5.3]{van2000asymptotic} $\lle$ is asymptotically normal with covariance $V_{\theta^\star}^{\lle}$ given by the following lemma.
\begin{lemma}\label{lem:asymptotic_variance_lle}
    Let $H^{\lle}_{\theta^\star}  \eqdef  \bE_{p_{\theta^\star}^{\otimes 2}}\left[ u_{\theta^\star} \nabla_{\theta^\star}^2\ell_{\theta^\star} \right]$, $\Delta^{\lle}_{\theta^\star} \eqdef  \bE_{p_{\theta^\star}^{\otimes 2}}[(2\sigma(|\ell_{\theta^\star}|)-1) \sigma(-|\ell_{\theta^\star}|) \nabla_{\theta^\star} \ell_{\theta^\star}  \nabla_{\theta^\star} \ell_{\theta^\star}\transpose]$ and $R^{\lle}_{\theta^\star}  \eqdef \bE_{p_{\theta^\star}^{\otimes 2}}\left[ u_{\theta^\star} \left(  M_{\theta^\star}  +  M_{\theta^\star}\transpose  \right)  \right]$ where $u_{\theta^\star} \eqdef  \sign(\ell_{\theta^\star}) \sigma (-| \ell_{\theta^\star}| )$ and $M_{\theta^\star} \eqdef -\nabla_{\theta^\star}\log p^{\otimes 2}_{\theta^\star} (\nabla_{\theta^\star}\ell_{\theta^\star})\transpose$.
    Then, we have $V_{\theta^\star}^{\lle} \eqdef V_{1,\theta^\star}^{-1} V_{2,\theta^\star} V_{1,\theta^\star}^{-1}$ where $V_{1,\theta^\star} = \cI(q_{\theta^\star, h_{\sto}}) - H^{\lle}_{\theta^\star}$ and $V_{2,\theta^\star} = \cI(q_{\theta^\star, h_{\sto}})  - \Delta^{\lle}_{\theta^\star} - R^{\lle}_{\theta^\star}$.        
    If $\bP_{p_{\theta^\star}^{\otimes 2}}( |\ell_{\theta^\star} \langle u, \nabla_{\theta^\star}  \ell_{\theta^\star} \rangle| > 0) > 0$ for all $u \in \cS_{k-1}$, the p.s.d. matrix $\Delta^{\lle}_{\theta^\star}$ is definite.
\end{lemma}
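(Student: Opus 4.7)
The proof is an explicit computation of the two matrices in the standard $Z$-estimator sandwich. Define the per-sample score
$\psi(\theta; X, Y, Z) \eqdef -\nabla_{\theta}\log p_{\theta}^{\otimes 2}(X,Y) - \nabla_{\theta}\log\sigma(Z\ell_{\theta}(X,Y))$
and set $V_{1,\theta^\star} \eqdef \bE_{q_{\theta^\star, h_{\det}}}[\nabla_{\theta}\psi(\theta^\star)]$, $V_{2,\theta^\star} \eqdef \bE_{q_{\theta^\star, h_{\det}}}[\psi(\theta^\star)\psi(\theta^\star)\transpose]$. The M-estimator theorem invoked in the preamble (\citet[Chapter~5.3]{van2000asymptotic}), whose hypotheses the surrounding text assumes, yields $V_{\theta^\star}^{\lle} = V_{1,\theta^\star}^{-1} V_{2,\theta^\star} V_{1,\theta^\star}^{-1}$. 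The task thus reduces to matching $V_{1,\theta^\star}$ and $V_{2,\theta^\star}$ with the expressions in the statement, then verifying definiteness of $\Delta^{\lle}_{\theta^\star}$.

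\textbf{Computing $V_{1,\theta^\star}$.} The identity $\sigma'(x) = \sigma(x)\sigma(-x)$ gives $\nabla_\theta\log\sigma(Z\ell_\theta) = Z\sigma(-Z\ell_\theta)\nabla_\theta\ell_\theta$, so
$\psi(\theta;X,Y,Z) = -\nabla_\theta\log p_\theta^{\otimes 2}(X,Y) - Z\sigma(-Z\ell_\theta)\nabla_\theta\ell_\theta$.
Under $h_{\det}$ we have $Z=\sign(\ell_{\theta^\star})$, so at $\theta^\star$ the factor $Z\sigma(-Z\ell_{\theta^\star})$ collapses to $u_{\theta^\star}$. Differentiating $\psi$ once more in $\theta$, the product rule applied to the second summand and $Z^2 = 1$ give
\[
\nabla_\theta\psi = -\nabla_\theta^2\log p_\theta^{\otimes 2} - Z\sigma(-Z\ell_\theta)\nabla_\theta^2\ell_\theta + \sigma(Z\ell_\theta)\sigma(-Z\ell_\theta)\nabla_\theta\ell_\theta\nabla_\theta\ell_\theta\transpose.
\]
Evaluating at $\theta^\star$ and taking expectation, the three terms yield $\cI(p_{\theta^\star}^{\otimes 2})$, $-H^{\lle}_{\theta^\star}$, and $\Delta^{\spe}_{\theta^\star}$ respectively (for the last two, use $Z\ell_{\theta^\star}=|\ell_{\theta^\star}|$ on $\cG_0(\theta^\star)$). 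Applying Lemma~\ref{lem:sp_mle_asymptotic_variance} gives $V_{1,\theta^\star} = \cI(q_{\theta^\star, h_{\sto}}) - H^{\lle}_{\theta^\star}$.

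\textbf{Computing $V_{2,\theta^\star}$.} Expand $\psi(\theta^\star)\psi(\theta^\star)\transpose$ into four blocks. The $\nabla\log p_{\theta^\star}^{\otimes 2}\nabla\log p_{\theta^\star}^{\otimes 2 \transpose}$ block integrates to $\cI(p_{\theta^\star}^{\otimes 2})$ by the information equality. The $u_{\theta^\star}^2 \nabla\ell_{\theta^\star}\nabla\ell_{\theta^\star}\transpose$ block carries kernel $\sigma(-|\ell_{\theta^\star}|)^2$ on $\cG_0(\theta^\star)$ (and $0$ elsewhere). The two cross blocks sum, with $M_{\theta^\star} = -\nabla\log p_{\theta^\star}^{\otimes 2}(\nabla\ell_{\theta^\star})\transpose$, to $-u_{\theta^\star}(M_{\theta^\star} + M_{\theta^\star}\transpose)$, i.e., to $-R^{\lle}_{\theta^\star}$ in expectation. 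To match the lemma's form, one uses the pointwise identity
\[
\sigma(-|\ell|)^2 = \sigma(|\ell|)\sigma(-|\ell|) - (2\sigma(|\ell|)-1)\sigma(-|\ell|),
\]
which follows from $2\sigma(|\ell|)-1 = \sigma(|\ell|)-\sigma(-|\ell|)$. Substituting turns the pure-$\ell$ contribution into $\Delta^{\spe}_{\theta^\star} - \Delta^{\lle}_{\theta^\star}$, so $V_{2,\theta^\star} = \cI(p_{\theta^\star}^{\otimes 2}) + \Delta^{\spe}_{\theta^\star} - \Delta^{\lle}_{\theta^\star} - R^{\lle}_{\theta^\star} = \cI(q_{\theta^\star, h_{\sto}}) - \Delta^{\lle}_{\theta^\star} - R^{\lle}_{\theta^\star}$.

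\textbf{Definiteness and main difficulty.} For every $u \in \cS_{k-1}$, $u\transpose \Delta^{\lle}_{\theta^\star} u = \bE_{p_{\theta^\star}^{\otimes 2}}[(2\sigma(|\ell_{\theta^\star}|)-1)\sigma(-|\ell_{\theta^\star}|)\langle u, \nabla_{\theta^\star}\ell_{\theta^\star}\rangle^2]$. Factoring the kernel as $\sigma(-|\ell|)\bigl[\sigma(|\ell|)-\sigma(-|\ell|)\bigr]$ and using strict monotonicity of $\sigma$, the kernel is strictly positive precisely on $\{|\ell_{\theta^\star}|>0\}$, so the stated probabilistic assumption suffices. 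The only truly nontrivial step in this plan is spotting the algebraic identity that reconciles $\sigma(-|\ell|)^2$ with the form $(2\sigma(|\ell|)-1)\sigma(-|\ell|)$ used in the statement; everything else is bookkeeping. Note that the discontinuity of $Z$ in the data law is not an obstruction: conditional on $(X,Y,Z)$, the loss is smooth in $\theta$, so the classical $Z$-estimator machinery applies verbatim.
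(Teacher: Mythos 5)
Your proposal is correct and follows essentially the same route as the paper's proof: differentiate the per-sample log-likelihood with $z=\sign(\ell_{\theta^\star})$, take expectations to identify $V_{1,\theta^\star}=\cI(q_{\theta^\star,h_{\sto}})-H^{\lle}_{\theta^\star}$ and $V_{2,\theta^\star}=\cI(q_{\theta^\star,h_{\sto}})-\Delta^{\lle}_{\theta^\star}-R^{\lle}_{\theta^\star}$ via Lemma~\ref{lem:sp_mle_asymptotic_variance}, and conclude definiteness from the quadratic form. The only cosmetic difference is that you absorb the sigmoid algebra through the identity $\sigma(-|\ell|)^2=\sigma(|\ell|)\sigma(-|\ell|)-(2\sigma(|\ell|)-1)\sigma(-|\ell|)$, whereas the paper writes $\sigma(-|\ell|)^2=\sigma(-|\ell|)-\sigma(\ell)\sigma(-\ell)$ and passes through the intermediate matrix $M_{2,\theta^\star}$ with $\Delta^{\lle}_{\theta^\star}=2\Delta^{\spe}_{\theta^\star}-M_{2,\theta^\star}$ --- the same computation rearranged.
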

Equation~\eqref{eq:consistency_lle} and $V_{\theta^\star}^{\lle} \prec \cI(q_{\theta^\star, h_{\sto}})^{-1}$ depend on the geometry of $p_{\theta^\star}^{\otimes 2}$ and $\ell_{\theta^\star}$. 
We verify that these conditions hold for $\cF_{\cN, \Sigma}$ (Appendix~\ref{app:gaussian}) and $\cF_{\lap, b}$ (Appendix~\ref{app:laplace}) when using reward $r_{\theta} = \log p_{\theta}$.
For Laplace distribution, we have $H^{\lle}_{\theta^\star} = R^{\lle}_{\theta^\star} = 0$ and $\Delta^{\lle}_{\theta^\star} = \frac{4}{b^2} \Delta^{\lle}_{\lap(0,1)}$.
For Gaussian distributions, we have $H^{\lle}_{\theta^\star} = 0_{d \times d}$, $\Delta^{\lle}_{\theta^\star} = 2 \Sigma^{1/2} \Delta^{\lle}_{\cN(0_d,I_d)} \Sigma^{1/2} $ and $R^{\lle}_{\theta^\star} = 2 \Sigma^{1/2} R^{\lle}_{\cN(0_d,I_d)} \Sigma^{1/2} $ with $R^{\lle}_{\cN(0_d,I_d)}  \succeq 0_{d \times d}$.

Thus, deterministic preferences improve parameter estimation compared to stochastic preferences.

In conclusion, preference-based M-estimators provide asymptotic improvements in estimation efficiency. 
Next, we explore whether estimators beyond the M-estimation framework can achieve further gains, potentially exceeding the asymptotic normality limitations.

\section{Beyond M-estimators}
\label{sec:beyondMestimators}
While computationally efficient, the $\lle$ estimator does not fully leverage the constraints imposed by deterministic preferences.
Unlike in the stochastic setting, deterministic preferences provide separability: there exist parameters that classify training examples perfectly, including $\theta^\star$ itself. A key limitation of $\lle$ is that, like standard logistic regression, it minimizes a convex surrogate loss (negative log-likelihood). This approach can lead to misclassification of training examples.\footnote{For binary classification with separable data, logistic regression converges in direction toward a separating hyperplane.}
This limitation suggests an opportunity to directly minimize the $0$-$1$ loss\footnote{For non-separable data, the minimization of the $0$-$1$ classification loss can be NP-hard even for the simple class of linear classifiers, e.g.,~\citet{Feldman12}.}, potentially achieving faster rates of convergence.

\paragraph{$0$-$1$ loss minimization.}
Given $(X_i,Y_i, Z_i)_{i \in [n]} \sim q_{\theta^\star, h_{\det}}^{\otimes [n]}$, we consider the set $\cC_{n}$ of parameters that minimize the empirical $0-1$ loss, i.e.,
\begin{align} \label{eq:zero_one_loss}
    \cC_{n} \eqdef& \argmin_{\theta \in \Theta} \sum_{i \in [n]} \indi{ Z_i\ell_{\theta}(X_i,Y_i) < 0} \\ 
    =&\left\{ \theta \in \Theta \mid \: \forall i \in [n], \: Z_{i} \ell_{\theta}(X_i,Y_i) \ge 0 \right\} \: , \nonumber
\end{align}
which is non-empty as $\theta^\star \in \cC_{n}$.
Parameters $\theta \in \cC_{n}$ perfectly classify all training examples.
Any estimator $\wh \theta^{\anye}_{n} \in \cC_{n}$ is referred to as an arbitrary estimator ($\anye$).

Alternatively, we constrain MLE to this feasible set, defining the deterministic preferences MLE (\ref{eq:dple}), i.e.,
\begin{align}\tag{DP MLE}
    \wh \theta^{\dpe}_{n} \in \argmin \{ L^{\so}_{n}(\theta) \mid \theta \in \cC_{n} \}\:   \label{eq:dple}
\end{align}
if $\wh \theta^{\so}_{n} \notin \cC_{n} $, and $ \wh \theta^{\dpe}_{n} \eqdef \wh \theta^{\so}_{n}$ otherwise.
This estimator minimizes the negative log-likelihood of the samples while ensuring perfect preference classification.
For Gaussian with $r_{\theta} = \log p_{\theta}$, $\wh \theta^{\dpe}_{n} $ estimates $\theta^\star$ better than $\wh \theta^{\so}_{n}$ for all $n$, i.e., \ref{eq:dple} dominates \ref{eq:somle} statistically.
\begin{lemma}\label{lem:statistical_dominance_dp_over_so}
    For all $n \in \N$ and almost surely, we have,
    \begin{align*}
        \text{for } \cF_{\cN, \Sigma}, \quad&\|\wh \theta^{\dpe}_{n} - \theta^\star\|_{\Sigma} \le \|\wh \theta^{\so}_{n} - \theta^\star\|_{\Sigma} \: .
    \end{align*}
\end{lemma}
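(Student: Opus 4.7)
The plan is to recognize $\wh\theta^{\dpe}_n$ as the Hilbertian projection of $\wh\theta^{\so}_n$ onto a closed convex set containing $\theta^\star$, and then invoke the non-expansiveness of projections onto convex sets.

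First, I would specialize to $\cF_{\cN,\Sigma}$ with the log-probability reward. In the natural parameterization, the log-normalizer $-\tfrac{1}{2}\theta\transpose \Sigma \theta$ cancels in the difference of log-densities, so $\ell_\theta(x,y) = \theta\transpose(x-y)$ is linear in $\theta$. Consequently each constraint $Z_i \ell_\theta(X_i,Y_i) \ge 0$ carves out a closed half-space of $\R^k$, and the feasible set $\cC_n$ is the intersection of finitely many such half-spaces, hence a closed convex polyhedron (in fact a polyhedral cone). Since $Z_i = \sign(\ell_{\theta^\star}(X_i,Y_i))$ and $\ell_{\theta^\star}(X_i,Y_i) \ne 0$ almost surely, $\theta^\star$ satisfies each constraint strictly, and therefore $\theta^\star \in \cC_n$ almost surely.

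Second, I would rewrite the sample-only objective as a squared Hilbertian distance to $\wh\theta^{\so}_n$. A direct computation shows that $L^{\so}_n$ is a strictly convex quadratic in $\theta$ with Hessian $2n\Sigma$, and completing the square gives
\[
L^{\so}_n(\theta) \;=\; n\,\|\theta - \wh\theta^{\so}_n\|_\Sigma^2 \;+\; c_n,
\]
for some $c_n$ that does not depend on $\theta$. Thus minimizing $L^{\so}_n$ over $\cC_n$ coincides exactly with taking the metric projection $\Pi_{\cC_n}(\wh\theta^{\so}_n)$ with respect to the inner-product norm $\|\cdot\|_\Sigma$. Combined with the convention $\wh\theta^{\dpe}_n = \wh\theta^{\so}_n$ when $\wh\theta^{\so}_n \in \cC_n$ (in which case the projection is trivially $\wh\theta^{\so}_n$ itself), this identifies $\wh\theta^{\dpe}_n = \Pi_{\cC_n}(\wh\theta^{\so}_n)$ in all cases.

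Finally, I would invoke the textbook non-expansiveness of projection onto a closed convex set in a Hilbert space: for every $\theta \in \cC_n$,
\[
\|\Pi_{\cC_n}(\wh\theta^{\so}_n) - \theta\|_\Sigma \;\le\; \|\wh\theta^{\so}_n - \theta\|_\Sigma.
\]
Specializing to $\theta = \theta^\star \in \cC_n$ yields the claim almost surely. The only real obstacle is a mild technical one: ensuring the ambient parameter set $\Theta$ is convex so that intersecting it with the half-spaces preserves convexity of $\cC_n$; for the natural parameterization of $\cF_{\cN,\Sigma}$, taking $\Theta = \R^k$ is the natural choice, and the argument then goes through verbatim. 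Everything else reduces to routine Gaussian calculations and a standard property of projections.
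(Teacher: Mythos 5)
Your proposal is correct and follows essentially the same route as the paper: identify $\wh\theta^{\dpe}_n$ as the $\|\cdot\|_\Sigma$-projection of $\wh\theta^{\so}_n$ onto the convex set $\cC_n$ containing $\theta^\star$, then apply the standard projection inequality. (Minor nit: for $\cF_{\cN,\Sigma}$ one has $\ell_\theta(x,y)=\langle x-y,\theta-\Sigma^{-1}(x+y)/2\rangle$, which is affine rather than linear in $\theta$, so $\cC_n$ is a polyhedron but not a cone; this does not affect the convexity argument.)
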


For stochastic preferences, minimizing the $0-1$ loss is generally  NP-hard, requiring a convex surrogate like the logistic function.
However, for deterministic preferences, computing $\cC_{n}$ is more tractable.
If $\theta \mapsto \ell_{\theta}$ is affine, then $\cC_n$ is a convex polytope, defined by at most $n$ half-space constraints.
For Gaussian-based preferences, i.e., $r_{\theta} = \log p_{\theta}$ and $\cF_{\cN, \Sigma}$, we have $Z_{i} \ell_{\theta}(X_i,Y_i) \ge 0$ if and only if $ Z_i\langle X_i-Y_i, \theta -  \Sigma^{-1}(X_i+Y_i)/2 \rangle \ge 0$.

\paragraph{Consistency of $0-1$ loss minimization.}
Define the disagreement probability between $\theta$ and $\theta^\star$ as $m(\theta) \eqdef \bP_{p_{\theta^\star}^{\otimes 2}} \left( \cD(\theta^\star, \theta) \right)$ where $\cD(\theta^\star, \theta) \eqdef \{(x,y) \in \cX^2 \mid \ell_{\theta^\star}(x,y)  \ell_{\theta}(x,y)  < 0 \}$ is the set of observations where $\theta$ and $\theta^\star$ assign informative yet opposite preferences.
 
Under enough regularity~\citep[Chapter~5.2]{van2000asymptotic}, $\wh \theta^{\anye}_{n}$ and $\wh \theta^{\dpe}_{n}$ converge in $\cC(\theta^\star) \eqdef \{ \theta \in \Theta  \mid m(\theta) = 0 \}$, which is the non-empty set of minimizers of $m(\theta)$ as $m(\theta) \ge 0 = m(\theta^\star)$. 
We note the set $\cC(\theta^\star)$ contains $\theta^\star$, but possibly others. 
To ensure consistency ($\wh \theta^{\anye}_{n},\wh \theta^{\dpe}_{n} \to \theta^\star$), we impose the following identifiability assumption that guarantees $\cC(\theta^\star) = \{\theta^\star\}$.
\begin{assumption}[Identifiability] \label{ass:identifiability}
    For all $ \theta \ne \theta^\star$, $m(\theta) > 0$.
\end{assumption}
When $r_{\theta} = \log p_{\theta}$, it holds for both Gaussian ($\cF_{\cN, \Sigma}$) (Appendix~\ref{app:gaussian}) and Laplace ($\cF_{\lap, b}$) (Appendix~\ref{app:laplace}) cases.  

\paragraph{Fast estimation rate.}
Once consistency is established, the next goal is to analyze the convergence rate of the estimation errors $\|\wh \theta^{\anye}_{n} - \theta^\star\|$ and $\|\wh \theta^{\dpe}_{n} - \theta^\star\|$.
Since these are not M-estimators, 
they are not necessarily limited to the typical parametric rate $\Omega(1/\sqrt{n})$.

Theorem~\ref{thm:upper_bound_Gaussian_Laplace} states our main result for Laplace and Gaussian distributions when using log-probability rewards, i.e., a high-probability accelerated rate in $\cO(1/n)$.
\begin{theorem} \label{thm:upper_bound_Gaussian_Laplace}
    Let $\delta \in (0,1)$.
    For $\cF_{\lap,1}$ and $\cF_{\cN,1}$, we have, for all $n \ge \cO(\log(1/\delta))$, with probability $1-\delta$,
    \begin{align*}
        &\forall \wh \theta_n \in \cC_n , \quad n|\wh \theta_n - \theta^\star| = \cO\left( \log(1/\delta) \right)    \: .
    \end{align*}
    For $\cF_{\cN, \Sigma}$ with $d>1$, there exists positive $A_{d} =_{d \to + \infty} \cO(\sqrt{d})$ such that, for all $n \ge \wt \cO(\log(1/\delta))$, with probability $1-\delta$,
    \begin{align*}
        &\forall \wh \theta_n \in \cC_n , \quad n \|\wh \theta_n - \theta^\star\|_{\Sigma} \le \cO\left(A_{d} \log(1/\delta) \log n\right) \: .
    \end{align*}
\end{theorem}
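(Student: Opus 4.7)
The proof splits along the dimension $d$: the univariate settings admit an explicit description of $\cC_n$ as an interval, whereas the multivariate Gaussian case requires a covering argument on the unit sphere.

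For $\cF_{\cN,1}$, the feasible set is $\cC_n = [\max\{S_i : S_i < \theta^\star\},\, \min\{S_i : S_i > \theta^\star\}]$ with $S_i = (X_i+Y_i)/2 \sim \cN(\theta^\star, 1/2)$, as highlighted in the introduction. Since the density of $S_i$ is bounded below by a positive constant $c$ in a neighbourhood of $\theta^\star$, the probability that no $S_i$ lies in $(\theta^\star, \theta^\star + t)$ is at most $(1-ct)^n \le e^{-cnt}$ for small $t$; taking $t = c^{-1}\log(4/\delta)/n$ and a union bound over the two sides of $\theta^\star$ yields the univariate Gaussian claim. For $\cF_{\lap,1}$ I would first observe that for $\theta$ in a neighbourhood of $\theta^\star$ the parameter-dependent constraints come only from pairs $(X_i, Y_i)$ straddling $\theta^\star$; for such pairs $\ell_\theta(X_i,Y_i) = X_i + Y_i - 2\theta$ and the constraint reduces to $\sign(\theta - S_i) = \sign(\theta^\star - S_i)$ exactly as in the Gaussian case. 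A direct integration of the Laplace density shows $\bP(X_i < \theta^\star < Y_i,\, S_i \in (\theta^\star, \theta^\star + t)) = \Theta(t)$ for small $t$, and the same order-statistic argument applies.

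For $\cF_{\cN,\Sigma}$ with $d > 1$, the plan is to change variables via $\tilde v = \Sigma^{1/2}(\theta - \theta^\star)$, which turns $\cC_n - \theta^\star$ into the convex polytope $P = \{\tilde v \in \R^d : \tilde G_i^\top \tilde v \le c_i,\, i \in [n]\}$, where $(\tilde G_i, H_i)$ are two independent standard Gaussians in $\R^d$ after an appropriate sign flip (so that $\tilde G_i^\top H_i \ge 0$), and $c_i = \tilde G_i^\top H_i/\sqrt{2} \ge 0$. Bounding $\|\wh\theta_n - \theta^\star\|_\Sigma$ then amounts to bounding the Euclidean radius $R = \sup_{u \in \cS_{d-1}} r(u)$, with $r(u) := \min_{i :\, \tilde G_i^\top u > 0} c_i/\tilde G_i^\top u$ the extent of $P$ from the origin in direction $u$. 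For a fixed direction $u$, decomposing $\tilde G_i$ along $u$ and its orthogonal complement and using $\|\tilde G_i\| \approx \sqrt{d}$ shows that the density at $0^+$ of $c_i/\tilde G_i^\top u$ is of order $1/\sqrt{d}$; hence $\bP(r(u) > t) \le \exp(-cnt/\sqrt{d})$, giving $r(u) = \cO(\sqrt{d}\log(1/\delta)/n)$ with probability $1 - \delta$ per direction. I would then take an $\epsilon$-net of $\cS_{d-1}$ of cardinality at most $(3/\epsilon)^d$, union-bound the per-direction tail over the net, and lift to arbitrary $u$ via the support function $h_P(u) = \sup_{\tilde v \in P}\langle u, \tilde v\rangle$, which is $R$-Lipschitz on the sphere and satisfies $\sup_u h_P(u) = R$; choosing $\epsilon$ polynomial in $1/n$ gives the extra $\log n$ factor.

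The \emph{main obstacle} is this lifting step. The map $u \mapsto r(u)$ is not Lipschitz --- it blows up as $\tilde G_i^\top u \to 0$ --- so the tail control at net points does not transfer directly to arbitrary $u$. Switching to $h_P$ restores Lipschitz continuity but only satisfies $h_P(u') \ge r(u')$, so one must separately bound $h_P$ from above at the net points, either via an LP-duality argument on $h_P(u') = \inf\{\lambda^\top c : \lambda \ge 0,\, \sum_i \lambda_i \tilde G_i = u'\}$ or via a vertex-wise analysis of $P$. Doing this while keeping the $d$-dependence at the level of $\sqrt{d}$ is what pins down the stated $\cO(\sqrt{d}\log n\,\log(1/\delta)/n)$ rate.
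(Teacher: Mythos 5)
Your treatment of the two univariate cases is correct and is essentially the paper's argument made explicit: the paper proves a general bound (Theorem~\ref{thm:upper_bound}) via the directional deviation $R_{n,u}$ and the minimum-of-positive-random-variables argument of Lemma~\ref{lem:deviation_direction_proba}, then verifies in Appendices~\ref{app:gaussian} and~\ref{app:laplace} that the c.d.f.\ $F_{\theta^\star,u}$ has a positive derivative at $0$ --- which for $\cF_{\cN,1}$ and $\cF_{\lap,1}$ reduces exactly to your computation that the midpoint $S_i$ of an informative pair has density bounded below near $\theta^\star$. Your remark that non-straddling Laplace pairs can be discarded is the content of Assumption~\ref{ass:linearization} there: dropping constraints only enlarges the feasible set, so it is harmless for an upper bound.

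The multivariate case, however, has a genuine gap, and it is precisely the one you flag yourself. Your per-direction tail $\bP(r(u)>t)\le\exp(-cnt/\sqrt{d})$ matches the paper's $F_{\theta^\star,u}'(0)=\cO(1/\sqrt{d})$, but the plan of covering the sphere and lifting from net directions to arbitrary directions does not go through as sketched: $u\mapsto r(u)$ is not Lipschitz, and upper-bounding the support function $h_P$ at net points is essentially the original problem in disguise, so the proposal stops short of a proof. The paper circumvents this entirely by covering the \emph{parameter set} $\Theta$ (bounded by $B_{\Theta}$) with a $\gamma$-net $\{\theta_j\}$ rather than covering the sphere of directions. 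Any $\theta\in\wt\cC_n$ is within $\gamma$ of some net point, which costs only an additive $\gamma$; and a net point $\theta_j\in\wt\cC_n$ lies on the ray from $\theta^\star$ in direction $u_j=(\theta_j-\theta^\star)/\|\theta_j-\theta^\star\|$, so $\|\theta_j-\theta^\star\|\le R_{n,u_j}$ and the directional tail bound is only ever invoked at the $N(\gamma)$ directions determined by the net --- no continuity of $u\mapsto R_{n,u}$ is required. Choosing $\gamma$ polynomial in $1/n$ makes $\log N(\gamma)\lesssim d\log n$, which is exactly where the $\log n$ factor in the statement comes from. To salvage your sphere-covering route you would need a quantitative anti-spikiness statement for the random polytope $P$ between net directions, which you do not supply; replacing the sphere net by the paper's net over $\Theta$ removes the need for it.
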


Theorem~\ref{thm:upper_bound_Gaussian_Laplace} is a direct corollary of our main result, showing that $\max_{\theta \in \cC_n}\|\theta - \theta^\star\| = \cO(1/n)$ (see Theorem~\ref{thm:upper_bound} below).
It directly guarantees faster convergence rates for both $\wh \theta^{\anye}_{n}$ and $\wh \theta^{\dpe}_{n}$.
Theorem~\ref{thm:upper_bound} holds under general geometric conditions on $p_{\theta}$ and $\ell_{\theta}$ that we introduce with intuitions, while sketching the proof in Section~\ref{ssec:upper_bound}. 

\paragraph{Negative examples.}
Assumption~\ref{ass:identifiability} is restrictive both on $\ell_{\theta}$ and $p_{\theta}$, even when considering $r_{\theta} = \log p_{\theta}$.
For example, when all the distributions in $\cF$ agree on their preferences, $\sign(\ell_{\theta}(x,y))$ is independent of $\theta$.
Therefore, we have $m(\theta) = 0$ for all $\theta \ne \theta^\star$, since $\ell_{\theta^\star}(x,y)  \ell_{\theta}(x,y) \ge 0$.
Such cases include scenarios where $p_{\theta}(x)$ is a monotonic function, e.g., the exponential distribution and the Pareto distribution with a known location, as well as the Laplace distribution with a known location.
This motivates later assumptions on the directionality of $\nabla_{\theta^\star}\ell_{\theta^\star}$ for observed samples.

\paragraph{Link to iterative human preference alignment.}
Many human preference alignment methods build on the Bradley-Terry model for preference, based on rewards. 
Direct alignment algorithms use variants of the log-likelihood to define the implicit reward of a policy~\citep{rafailov2023direct}. 
Choosing $\ell_{\theta}(x,y) = \log p_{\theta}(x) - \log p_{\theta}(y)$ coincides with the optimal policy for maximum entropy RL~\citep{swamy2025all}. 
When leveraging offline preference data, the assumption $(X,Y) \sim p_{\theta^\star}^{\otimes 2}$ is unrealistic, as $\ell_{\theta^\star}$ is collected from a fixed data set of pairs of observations. 
However, ``online'' preference data has become a popular paradigm in the training of recent LLMs.
Those iterative alignment procedures rely on the preference data from an earlier model~\citep{dubey2024llama}. 
At stage $N$, the model $p_{\theta_N}$ is trained based on the preference data for generations by the previous model, i.e., $(X,Y) \sim p_{\theta_{N-1}}^{\otimes 2}$. 
Under the realizability assumption and without mode collapse, this self-refinement paradigm should converge towards the true model $p_{\theta^\star}$. 
Our setting characterizes the limiting behavior of this iterative process, i.e., preference based on $\ell_{\theta^\star}$ for observations from $p_{\theta^\star}$.
Nonetheless, we do not claim the direct applicability of \ref{eq:dple} for realistic LLM training.

\subsection{Upper Bound on the Estimation Error}
\label{ssec:upper_bound}
We establish a high-probability upper bound on the estimation error  $\max_{\theta \in \cC_n}\|\theta - \theta^\star\|$ in the general case. 
This requires grasping the geometry of $\cC_{n}$ relative to $\theta^\star$.

\paragraph{Linearized feasibility set.}
Since $\cC_{n}$ is defined by nonlinear preference constraint, analyzing its geometry is challenging, and we thus consider a linearized approximation of it. We define the linearized constraint set as 
\[
    \wt \cC_{n} \eqdef \{ \theta \in \Theta \mid \forall i \in [n], \: (X_i, Y_i) \notin \wt \cD(\theta^\star, \theta) \} \: ,
\]
where $\wt \cD(\theta^\star, \theta) \eqdef \{(x,y) \in \cX^2 \mid \ell_{\theta^\star}(x,y)^2 + \ell_{\theta^\star}(x,y)\langle \theta - \theta^\star, \nabla \ell_{\theta^\star}(x,y)\rangle  < 0 \}$. 
This set replaces $\ell_{\theta}$ with its first-order Taylor expansion around $\theta^\star$, neglecting higher-order terms. 
A key assumption is that the true constraints are at least as strong as the linearized ones.
This ensures $\cC_{n} \subseteq \wt \cC_{n}$, allowing us to control $\cC_{n}$  via $\wt \cC_{n}$.
\begin{assumption}[Linearization validity] \label{ass:linearization}  
    For all $\theta \ne \theta^\star$, $\wt \cD(\theta^\star, \theta) \subseteq \cD(\theta^\star, \theta)$.
\end{assumption}

\paragraph{Directional analysis and informative constraints.} 
To quantify the geometry of $\wt \cC_{n}$ relative to $\theta^\star$, we analyze deviations along directions $u \in \cS_{k-1}$. Define the set of informative samples along direction $u$:
\[
    \cG_1(\theta^\star,u) \eqdef \{(x,y)  \mid  \ell_{\theta^\star} (x,y) \langle u, \nabla_{\theta^\star} \ell_{\theta^\star} (x,y) \rangle < 0\} \: .
\] 
This set contains observations whose preferences give information along the direction $u$. 
Assuming that preferences are informative along all directions, we prevent degenerate cases where some directions lack preference information.
\begin{assumption}[Informative Preferences] \label{ass:informative_direction}    
    For all $u \in \cS_{k-1}$, $\bP_{p_{\theta^\star}^{\otimes 2}}(\cG_1(\theta^\star,u)) > 0$.
\end{assumption}

\paragraph{Deviation bound via minimum informative sample.}
Define $R_{n,u}$ as the maximal deviation from $\theta^\star$ within $\wt \cC_n$ along the direction $u$, i.e.,
\[
    R_{n,u} \eqdef \max \{ \epsilon \ge 0 \mid  \theta^\star + \epsilon u \in \wt \cC_n  \} \: .
\] 
We define the scaling factor 
\[
   \forall (x,y) \in \cG_1(\theta^\star,u), \: V_{\theta^\star,u}(x,y) \eqdef \frac{\ell_{\theta^\star}(x,y)}{-\langle u , \nabla_{\theta^\star} \ell_{\theta^\star} (x,y) \rangle} \: .
\]
The value $V_{\theta^\star,u}(X_i,Y_i)$ quantifies the amount of information in the preference between $X_i$ and $Y_i$ to discriminate $\theta^\star$ from other parameters on the half-line directed by $u$. 
The lower $V_{\theta^\star,u}(X_i,Y_i)$ is, the more discriminative is the preference between $X_i$ and $Y_i$. 
Since $(x,y) \in \cG_1(\theta^\star,u) \setminus \wt \cD(\theta^\star, \theta^\star + \epsilon u)$ if and only if $V_{\theta^\star,u}(x,y) \ge \epsilon$, we obtain
\[
    R_{n,u} \le \min_{i \in [n]} \{ V_{\theta^\star,u}(X_i,Y_i) \mid  (X_i,Y_i) \in  \cG_1(\theta^\star,u)\} \: .
\]
Therefore, the maximal deviation $R_{n,u}$ is upper bounded by the minimum of positive random variables.
It remains to upper bound the resulting value of this minimum with high probability and conclude provided some regularities hold, e.g., positive density at zero.
By analyzing the distribution of $V_{\theta^\star,u}$, we derive the following probabilistic bound. 
\begin{lemma} \label{lem:deviation_direction_proba}
    Suppose Assumption~\ref{ass:informative_direction} hold.
    For all $u \in \cS_{k-1}$, with probability $1-\delta$, 
    \[
        R_{n, u} \le F_{\theta^\star, u}^{-1} ( \min \{1, \log(1/\delta) / n  \} )  \: ,
    \]
    with $F_{\theta^\star, u}(\epsilon) \eqdef \bP_{p_{\theta^\star}^{\otimes 2}}(V_{\theta^\star,u} \in (0,\epsilon])$ c.d.f. of $V_{\theta^\star,u}$.
\end{lemma}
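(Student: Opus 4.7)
The plan is to combine the deterministic bound on $R_{n,u}$ stated just above the lemma with the i.i.d.\ structure of the sample pairs, then invert the resulting tail. From
\[
    R_{n,u} \le \min_{i \in [n]} \{ V_{\theta^\star,u}(X_i,Y_i) \mid (X_i,Y_i) \in \cG_1(\theta^\star,u) \}
\]
(with the convention that the minimum over an empty set exceeds any fixed $\epsilon$), the event $\{R_{n,u} > \epsilon\}$ implies that no pair $(X_i,Y_i)$ falls in the set $A_\epsilon \eqdef \{(x,y) \in \cG_1(\theta^\star,u) : V_{\theta^\star,u}(x,y) \le \epsilon\}$. Since $V_{\theta^\star,u}$ is strictly positive on $\cG_1(\theta^\star,u)$, we have $\bP_{p_{\theta^\star}^{\otimes 2}}(A_\epsilon) = \bP_{p_{\theta^\star}^{\otimes 2}}(V_{\theta^\star,u} \in (0,\epsilon]) = F_{\theta^\star,u}(\epsilon)$ by the very definition of $F_{\theta^\star,u}$.

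Second, I would turn this into a tail bound using independence and $1-x \le e^{-x}$:
\[
    \bP(R_{n,u} > \epsilon) \le \bP(\forall i \in [n],\: (X_i,Y_i) \notin A_\epsilon) = (1-F_{\theta^\star,u}(\epsilon))^n \le \exp(-n F_{\theta^\star,u}(\epsilon)).
\]
Inverting this in $\epsilon$ gives the statement: choosing the smallest $\epsilon \ge 0$ with $F_{\theta^\star,u}(\epsilon) \ge \log(1/\delta)/n$, i.e. $\epsilon = F_{\theta^\star,u}^{-1}(\log(1/\delta)/n)$ via the generalized quantile $F_{\theta^\star,u}^{-1}(p) \eqdef \inf\{\epsilon \ge 0 : F_{\theta^\star,u}(\epsilon) \ge p\}$, yields $\bP(R_{n,u} > \epsilon) \le \delta$ as required. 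When $\log(1/\delta)/n > 1$, we cap at $1$ in the argument of $F_{\theta^\star,u}^{-1}$, which is the origin of the $\min\{1, \log(1/\delta)/n\}$ in the statement; in that small-$n$ regime the bound degenerates to (essentially) the essential supremum of $V_{\theta^\star,u}$ and is vacuous, as expected.

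The main delicate point rather than obstacle is that $F_{\theta^\star,u}$ need not be continuous or strictly increasing: Assumption~\ref{ass:informative_direction} only guarantees $\lim_{\epsilon \to \infty} F_{\theta^\star,u}(\epsilon) > 0$, not positivity of $F_{\theta^\star,u}$ near $0$. This is handled cleanly by the generalized quantile definition together with the right-continuity of c.d.f.s, so that $F_{\theta^\star,u}(F_{\theta^\star,u}^{-1}(p)) \ge p$ whenever $p \le 1$. Note that Lemma~\ref{lem:deviation_direction_proba} itself needs no further regularity; additional structure on $F_{\theta^\star,u}$ near $0$ (such as a positive density) is only required downstream to translate this quantile bound into the $\cO(1/n)$ rate of Theorem~\ref{thm:upper_bound_Gaussian_Laplace}.
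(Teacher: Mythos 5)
Your proof is correct and follows essentially the same route as the paper: both reduce the event $\{R_{n,u}>\epsilon\}$ to "no informative sample with $V_{\theta^\star,u}\le\epsilon$ is observed," obtain the tail bound $(1-F_{\theta^\star,u}(\epsilon))^n\le e^{-nF_{\theta^\star,u}(\epsilon)}$, and invert at $\min\{1,\log(1/\delta)/n\}$. The only difference is cosmetic: the paper conditions on the binomial count of samples landing in $\cG_1(\theta^\star,u)$ and uses the truncated c.d.f.\ before averaging, whereas you compute $\bP(A_\epsilon)=F_{\theta^\star,u}(\epsilon)$ directly, reaching the identical bound in one step; your explicit treatment of the generalized quantile is a welcome clarification rather than a deviation.
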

Since $\max_{\theta \in \cC_n}\|\theta - \theta^\star\| \le \max_{ u \in \cS_{k-1}} R_{n,u}$, Lemma~\ref{lem:deviation_direction_proba} shows that the estimation error can be controlled by the behavior of $F_{\theta^\star, u}^{-1}$ around zero, where $F_{\theta^\star, u}^{-1}(0) = 0$.

\paragraph{Regularity assumption.}
To control $F_{\theta^\star, u}^{-1}$ near zero, the density $F_{\theta^\star, u}'$ should be positive near zero, and we assume control on $(F_{\theta^\star, u}^{-1})''$.
\begin{assumption}[Positive density at zero and regularity of inverse c.d.f.] \label{ass:cdf_lower_bound}
     For all $u \in \cS_{k-1}$, $F_{\theta^\star, u}'(0) \in (0,+\infty)$ and there exists $(x_{\theta^\star, u}, M_{\theta^\star, u}) \in (0,1) \times \R_+$ such that  $\sup_{x \in [0,x_{\theta^\star, u}]} |(F_{\theta^\star, u}^{-1})'' (x)| \le M_{\theta^\star, u}$.
\end{assumption}
Using this assumption and $(F_{\theta^\star, u}^{-1})'(0)= 1/F_{\theta^\star, u}'(0)$, the first-order Taylor expansion with remainder yields 
\[
     \forall x \in [0,x_{\theta^\star, u}], \: | F_{\theta^\star, u}^{-1}(x) - x/F_{\theta^\star, u}'(0) | \le M_{\theta^\star, u} x^2/2 \: .
\]
This argument leads to our main theorem, directly for $k=1$ and using a covering argument for $k>1$.
\begin{theorem} \label{thm:upper_bound} 
    Suppose Assumptions~\ref{ass:identifiability},~\ref{ass:linearization},~\ref{ass:informative_direction} and~\ref{ass:cdf_lower_bound} hold.
    Let $\delta \in (0,1)$.
    Let $\gamma > 0$ and $N(\gamma)$ be the $\gamma$-covering number of $\Theta$ for the norm $\|\cdot\|$.
    Let $A_{\theta^\star}^{-1} = \min_{u \in \cS_{k-1}}  F_{\theta^\star, u}'(0) $, $B_{\theta^\star}^{-1} = \min_{u \in \cS_{k-1}} x_{\theta^\star,u}$ and $C_{\theta^\star}  = \max_{u \in \cS_{k-1}}  M_{\theta^\star, u}/2$.
    When $k=1$, for all $n \ge B_{\theta^\star} \log (2/\delta)$,  
    \[
         \max_{\theta \in \cC_n}\|\theta - \theta^\star\| \le \frac{A_{\theta^\star}}{n}  \log(2/\delta)  +  \frac{C_{\theta^\star}}{n^2} \log(2/\delta)^2 \: ,
    \]
    with probability $1-\delta$.
    When $k>1$, for all $n \ge B_{\theta^\star} \log (N(\gamma)/\delta)$, with probability $1-\delta$, 
    \[
         \max_{\theta \in \cC_n}\|\theta - \theta^\star\| \le \gamma  +  \frac{A_{\theta^\star}}{n}  \log \frac{N(\gamma)}{\delta}  + \frac{C_{\theta^\star}}{n^2} \log\left(\frac{N(\gamma)}{\delta}  \right)^2 \: .
    \]
\end{theorem}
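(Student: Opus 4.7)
The plan is to compose the three ingredients already developed in Section~\ref{ssec:upper_bound}---the linearized feasibility set $\wt\cC_n$, the directional radii $R_{n,u}$, and the inverse c.d.f.\ bound of Lemma~\ref{lem:deviation_direction_proba}---and to sharpen the latter by turning Assumption~\ref{ass:cdf_lower_bound} into an explicit polynomial bound on $F_{\theta^\star,u}^{-1}$ via Taylor expansion.

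First, I would reduce the claim to a directional statement. By Assumption~\ref{ass:linearization}, $\cC_n \subseteq \wt\cC_n$, and for any $\theta \in \wt\cC_n$ with $\theta \ne \theta^\star$, writing $\theta = \theta^\star + \|\theta-\theta^\star\|\, u$ with $u \in \cS_{k-1}$ yields $\|\theta-\theta^\star\| \le R_{n,u}$; hence $\max_{\theta \in \cC_n} \|\theta - \theta^\star\| \le \sup_{u \in \cS_{k-1}} R_{n,u}$. Next I would convert Assumption~\ref{ass:cdf_lower_bound} into a quadratic upper bound on the inverse c.d.f.: because $F_{\theta^\star,u}^{-1}(0)=0$, $(F_{\theta^\star,u}^{-1})'(0) = 1/F_{\theta^\star,u}'(0) \le A_{\theta^\star}$, and $|(F_{\theta^\star,u}^{-1})''| \le 2C_{\theta^\star}$ on $[0, x_{\theta^\star,u}]$, Taylor's formula with integral remainder gives, for every $u \in \cS_{k-1}$ and every $x \in [0, 1/B_{\theta^\star}]$,
\[
F_{\theta^\star,u}^{-1}(x) \;\le\; A_{\theta^\star}\, x + C_{\theta^\star}\, x^2.
\]

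For $k=1$, the sphere $\cS_0=\{\pm 1\}$ reduces to two points, so one applies Lemma~\ref{lem:deviation_direction_proba} at confidence $\delta/2$ for each of the two directions and union-bounds. The hypothesis $n \ge B_{\theta^\star}\log(2/\delta)$ guarantees $\log(2/\delta)/n \le 1/B_{\theta^\star} \le x_{\theta^\star,u}$, so the Taylor bound is valid at $x=\log(2/\delta)/n$; substituting into the directional reduction yields the stated one-dimensional inequality. For $k>1$, I would introduce a $\gamma$-cover $\{\theta_j\}_{j \le N(\gamma)}$ of $\Theta$, extract the directions $u_j = (\theta_j - \theta^\star)/\|\theta_j - \theta^\star\|$ from those cover points with $\theta_j \ne \theta^\star$, apply Lemma~\ref{lem:deviation_direction_proba} at confidence $\delta/N(\gamma)$ to each $u_j$, and union-bound. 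For a generic $\theta \in \cC_n$, approximating $\theta$ by its nearest cover point produces the additive $\gamma$ slack that appears in the statement; combining this with the Taylor inequality under $n \ge B_{\theta^\star}\log(N(\gamma)/\delta)$ then delivers the multidimensional bound.

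The main technical hurdle is this last passage from the finite cover to a uniform control over $\cS_{k-1}$ in the case $k>1$. The mapping $u \mapsto R_{n,u}$ is not naturally Lipschitz---the denominator $\langle u, \nabla_{\theta^\star}\ell_{\theta^\star}\rangle$ appearing in $V_{\theta^\star,u}$ can vanish along certain directions---so some care is required to show that the discretization error enters only as an additive $\gamma$ rather than a multiplicative blow-up. Once this bookkeeping is settled, the remainder of the argument is just Taylor remainders and a union bound.
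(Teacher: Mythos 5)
Your proposal is correct and takes essentially the same route as the paper: reduce $\cC_n$ to $\wt\cC_n$ via Assumption~\ref{ass:linearization}, bound directional deviations with Lemma~\ref{lem:deviation_direction_proba}, convert Assumption~\ref{ass:cdf_lower_bound} into the quadratic bound $F_{\theta^\star,u}^{-1}(x)\le A_{\theta^\star}x + C_{\theta^\star}x^2$ by Taylor expansion, and union-bound over $\cS_{0}$ when $k=1$ or over the directions induced by a $\gamma$-cover of $\Theta$ when $k>1$. The hurdle you flag about the non-Lipschitzness of $u\mapsto R_{n,u}$ does not actually arise in the paper's version of the covering step: since the cover is of $\Theta$ rather than of the sphere, each cover point $\theta_j$ lying in $\wt\cC_n$ is controlled exactly along its own direction $u_j$ (giving $\|\theta_j-\theta^\star\|\le R_{n,u_j}$ with no comparison between nearby directions), and the discretization enters only through the additive $\gamma$ from the triangle inequality.
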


When $k > 1$, the choice of the optimal parameter $\gamma$ depends on the norm $\|\cdot\|$.
    Since $\Theta$ is bounded by $B_{\Theta}$, $N(\gamma)$ is upper bounded by the covering of the ball having diameter $B_{\Theta}$.
    As an example, let $N_{2}(\gamma)$ be the $\epsilon$-covering number of the unit ball in $\R^k$ for the Euclidean norm.
    Then, it is known that $\log N_{2}(\epsilon) \approx \log(\epsilon^2 k) / \epsilon^2$ if $\epsilon \gtrsim 1/\sqrt{k}$ and $\log N_{2}(\epsilon) \approx k \log \frac{1}{\epsilon^2 k}$ if $\epsilon \lesssim 1/\sqrt{k}$.\footnote{E.g., using Gilbert-Varshamov for the lower bound~\citep{Gilbert52} and Maurey’s empirical method for the upper bound.}    
    Therefore, optimizing over $\gamma$ yields an upper bound on $\max_{\theta \in \cC_n}\|\theta - \theta^\star\|_2$ scaling as $\wt \cO(A_{\theta^\star} k /n)$ when $n \gtrsim A_{\theta^\star}k^{3/2}$, and $\wt \cO((A_{\theta^\star} /n)^{1/3})$ otherwise, where $\tilde \cO(\cdot)$ hides logarithmic terms.    
    For large sample size compared to the dimension, i.e., $n \gtrsim A_{\theta^\star} k^{3/2}$, we recover a rate of $\wt \cO(1/n)$.
    
    In conclusion, we have derived generic assumptions under which the rate of decay of the estimation error of $\wh \theta^{\anye}_{n}$ and $\wh \theta^{\dpe}_{n}$ is in $\wt \cO(1/n)$.
    This is a significant improvement compared to the asymptotic normality of the $\lle$ estimator that implies a rate of $\cO(1/\sqrt{n})$.

\paragraph{Positive examples.}
While Assumptions~\ref{ass:linearization},~\ref{ass:informative_direction} and~\ref{ass:cdf_lower_bound} are restrictive, they hold for $\cF_{\cN, \Sigma}$ (Appendix~\ref{app:gaussian}) and $\cF_{\lap, b}$ (Appendix~\ref{app:laplace}) when using reward $r_{\theta} = \log p_{\theta}$.
This yields Theorem~\ref{thm:upper_bound_Gaussian_Laplace}.
We have $A_{\theta^\star} = 2b$, $B_{\theta^\star} = 8$ and $C_{\theta^\star} = 16b$ for $\cF_{\lap, b}$, and $A_{\theta^\star} = \frac{\pi(d-1)\Gamma(d/2)}{2\Gamma((d-1)/2)} =_{+\infty} \cO(\sqrt{d})$ for $\cF_{\cN, \Sigma}$, hence a rate in $\cO(d^{3/2}/n)$ when $n \gg d^2 $.

\paragraph{Extended discussions.}
In Appendix~\ref{app:discussions}, we discuss how to verify or weaken our assumptions (Appendix~\ref{app:ssec_weakening_verifying_assumptions}), the sources of misspecification (Appendix~\ref{app:ssec_misspecification}) and other reward models than log-likelihood (Appendix~\ref{app:ssec_rewards_model}).

\begin{figure*}[t]
    \centering
    \includegraphics[width=0.48\linewidth]{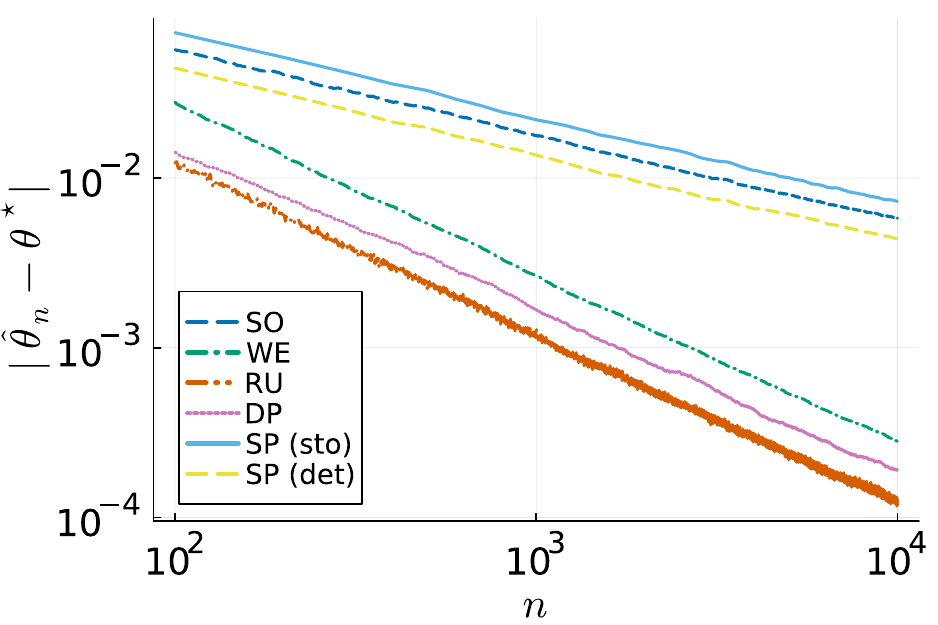}   
    \includegraphics[width=0.48\linewidth]{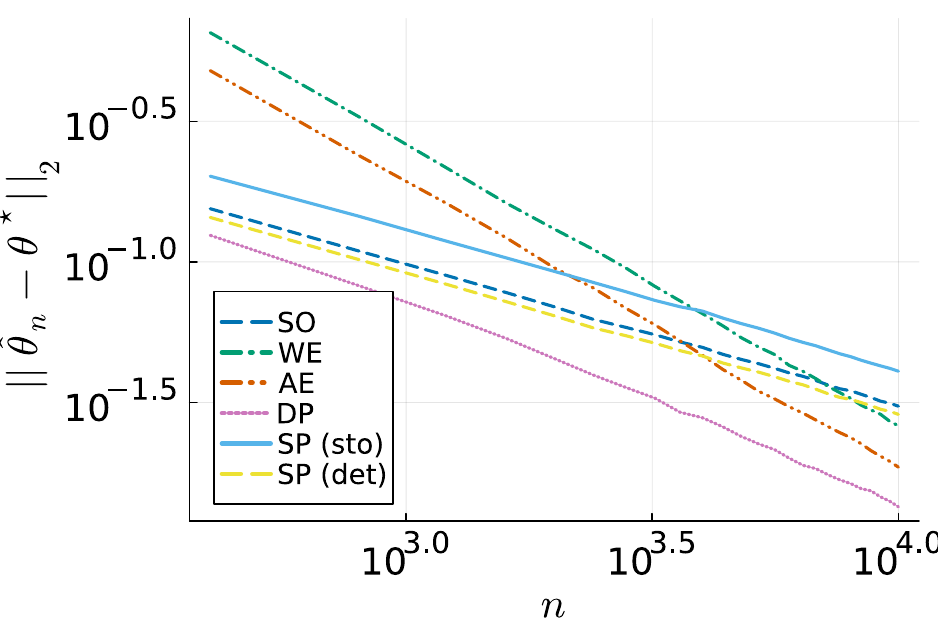}
    \caption{Estimation errors for $\cN(\theta^\star,I_d)$ where $\theta^\star \sim \cU([1,2]^{d})$ for (a) $d=1$ with $N_{\text{runs}} = 10^3$, and (b) $d=20$ with $N_{\text{runs}} = 10^2$.}
    \label{fig:GaussianFctnError}
\end{figure*}

\section{Lower Bound for Deterministic Feedback}
\label{sec:lower_bound}

In this section, we show that the rate $\cO(1/n)$ is minimax optimal  (up to a logarithmic factor) by deriving a matching lower bound. 
The standard approach to minimax lower bounds in estimation relies on Fano-type inequalities and hypothesis testing reductions.
However, due to  Assumption~\ref{ass:identifiability}, the Kullback-Leibler divergence and \(\chi^2\) distance between $q_{\theta^\star}$ and $q_{\theta}$ are infinite for $\theta \ne \theta^\star$, making these tools ineffective. 
Instead, we use Assouad's Lemma~\citep{tsybakov2009nonparametric}, which provides lower bounds via the total variation distance (defined as $\TV(\bP , \bQ ) \eqdef \left\|\bP - \bQ \right\|_1$ for distributions \(\bP\) and \(\bQ\)). 
Since $\TV$ is not well-behaved for product distributions, we use the squared Hellinger distance, defined as $\mH(\bP , \bQ ) \eqdef \frac{1}{2} \|\sqrt{\bP} - \sqrt{\bQ} \|_2^2$, which satisfies
\[\TV(\bP^{\otimes n}, \bQ^{\otimes n}) \le \sqrt{2\mH(\bP^{\otimes n}, \bQ^{\otimes n})} \le \sqrt{2n \mH(\bP , \bQ)} .\]
For further analytical convenience, we also employ the Bhattacharyya coefficient, $\BC(\bP , \bQ ) \eqdef \left\|\sqrt{\bP  \bQ} \right\|_1$, which is related to the Hellinger distance by $\mH(\bP , \bQ) = 1 - \BC(\bP , \bQ)$. 
Since $q_{\theta^\star}q_{\theta}$ is zero for disagreeing preferences, we define the restricted $\BC$ as
\[
    \wt \BC(\Tilde{\theta}, \theta) \eqdef \bE_{p_{\Tilde{\theta}}^{\otimes 2 }} \left [ \indi{\cD(\Tilde{\theta},\theta) \cup \cD_{0}(\Tilde{\theta},\theta)} \sqrt{p_{\theta}^{\otimes 2}/p_{\Tilde{\theta}}^{\otimes 2}} \right ] \: ,
\] 
where $\cD_{0}(\Tilde{\theta},\theta) \eqdef \cG_0(\Tilde{\theta})^{\complement} \triangle \cG_0(\theta)^{\complement} $ is the set where the preferences are zero for exactly one parameter. 

Lemma~\ref{lem:HellingerDistance} decomposes the Hellinger distance between two distributions over the preference triplets into the Hellinger distance between sample-only distributions and the disagreement restricted Bhattacharyya coefficient. 
\begin{lemma} \label{lem:HellingerDistance}
    $\mH(q_{\Tilde{\theta}},q_{\theta}) = \wt \BC(\Tilde{\theta} , \theta)  +  \mH(p_{\Tilde{\theta}}^{\otimes 2}, p_{\theta}^{\otimes 2})$ for all $\theta, \Tilde{\theta} \in \Theta$.
\end{lemma}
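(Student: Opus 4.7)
I would start from the identity $\mH(\bP,\bQ) = 1 - \BC(\bP,\bQ)$ and reduce the statement to the equivalent claim
\[
    \BC(q_{\Tilde\theta}, q_\theta) = \BC(p_{\Tilde\theta}^{\otimes 2}, p_{\theta}^{\otimes 2}) - \wt\BC(\Tilde\theta,\theta).
\]
Since $q_{\theta,h_{\det}}(x,y,z) = p_{\theta}^{\otimes 2}(x,y)\,\indi{z=\sign(\ell_{\theta}(x,y))}$ and indicator functions satisfy $\sqrt{\indi{A}\indi{B}}=\indi{A\cap B}$, summing over $z \in \{-1,0,+1\}$ collapses the preference contribution:
\[
    \sum_{z}\sqrt{h_{\det}(\ell_{\Tilde\theta},z)\,h_{\det}(\ell_{\theta},z)} = \indi{\sign(\ell_{\Tilde\theta})=\sign(\ell_{\theta})}.
\]
Hence $\BC(q_{\Tilde\theta}, q_\theta) = \int \indi{\sign(\ell_{\Tilde\theta})=\sign(\ell_{\theta})}\sqrt{p_{\Tilde\theta}^{\otimes 2}\, p_{\theta}^{\otimes 2}}\,dx\,dy$.

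Next, I would partition the $\BC$ integral for the sample-only marginals according to agreement of the preference signs. The complement of the ``signs agree'' set decomposes exactly as $\cD(\Tilde\theta,\theta) \cup \cD_0(\Tilde\theta,\theta)$: the part where both $\ell_{\Tilde\theta}$ and $\ell_{\theta}$ are nonzero but opposite is $\cD(\Tilde\theta,\theta)$, and the part where exactly one of them vanishes is the symmetric difference $\cG_0(\Tilde\theta)^{\complement}\triangle \cG_0(\theta)^{\complement}=\cD_0(\Tilde\theta,\theta)$. These two sets are disjoint. Rewriting $\sqrt{p_{\Tilde\theta}^{\otimes 2}\, p_{\theta}^{\otimes 2}} = \sqrt{p_{\theta}^{\otimes 2}/p_{\Tilde\theta}^{\otimes 2}}\, p_{\Tilde\theta}^{\otimes 2}$ identifies the missing mass with $\wt\BC(\Tilde\theta,\theta)$, giving
\[
    \BC(p_{\Tilde\theta}^{\otimes 2}, p_{\theta}^{\otimes 2}) - \BC(q_{\Tilde\theta}, q_\theta) = \wt\BC(\Tilde\theta,\theta).
\]
Subtracting both sides from $1$ and applying $\mH=1-\BC$ to each $\BC$ term yields the claimed decomposition.

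The only delicate points are the measure-theoretic bookkeeping: (i) justifying $\sqrt{h_{\det}} = h_{\det}$ when treating the preference component as a counting-measure density on $\{\pm 1,0\}$; (ii) verifying that the three cases (signs agree, both nonzero with opposite signs, exactly one zero) indeed partition $\cX^2$ so that the indicators add to $1$; and (iii) handling the ratio $\sqrt{p_{\theta}^{\otimes 2}/p_{\Tilde\theta}^{\otimes 2}}$ on the event $\{p_{\Tilde\theta}^{\otimes 2}=0\}$, which is harmless since the integrand is then absent from the original expression. No deep tool is needed; the argument is essentially a clean bookkeeping of sign-agreement against the support structure of $h_{\det}$.
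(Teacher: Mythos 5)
Your proposal is correct and follows essentially the same route as the paper's proof: compute $\BC(q_{\Tilde\theta},q_{\theta})$ by observing that summing $\sqrt{h_{\det}h_{\det}}$ over $z$ leaves only the sign-agreement region, note that the disagreement region is exactly $\cD(\Tilde\theta,\theta)\cup\cD_0(\Tilde\theta,\theta)$ so the missing mass is $\wt\BC(\Tilde\theta,\theta)$, and conclude via $\mH = 1-\BC$. Your extra bookkeeping remarks (the indicator collapse and the ratio on $\{p_{\Tilde\theta}^{\otimes 2}=0\}$) are just more explicit versions of what the paper states as "it is direct to see".
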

As $\mH(p_{\Tilde{\theta}}^{\otimes 2}, p_{\theta}^{\otimes 2}) \le 2\mH(p_{\Tilde{\theta}}, p_{\theta})$, deriving a lower bound requires controlling $\mH(p_{\Tilde{\theta}}, p_{\theta})$ and $\wt \BC(\Tilde{\theta}, \theta)$, hence, we impose the following assumption.
\begin{assumption} \label{ass:Hellinger_control} 
    There exists positive constants \(c_1, c_2\) independent of $k$ and a dimension and problem-dependent scaling function \(\alpha_{\mathcal{F}}(k)\) such that for all \(\theta, \Tilde{\theta}  \in \Theta\),
    \[
     \wt \BC(\Tilde{\theta} , \theta)  +  \mH(p_{\Tilde{\theta}}^{\otimes 2}, p_{\theta}^{\otimes 2}) \le  \frac{c_1}{\alpha_{\mathcal{F}}(k) } \|\theta -  \Tilde{\theta}\| + c_2 \|\theta - \Tilde{\theta}\|^2 \: .
    \]
\end{assumption}
Theorem~\ref{thm:lower_bound} bounds the minimax estimation error.
\begin{theorem} \label{thm:lower_bound}
    Let $R_{\max} \eqdef \inf_{\wh \theta} \sup_{\theta^\star \in \Theta} \bE_{q_{\theta^\star}}[\|\wh \theta - \theta^\star\|]$. 
    Suppose Assumption~\ref{ass:Hellinger_control} holds. 
    Then, 
    \[
    R_{\max} \ge \Omega\left( \min \left\{ \frac{\alpha_{\mathcal{F}}(k) \sqrt{k} }{n}, \sqrt{\frac{k}{n}} \right\} \right) \: .
    \]
\end{theorem}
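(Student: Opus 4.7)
\textbf{Proof plan for Theorem~\ref{thm:lower_bound}.} The approach is to apply Assouad's Lemma on a hypercube-indexed family of hypotheses inside $\Theta$, using Lemma~\ref{lem:HellingerDistance} and Assumption~\ref{ass:Hellinger_control} to control the information terms between neighbouring hypotheses.

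First, I would pick an interior point $\theta_0 \in \Theta$ together with $k$ directions $v_1, \dots, v_k$ of $\|\cdot\|$-norm one that are orthogonal with respect to the inner product inducing $\|\cdot\|$ (available for $\cF_{\cN,\Sigma}$ and $\cF_{\lap,b}$), and a scale $s > 0$ to be tuned. For each $\tau \in \{0,1\}^k$, set $\theta_\tau \eqdef \theta_0 + 2 s \sum_{j=1}^k \tau_j v_j$, so that $\|\theta_\tau - \theta_{\tau'}\|^2 = 4 s^2 \rho(\tau,\tau')$ for the Hamming distance $\rho$, and $\theta_\tau \in \Theta$ for every $\tau$ provided $s$ is small enough relative to the inradius of $\Theta$ at $\theta_0$. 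For neighbouring pairs ($\rho(\tau,\tau') = 1$), Lemma~\ref{lem:HellingerDistance} combined with Assumption~\ref{ass:Hellinger_control} gives
\[
\mH(q_{\theta_\tau}, q_{\theta_{\tau'}}) \le \frac{2 c_1 s}{\alpha_{\cF}(k)} + 4 c_2 s^2.
\]
Tensorising via $\TV(\bP^{\otimes n}, \bQ^{\otimes n}) \le \sqrt{2 n \mH(\bP, \bQ)}$, I would then choose $s = \Theta(\min\{\alpha_{\cF}(k)/n, 1/\sqrt{n}\})$ to force the neighbour total variation below a universal constant $\alpha_0 < 1$.

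Next, I invoke the Hamming-loss form of Assouad's Lemma with the nearest-vertex decoder $\wh\tau \eqdef \argmin_\tau \|\wh\theta - \theta_\tau\|$, obtaining $\max_\tau \bE_{q_{\theta_\tau}^{\otimes n}}[\rho(\wh\tau, \tau)] \ge k(1 - \alpha_0)/2$. The triangle inequality yields $\|\wh\theta - \theta_\tau\| \ge \|\theta_{\wh\tau} - \theta_\tau\|/2 = s \sqrt{\rho(\wh\tau,\tau)}$. Since $\rho$ is supported on $[0, k]$, a Markov-type argument converts a lower bound on $\bE\rho$ of order $k$ into a lower bound on $\bE\sqrt{\rho}$ of order $\sqrt{k}$ (for $\alpha_0$ small enough), and combining with the previous inequality produces $R_{\max} \ge \Omega(s\sqrt{k}) = \Omega(\min\{\alpha_{\cF}(k)\sqrt{k}/n, \sqrt{k/n}\})$, which is the claim.

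The main obstacle is the transfer from the natural squared-distance form $\|\theta_\tau - \theta_{\tau'}\|^2 \propto \rho(\tau,\tau')$ to a lower bound on $\bE[\|\wh\theta - \theta\|]$: Jensen's inequality on the square root points in the wrong direction. The workaround is the boundedness of $\rho$: if $\bE \rho \ge k(1-\alpha_0)/2$ and $\rho \le k$, then $\bP(\rho \ge k/4)$ stays bounded away from zero, forcing $\bE\sqrt{\rho} = \Omega(\sqrt{k})$. A secondary technical issue is constructing an orthogonal hypercube inside $\Theta$ for the norm $\|\cdot\|$ attached to $\cF$; this is automatic for inner-product norms like $\|\cdot\|_\Sigma$ or $|\cdot|$, but for general norms one would either require a bi-Lipschitz embedding of the hypercube into $(\Theta, \|\cdot\|)$ or absorb the deficit into the problem-dependent scaling $\alpha_{\cF}(k)$.
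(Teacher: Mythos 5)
Your proposal is correct and follows essentially the same route as the paper: an Assouad hypercube of hypotheses whose neighbouring pairs are controlled through Lemma~\ref{lem:HellingerDistance}, Assumption~\ref{ass:Hellinger_control} and the tensorized Hellinger/total-variation bound, with the same choice of scale $\asymp \min\{\alpha_{\mathcal{F}}(k)/n, 1/\sqrt{n}\}$. The only difference is bookkeeping: the paper invokes the estimation-loss form of Assouad directly via the inequality $\|\theta_b - \theta_{b'}\| \ge \frac{\delta}{\sqrt{k}} d_H(b,b')$, whereas you use the Hamming-loss form together with a nearest-vertex decoder and a Markov-type argument to recover the $\sqrt{k}$ factor---both are standard and yield the same rate.
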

This result confirms that the $O(1/n)$ rate is minimax optimal up to logarithmic factors. 
The scaling $\alpha_{\mathcal{F}}(k)$ comes from $\wt \BC(\theta^\star, \theta)$ (Assumption~\ref{ass:Hellinger_control}), yet it is challenging to link $\alpha_{\mathcal{F}}(k)$ with $A_{\theta^\star}$ without further assumptions.

\paragraph{Positive examples.}
While Assumption~\ref{ass:Hellinger_control} is restrictive, even when using rewards $r_{\theta} = \log p_{\theta}$, it holds for $\cF_{\lap, b}$ (Appendix~\ref{app:laplace}), i.e.,
\[
    \wt \BC(\theta^\star, \theta)  = |\theta^\star - \theta|/(2b) + \cO( |\theta^\star - \theta|^2 ) \: ,
\]
as well as for $\cF_{\cN, \Sigma}$ (Appendix~\ref{app:gaussian}), i.e.,
\[
    \wt \BC(\theta^\star, \theta) = 2 e^{- \|\theta^\star - \theta\|_{\Sigma}^2/4}  F_{\theta^\star,u}(\|\theta^\star - \theta\|_{\Sigma}) \: ,
\]
with $F_{\theta^\star,u}(\epsilon) \le \epsilon/A_{\theta^\star}$ and $\alpha_{\mathcal{F}}(d) = A_{\theta^\star}$.

\paragraph{Dimensionality gap.}
While the lower bound in Theorem~\ref{thm:lower_bound} scales as $\Omega(\alpha_{\mathcal{F}}(k) \sqrt{k} / n)$ for $n \ge \alpha_{\mathcal{F}}(k)^2$, the upper bound in Theorem~\ref{thm:upper_bound} scales as $\cO(A_{\theta^\star} k / n)$ for $n \gg A_{\theta^\star}  k^{3/2}$.
Even for the simple case of Gaussian distributions where $A_{\theta^\star} = \alpha_{\mathcal{F}}(d)$, there is a dimensionality gap. 
Closing this gap is an important direction for future work.
Improvements might come from a tighter analysis, e.g., both for the upper and lower bounds, or the derivation of better estimators based on deterministic preferences.

\section{Experiments}
\label{sec:experiments}

In this section, we compare the empirical performance of the different estimators introduced in this paper.
For preferences based on $r_{\theta} = \log p_{\theta}$, we conduct a set of experiments for Gaussian distributions, and defer to Appendix~\ref{app:ssec_other_distributions} for experiments on Laplace and Rayleigh distributions.
In particular, we consider a uniformly drawn mean parameter $\theta^\star \sim \cU([1,2]^d)$ and the isotropic covariance $\Sigma = I_{d}$.
For sample size $n \in [N_{\max}]$ with $N_{\max} = 10^4$, we compute the estimation errors $\|\wh \theta_n - \theta^\star\|_2$.
We repeat this process for $N_{\text{runs}}$ different instances and for various choices of $d$.

For $\cF_{\cN, I_d}$ (Appendix~\ref{app:gaussian}), the M-estimators can be implemented as $\wh \theta^{\so}_{n} = \frac{1}{2n} \sum_{i \in [n]} (X_i + Y_i)$, 
\[
    \wh \theta^{\spe}_{n} = \argmin_{\theta} \|\theta - \wh \theta^{\so}_{n}\|_2^2 - \frac{1}{n} \!\!\sum_{i \in [n]} \!\log \sigma (Z_i\ell_{\theta}(X_i,Y_i) ) \: ,
\]
where $\ell_{\theta}(X_i,Y_i) = \langle X_i-Y_i, \theta -  (X_i + Y_i)/2 \rangle$.
Then, the estimators based on $\cC_n = \{\theta \mid \forall i \in [n], \: Z_i \ell_{\theta}(X_i,Y_i) \ge 0\}$ are $\wh \theta^{\dpe}_{n} = \argmin_{\theta \in \cC_{n}} \|\theta - \wh \theta^{\so}_{n}\|_2^2$ and an arbitrary estimator $\wh \theta^{\anye}_{n} \in \cC_{n}$.
As $\cC_{n}$ is an interval for $d=1$, we use the randomized uniform ($\rue$) estimator, i.e., $\wh \theta^{\mathrm{RU}}_{n} \sim \cU\left(\cC_{n}\right)$.
We also consider the worst-case estimator ($\we$), defined as $\wh \theta^{\we}_{n} \eqdef \argmax_{\theta \in \cC_{n}} \|\theta - \theta^\star\|_1$.
While it is not a valid estimator due to its $\theta^\star$ dependency, it serves as a proxy for the worst estimation error in $\cC_{n}$.

\paragraph{Dependency on sample size.}
Figure~\ref{fig:GaussianFctnError}(a) confirms empirically the difference in estimation rate between the M-estimators (\ref{eq:somle} and \ref{eq:spmle})---obtaining $\cO(1/\sqrt{n})$---and our estimators based on $\cC_n$---achieving $\cO(1/n)$.

However, Figure~\ref{fig:GaussianFctnError}(b) also reveals that the performance of $\anye$ and $\we$ deteriorates quickly at small sample sizes when the dimension increases.
In contrast, \ref{eq:dple} consistently outperforms all the other estimators, including \ref{eq:somle} as theoretically shown in Lemma~\ref{lem:statistical_dominance_dp_over_so}.

While $\lle$ outperforms \ref{eq:somle}, Figure~\ref{fig:GaussianFctnError} also reveals that $\spe$ performs worse than \ref{eq:somle} for finite sample size.
Therefore, only an M-estimator based on deterministic preferences improves on sample-only M-estimators empirically.
This further highlights the weakness of asymptotic results compared to non-asymptotic guarantees.

While Figure~\ref{fig:GaussianFctnError}(a) suggests that RU and $\we$ perform on par with \ref{eq:dple}, Figures~\ref{fig:GaussianFctnError}(b) and~\ref{fig:GaussianVarDError} highlight that \ref{eq:dple} outperforms $\we$ and $\anye$ for larger dimensions, where the gap increases when $d$ is nonnegligible compared to $n$. 
We conjecture that $\rue$ suffers from the same limitation as $\anye$ for larger $d$.
As empirical evidence, we study other estimators that disentangle the effect of $\rue$'s randomness versus its mean behavior, see Appendix~\ref{app:ssec_other_estimators}.

\begin{figure}[t]
    \centering
    \includegraphics[width=\linewidth]{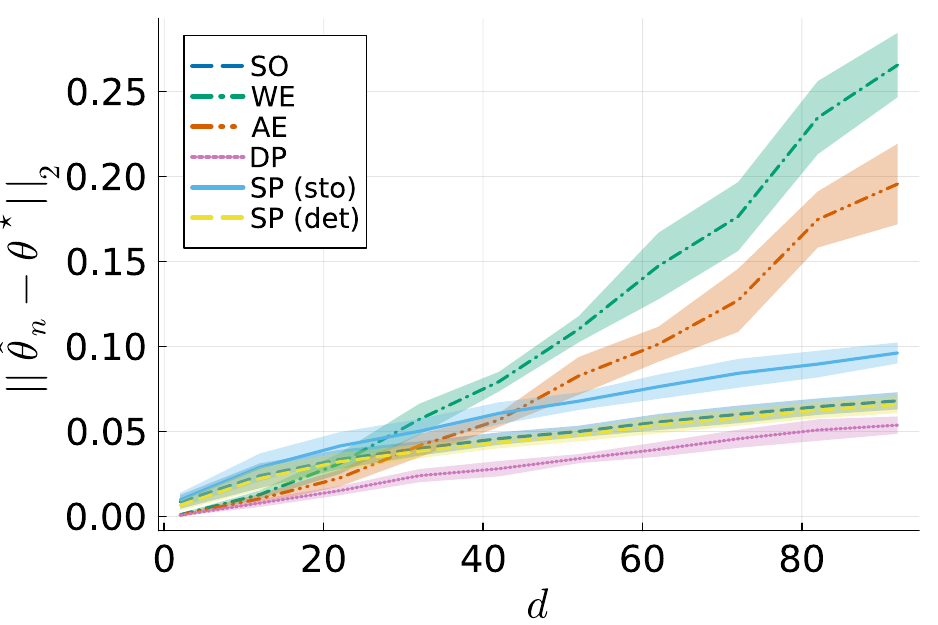}
    \caption{Estimation errors as a function of $d$ with $\cN(\theta^\star,I_d)$ where $\theta^\star \sim \cU([1,2]^{d})$, for $n=10^4$ and $N_{\text{runs}} = 10^3$.}
    \label{fig:GaussianVarDError}
\end{figure}

\paragraph{Dependency on dimension.}
Figure~\ref{fig:GaussianVarDError} strengthens the aforementioned empirical observations.
For fixed sample size and increasing dimension, \ref{eq:dple} is the only estimator obtaining the best-of-both world estimation error rate, i.e., $\cO(\min\{d^{3/2}/n, \sqrt{d/n}\})$.

\paragraph{Covariance gap.}
We show that the covariance gap between \ref{eq:spmle} and \ref{eq:somle} is relative mild: $(\Delta^{\spe}_{\lap(0,1)} , \Delta^{\lle}_{\lap(0,1)}) \approx (0.16,0.08)$ and $(\Delta^{\spe}_{\cN(0,1)},\Delta^{\lle}_{\cN(0,1)},R^{\lle}_{\cN(0,1)}) \approx (0.17,0.08, 0.10)$.
Moreover, $\Delta^{\spe}_{\cN(0_d,I_d)}$, $\Delta^{\lle}_{\cN(0_d,I_d)}$ and $R^{\lle}_{\cN(0_d,I_d)}$ are close to $\alpha_d I_d$ where $\alpha_d > 0$ is decreasing in $d$ (see Figure~\ref{fig:GaussianMatrices} in Appendix~\ref{app:gaussian}).
In addition to having a small empirical gap for a moderate value of $n$, the asymptotic gaps between \ref{eq:somle} and \ref{eq:spmle} are mild.

\paragraph{Supplementary experiments.}
Following the approach of~\citet{pmlr-v235-tang24b}, we compare estimators using other convex surrogates of the $0$-$1$ loss (Appendix~\ref{app:ssec_other_losses}): they all perform similarly.
For the logistic loss, we showcase the ``mild'' impact of normalization and regularization (Appendix~\ref{app:ssec_norm_regu}).

\section{Perspectives}
\label{sec:perspectives}

This work investigates the role of preference feedback in parameter estimation for continuous parametric distributions. We establish conditions under which preference-based estimators outperform sample-only methods. 
For stochastic preferences, the preference-based MLE achieves a lower asymptotic variance than its sample-only counterpart. 
For deterministic preferences, we demonstrate that preference-based estimators can significantly accelerate parameter estimation, achieving an improved $\cO(1/n)$ convergence rate compared to the $\cO(1/\sqrt{n})$ rate of M-estimators. 
Our lower bound analysis further confirms that this acceleration is minimax optimal up to dimension-dependent constants.

While our results provide a solid theoretical foundation, several open questions remain. 
A finer analysis of beyond-M-estimators and their constraint set geometry would allow to better quantify the properties of \ref{eq:dple}, and provide insights for designing improved estimators that better leverage deterministic preferences.
Additionally, exploring alternative preference functions beyond the log-probability gap could extend the applicability of our results.

Finally, a key challenge for future work is to quantify the benefits of preference-based estimation for discrete distributions. 
For distributions with small support, preference feedback may only localize the unknown parameter within a subset of the simplex, leading to diminishing information gains as the sample size increases.
However, understanding how preference-based estimators perform in finite-sample settings, particularly in high-dimensional problems, remains an interesting open problem. Addressing these questions could provide further insights into the role of preferences in machine learning and statistical estimation.

\section*{Acknowledgements}

We thank Jaouad Mourtada for insightful early discussions on the univariate Gaussian case. 
This work was supported by the Swiss National Science Foundation (grant number 212111) and by an unrestricted gift from Google.

\section*{Impact Statement}

This paper presents work whose goal is to advance the field of Machine Learning. 
There are many potential societal consequences of our work, none of which we feel must be specifically highlighted here.

\bibliography{bibilo}
\bibliographystyle{icml2025}

%%%%%%%%%%%%%%%%%%%%%%%%%%%%%%%%%%%%%%%%%%%%%%%%%%%%%%%%%%%%%%%%%%%%%%%%%%%%%%%
%%%%%%%%%%%%%%%%%%%%%%%%%%%%%%%%%%%%%%%%%%%%%%%%%%%%%%%%%%%%%%%%%%%%%%%%%%%%%%%
% APPENDIX
%%%%%%%%%%%%%%%%%%%%%%%%%%%%%%%%%%%%%%%%%%%%%%%%%%%%%%%%%%%%%%%%%%%%%%%%%%%%%%%
%%%%%%%%%%%%%%%%%%%%%%%%%%%%%%%%%%%%%%%%%%%%%%%%%%%%%%%%%%%%%%%%%%%%%%%%%%%%%%%
\newpage
\appendix
\onecolumn

\section{Outline}
\label{app:outline}

The appendices are organized as follows:
\begin{itemize}
    \item In Appendix~\ref{app:discussions}, we provide detailed discussions on our assumptions, the sources of misspecification and other reward models.
    \item In Appendix~\ref{app:prefMestimators}, we prove the general results presented in Section~\ref{sec:prefMestimators} such as Lemma~\ref{lem:sp_mle_asymptotic_variance}.
    \item In Appendix~\ref{app:beyondMestimators}, we focus on Section~\ref{sec:beyondMestimators} and detail the proofs of Lemma~\ref{lem:deviation_direction_proba} and Theorem~\ref{thm:upper_bound}
    \item In Appendix~\ref{app:lower_bound}, we prove the results presented in Section~\ref{sec:lower_bound}.
    \item In Appendix~\ref{app:gaussian}, for $\cF_{\cN,\Sigma}$ and preferences based on $r_{\theta} = \log p_{\theta}$, we prove all the assumptions introduced in this paper.
    \item In Appendix~\ref{app:laplace}, for $\cF_{\lap,b}$ and preferences based on $r_{\theta} = \log p_{\theta}$, we prove all the assumptions introduced in this paper.
    \item In Appendix~\ref{app:supp_experiments}, we provide supplementary experiments to support our theoretical findings.
\end{itemize}

\section{Extended Discussions}
\label{app:discussions}

We provide detailed discussions on how to verify or weaken our assumptions (Appendix~\ref{app:ssec_weakening_verifying_assumptions}), the sources of misspecification (Appendix~\ref{app:ssec_misspecification}) and other reward models than log-likelihood (Appendix~\ref{app:ssec_rewards_model}).

\subsection{Verifying or Weakening our Assumptions}
\label{app:ssec_weakening_verifying_assumptions}

Since our assumptions are restrictive, it is natural to wonder how they can be verified or weakened.

\paragraph{Verifying our assumptions.}
Even a closed-form definition of $p_{\theta}$ and $\ell_{\theta}$ is given, our assumptions are challenging to verify, hence we suggest using a formal verifier (e.g., Lean~\citep{The_mathlib_Community_2020}) or software (e.g., SageMath~\citep{sagemath}).
Empirically, they can be confirmed or rejected by sampling from $p_{\theta^\star}^{\otimes 2}$. 
Assumption~\ref{ass:linearization} is rejected by exhibiting $(X_i,Y_i) \in \wt \cD(\theta^\star,\theta) \setminus \cD(\theta^\star,\theta)$. 
Assumptions~\ref{ass:identifiability} and~\ref{ass:informative_direction} are confirmed by finding $(X_i,Y_i) \in \cD(\theta^\star,\theta)$ and $(X_i,Y_i) \in \cG_{1}(\theta^\star,u)$. 
Those tests' sampling complexity scales as the inverse event's probability. 
Using Dvoretzky–Kiefer–Wolfowitz inequality~\citep{DKW}, $F_{\theta^\star,u}$ can be estimated to verify that Assumption~\ref{ass:cdf_lower_bound} holds.

\paragraph{Restrictive assumptions.}
When studying \ref{eq:dple} only, we conjecture that the ``global'' Assumptions~\ref{ass:identifiability} and~\ref{ass:linearization} can be weakened to local versions. Using time-uniform concentration results, we can build a sequence of shrinking confidence regions $(\cR_n)_n$ around \ref{eq:somle} that contains $\theta^\star$ for all time $n$ with high probability. 
Then, we modify \ref{eq:dple} to be constrained on $\cR_n \cap \cC_n$.
For $n$ large enough and with high probability, $\cR_n \cap \cC_n$ will be included in a local neighborhood of $\theta^\star$ under which the ``local'' Assumptions~\ref{ass:identifiability} and~\ref{ass:linearization} are satisfied. 
Given that Assumption~\ref{ass:linearization} is based on ``ignoring'' the reminder term in a first-order Taylor expansion, assuming a local version is a significantly weaker requirement.

\subsection{Sources of Misspecification}
\label{app:ssec_misspecification}

There are several possible sources of misspecification not taken into account by our current analysis.

\paragraph{Preference model.} 
The Bradley-Terry model that uses reward-based preferences has limited expressivity as it doesn’t allow for intransitive preferences. 
Even when individuals exhibit transitive preferences, their averaged preferences can be intransitive due to disagreements, see~\citet{munos2024nash} or~\citet{swamyminimaximalist}. 

\paragraph{Parameter space.}
When $\theta^\star \notin \Theta$, the deterministic preferences might not provide separability within $\Theta$. 
The definition of \ref{eq:dple} should be modified to combine the cross-entropy loss and the classification $0$-$1$ loss, i.e.,
\begin{equation} \label{eq:dple_misspecified}
    \wh \theta^{\dpe}_{n} \in \argmin_{\theta \in \Theta} \left\{ L^{\so}_{n}(\theta) + \lambda \sum_{i \in [n]} \indi{ Z_i\ell_{\theta}(X_i,Y_i) < 0}  \right\} \: ,
\end{equation}
where $\lambda > 0$ is a regularization between those two losses.
Equation~\eqref{eq:dple_misspecified} is reminiscent of single-stage alignment procedures such as ORPO~\citep{hong2024orpo} and ASFT~\citep{wang2024asft}, see, e.g.,~\citet{gorbatovski2025differences}.
Without separability, solving Eq.~\eqref{eq:dple_misspecified} can be NP-hard. 
Under sufficient regularity, $\wh \theta^{\dpe}_{n}$ converges to $\theta_{0} \in \argmin_{\theta \in \Theta} \{\KL(\theta^\star,\theta) + \lambda m(\theta)\}$ where $m(\theta) = \bP_{p_{\theta^\star}^{\otimes 2}} \left( \cD(\theta^\star, \theta) \right)$ and $\theta_0 \ne \theta^\star$. 
As $\theta \mapsto m(\theta)$ can be non-convex, computing $\theta_{0}$ might be challenging. 
Deriving a tractable ELBO method for this optimization is an interesting direction to obtain tractable and robust estimators. 
As $\theta_0$ lies in the boundary of $\Theta$, we should control the maximal deviation with respect to $\theta_0$ for directions that point towards the interior of $\Theta$ to prove an accelerated rate. 
While parts of our analysis could be used, we believe that finer technical arguments are required. 

\paragraph{Parametric model.}
The true distribution $p^\star$ of the observations might not even be a member of our class of distributions $\cF$, i.e., $p^\star \notin \cF$.
These situations occur when $\cF$ doesn’t contain the true structure, e.g., other parametric or non-parametric class of distributions. 
Then, $L^{\so}_{n}(\theta)$ can be interpreted as a quasi-log-likelihood term.
Let us denote by SO quasi-MLE the estimator based on \ref{eq:somle} for this quasi-log-likelihood.
Under sufficient regularity, SO quasi-MLE converges towards $\theta_{0} \in \argmin_{\theta \in \Theta} \KL(p^{\star}, p_{\theta})$ where $ p^\star \ne p_{\theta_0} \in \cF$. 
Without the separability from well-specified deterministic preference, we define DP quasi-MLE as in Eq.~\eqref{eq:dple_misspecified}. 
Under sufficient regularity, DP quasi-MLE converges towards the minimizer of a similar optimization problem combining the KL term and a misspecified equivalent of $m(\theta)$.

\subsection{Reward models}
\label{app:ssec_rewards_model}

Except for Theorem~\ref{thm:upper_bound_Gaussian_Laplace}, all the derivations in Section~\ref{sec:beyondMestimators} hold for general (hence reward-based) preference models provided Assumptions~\ref{ass:identifiability},~\ref{ass:linearization},~\ref{ass:informative_direction} and~\ref{ass:cdf_lower_bound} hold. 
Characterizing the expressivity of parametric rewards satisfying those assumptions is interesting, yet challenging. 
We provide two positive and one negative examples.

\paragraph{Positive: monotonic reward.}
Suppose that $\tilde \ell_{\theta}(x,y) = f(p_{\theta}(x)) - f(p_{\theta}(x))$ where $f$ is increasing on $[0,1]$. 
Since $\sign(\tilde \ell_{\theta}) = \sign(\ell_{\theta})$, hence the parameters with zero classification loss and our estimators are the same. 
Therefore, our results hold for this class of rewards when our assumptions hold for the log-likelihood reward. 
When $f$ is decreasing, the preferences are “reversed”, and similar arguments can be made. 
This example includes (1) normalization by a multiplicative constant (e.g., temperature $\beta$) and (2) the odds-ratio reward-based preference based on $f(x) = \log(x/(1-x))$ used by ORPO in~\citet{hong2024orpo}.

\paragraph{Positive: margin with Gaussian.}
Suppose that $\tilde \ell_{\theta} = \ell_{\theta} + c$ where $c$ is a constant and $\ell_{\theta}$ is the Gaussian log-likelihood preference. 
By extending our computations from Appendix~\ref{app:gaussian}, Assumptions~\ref{ass:identifiability},~\ref{ass:linearization},~\ref{ass:informative_direction} and~\ref{ass:cdf_lower_bound} hold with $c$-dependent positive constants. 
Margins are used by SimPO from~\citet{meng2024simpo} and IPO from~\citet{azar2024general}.

\paragraph{Negative: reference model with Gaussian.}
Suppose that $\tilde \ell_{\theta} = \ell_{\theta} - \ell_{\theta_0}$ where $\theta_0$ is known and $\ell_{\theta}$ is the Gaussian log-likelihood preference. Since $\tilde \ell_{\theta}(x,y) = \langle x-y, \theta - \theta_0 \rangle$ and $\nabla_{\theta} \tilde \ell_{\theta}(x,y) = x-y$, Assumption~\ref{ass:informative_direction} is violated for $u=\theta^\star - \theta_0$, i.e., $\bP_{p_{\theta^\star}^{\otimes 2}}(\cG_1(\theta^\star,\theta^\star - \theta_0)) = 0$. 
Not all direct alignment algorithms rely on a reference model, see, e.g., SimPO and ORPO.

\section{Proofs of Section~\ref{sec:prefMestimators}}
\label{app:prefMestimators}
    
\subsection{Stochastic Preferences}
\label{app:stochastic_prefMestimators}

    Under enough regularity, by swapping the integration and the differentiation operators, we can show that 
    \[
        \bE_{q_{\theta, h_{\sto}}}[\nabla_{\theta} \log q_{\theta, h_{\sto}}] = \nabla_{\theta} 1 = 0_k \quad \text{and} \quad \bE_{q_{\theta, h_{\sto}}}[-\nabla_{\theta}^2 \log q_{\theta, h_{\sto}}] = \bE_{q_{\theta, h_{\sto}}}[\nabla_{\theta} \log q_{\theta, h_{\sto}} \nabla_{\theta} \log q_{\theta, h_{\sto}}\transpose] \: .
    \]
    
Below, we detail the proof of Lemma~\ref{lem:sp_mle_asymptotic_variance}.
\begin{proof}
    Direct computation yields that
    \begin{align*} 
        \nabla_{\theta^\star} \log q_{\theta^\star}(x,y,z) &= \nabla_{\theta^\star} \log  p^{\otimes 2}_{\theta^\star}(x,y) +  z  \sigma (-z \ell_{\theta^\star}(x,y) )  \nabla_{\theta^\star}\ell_{\theta^\star}(x,y) \: , \\
        \nabla_{\theta^\star}^2 \log q_{\theta^\star}(x,y,z) &= \nabla_{\theta^\star}^2 \log p^{\otimes 2}_{\theta^\star}(x,y)  -  \sigma ( \ell_{\theta^\star}(x,y) ) \sigma ( -\ell_{\theta^\star}(x,y) )   \nabla_{\theta^\star}\ell_{\theta^\star}(x,y) \nabla_{\theta^\star}\ell_{\theta^\star}(x,y)\transpose \\
        &\qquad \qquad +  z \sigma (-z \ell_{\theta^\star}(x,y) )  \nabla_{\theta^\star}^2\ell_{\theta^\star}(x,y) \: .
    \end{align*}
    where we used that $g'(x) = \sigma(-x)$ and $g''(x) = -\sigma' (-x) = -\sigma (x)\sigma (-x)$ with $g(x) = \log \sigma(x)$.
By definition of $h_{\sto}$, 
\begin{align*}
    &\bE_{Z \mid (X,Y)} \left[ Z \sigma (-Z \ell_{\theta^\star}(X,Y) )  \nabla_{\theta^\star}^2\ell_{\theta^\star}(X,Y) \right] \\
    &= \left(   \sigma (- \ell_{\theta^\star}(X,Y) )  \sigma ( \ell_{\theta^\star}(X,Y) )   - \sigma ( \ell_{\theta^\star}(X,Y) ) \sigma ( -\ell_{\theta^\star}(X,Y) )  \right)  \nabla_{\theta^\star}^2\ell_{\theta^\star}(X,Y) =  0_{d \times d} \: .
\end{align*}
Therefore, we have $\cI(q_{\theta^\star, h_{\sto}}) = \cI(p_{\theta^\star}^{\otimes 2}) + \Delta^{\spe}_{\theta^\star}$ with $\Delta^{\spe}_{\theta} = \bE_{p_{\theta^\star}^{\otimes 2}}[\sigma(\ell_{\theta}) \sigma(-\ell_{\theta}) \nabla_{\theta} \ell_{\theta} ( \nabla_{\theta} \ell_{\theta})\transpose]$.
For all $x \in \R^k$, we have 
\[
    x\transpose \Delta^{\spe}_{\theta} x = \|x\|^2 \bE_{p_{\theta^\star}^{\otimes 2}}[\sigma(\ell_{\theta}) \sigma(-\ell_{\theta}) \langle x/\|x\|, \nabla_{\theta} \ell_{\theta} \rangle^2] \ge 0 \: .
\]
It is direct to see that this inequality is strict except if $\bP_{p_{\theta^\star}^{\otimes 2}}(\langle x/\|x\|, \nabla_{\theta} \ell_{\theta} \rangle^2 = 0) = 1$.
Therefore, $\Delta^{\spe}_{\theta^\star}$ is a positive definite matrix if $\bP_{p_{\theta^\star}^{\otimes 2}}(|\langle u, \nabla_{\theta^\star}  \ell_{\theta^\star} \rangle| > 0) > 0$ for all $u \in \cS_{k-1}$.
Note that this condition is implied by Assumption~\ref{ass:informative_direction}.
\end{proof}

\subsection{Deterministic Preferences}
\label{app:deterministic_prefMestimators}

    \paragraph{Consistency of $\spe$.}
    Let $M(\theta) \eqdef \bE_{p_{\theta^\star}^{\otimes 2}}[\log q_{\theta, h_{\sto}}(X,Y,\sign(\ell_{\theta^\star}(X,Y)))]$.
    Under enough regularity, we obtain 
    \[
        \bE_{p_{\theta^\star}^{\otimes 2}}[\nabla_{\theta^\star} \log p_{\theta^\star}^{\otimes 2}] = 0_k \quad \text{and} \quad \nabla_{\theta^\star} M(\theta^\star) = \bE_{p_{\theta^\star}^{\otimes 2}}[\sign(\ell_{\theta^\star}(X,Y)) \sigma( - |\ell_{\theta^\star}(X,Y)|) \nabla_{\theta^\star} \ell_{\theta^\star}(X,Y)  ] \: .
    \]
    Therefore, $\theta^\star$ is the unique maximizer of $M(\theta)$ if $ \nabla_{\theta^\star} M(\theta^\star) = 0_k$, i.e., if~\eqref{eq:consistency_lle} holds true.

    \paragraph{Asymptotic normality of $\spe$.}
    Provided~\eqref{eq:consistency_lle}, under enough regularity, the theory of M-estimator yields that
\[
     \sqrt{n}(\wh \theta^{\lle}_{n}  - \theta^\star ) \rightsquigarrow_{n \to + \infty} \cN(0_{k}, V_{1,\theta^\star}^{-1} V_{2,\theta^\star} V_{1,\theta^\star}^{-1}) \: ,
\]
    where
\begin{align*}
    V_{1,\theta^\star} &= \bE_{(X,Y) \sim p_{\theta^\star}^{\otimes 2}}\left[-\nabla_{\theta^\star}^2 \log q_{\theta^\star, h_{\sto}}(X,Y,\sign(\ell_{\theta^{\star}}(X,Y))) \right] \: ,\\
    V_{2,\theta^\star} &= \bE_{(X,Y) \sim p_{\theta^\star}^{\otimes 2}}\left[\nabla_{\theta^\star} \log q_{\theta^\star, h_{\sto}}(X,Y,\sign(\ell_{\theta^{\star}}(X,Y))) \nabla_{\theta^\star} \log q_{\theta^\star, h_{\sto}}(X,Y,\sign(\ell_{\theta^{\star}}(X,Y)))\transpose \right] \: .
\end{align*}
Below we detail the proof of Lemma~\ref{lem:asymptotic_variance_lle}.
\begin{proof}
Combining $z = \sign(\ell_{\theta^{\star}}(x,y))$ with the same manipulation as above yields
\begin{align*} 
    &\nabla_{\theta^\star} \log q_{\theta^\star, h_{\sto}}(x,y,z) = \nabla_{\theta^\star} \log  p^{\otimes 2}_{\theta^\star}(x,y) +   \sign(\ell_{\theta^{\star}}(x,y))  \sigma (-| \ell_{\theta^\star}(x,y) |)  \nabla_{\theta^\star}\ell_{\theta^\star}(x,y) \: , \\
    &\nabla_{\theta^\star} \log q_{\theta^\star, h_{\sto}}(x,y,z) \nabla_{\theta^\star} \log q_{\theta^\star, h_{\sto}}(x,y,z)\transpose = \nabla_{\theta^\star} \log p^{\otimes 2}_{\theta^\star}(x,y) \nabla_{\theta^\star} \log p^{\otimes 2}_{\theta^\star}(x,y)\transpose  \\
    &\qquad +  \sigma (-| \ell_{\theta^\star}(x,y) |)^2  \nabla_{\theta^\star}\ell_{\theta^\star}(x,y)  \nabla_{\theta^\star}\ell_{\theta^\star}(x,y) \transpose \\
    &\qquad  +  \sign(\ell_{\theta^{\star}}(x,y))  \sigma (-| \ell_{\theta^\star}(x,y) |) \left(  \nabla_{\theta^\star}\log p^{\otimes 2}_{\theta^\star}(x,y) \nabla_{\theta^\star}\ell_{\theta^\star}(x,y)\transpose  +   \nabla_{\theta^\star}\ell_{\theta^\star}(x,y)  \nabla_{\theta^\star}\log p^{\otimes 2}_{\theta^\star}(x,y)\transpose  \right) \\
    &\nabla_{\theta^\star}^2 \log q_{\theta^\star, h_{\sto}}(x,y,z) = \nabla_{\theta^\star}^2 \log p^{\otimes 2}_{\theta^\star}(x,y)  -  \sigma ( \ell_{\theta^\star}(x,y) ) \sigma ( -\ell_{\theta^\star}(x,y) )   \nabla_{\theta^\star}\ell_{\theta^\star}(x,y) \nabla_{\theta^\star}\ell_{\theta^\star}(x,y)\transpose \\
    &\qquad \qquad +  \sign(\ell_{\theta^{\star}}(x,y)) \sigma (-| \ell_{\theta^\star}(x,y)| )  \nabla_{\theta^\star}^2\ell_{\theta^\star}(x,y) \: .
\end{align*}
where we used that $g'(x) = \sigma(-x)$ and $g''(x) = -\sigma' (-x) = -\sigma (x)\sigma (-x)$ with $g(x) = \log \sigma(x)$.
Using that $ \sigma (-| \ell_{\theta^\star}(x,y) |)^2 =  \sigma (-| \ell_{\theta^\star}(x,y) |) - \sigma (- \ell_{\theta^\star}(x,y) ) \sigma ( \ell_{\theta^\star}(x,y) ) $, we have
\[
    V_{1,\theta^\star} = \cI(p_{\theta^\star}^{\otimes 2}) + \Delta^{\spe}_{\theta^\star} - H^{\lle}_{\theta^\star} \quad \text{and} \quad V_{2,\theta^\star} = \cI(p_{\theta^\star}^{\otimes 2}) + M_{2,\theta^\star} - \Delta^{\spe}_{\theta^\star} - R^{\lle}_{\theta^\star} 
\]
where $\Delta^{\spe}_{\theta} = \bE_{p_{\theta^\star}^{\otimes 2}}[\sigma(\ell_{\theta}) \sigma(-\ell_{\theta}) \nabla_{\theta} \ell_{\theta} ( \nabla_{\theta} \ell_{\theta})\transpose]$ as in Lemma~\ref{lem:sp_mle_asymptotic_variance}, and we define
\begin{align*}
    H^{\lle}_{\theta^\star} &=  \bE_{p_{\theta^\star}^{\otimes 2}}\left[ \sign(\ell_{\theta^{\star}}) \sigma (-| \ell_{\theta^\star}| )  \nabla_{\theta^\star}^2\ell_{\theta^\star} \right] \quad \text{,} \quad   M_{2, \theta^\star}  = \bE_{ p_{\theta^\star}^{\otimes 2}}\left[ \sigma (-| \ell_{\theta^\star} |)  \nabla_{\theta^\star}\ell_{\theta^\star}  (\nabla_{\theta^\star}\ell_{\theta^\star}) \transpose  \right] \quad \text{and} \\
    R^{\lle}_{\theta^\star}  & = - \bE_{p_{\theta^\star}^{\otimes 2}}\left[ \sign(\ell_{\theta^{\star}})  \sigma (-| \ell_{\theta^\star} |) \left(  \nabla_{\theta^\star}\log p^{\otimes 2}_{\theta^\star} (\nabla_{\theta^\star}\ell_{\theta^\star})\transpose  +   \nabla_{\theta^\star}\ell_{\theta^\star}  (\nabla_{\theta^\star}\log p^{\otimes 2}_{\theta^\star})\transpose  \right)  \right]\: .
\end{align*}
Using that $\cI(q_{\theta^\star, h_{\sto}}) = \cI(p_{\theta^\star}^{\otimes 2}) + \Delta^{\spe}_{\theta^\star}$ (Lemma~\ref{lem:sp_mle_asymptotic_variance}), $\lle$ is asymptotically better than $\spe$ if and only if $V_{1,\theta^\star}^{-1} V_{2,\theta^\star} V_{1,\theta^\star}^{-1} \prec \cI(q_{\theta^\star, h_{\sto}})^{-1}$.
This condition can be rewritten as
\begin{equation} \label{eq:general_asymp_superiority_cdt}
     \cI(p_{\theta^\star}^{\otimes 2}) + M_{2,\theta^\star} - \Delta^{\spe}_{\theta^\star} - R^{\lle}_{\theta^\star}  \prec \left(\cI(p_{\theta^\star}^{\otimes 2}) + \Delta^{\spe}_{\theta^\star} - H^{\lle}_{\theta^\star} \right) \left(\cI(q_{\theta^\star, h_{\sto}}) - H^{\lle}_{\theta^\star} \right) \cI(q_{\theta^\star, h_{\sto}})^{-1} \: .
\end{equation}
The condition~\eqref{eq:general_asymp_superiority_cdt} heavily depends on the geometry of $p_{\theta^\star}^{\otimes 2}$ and $\ell_{\theta^\star}$, hence it is unreasonable to assume in all generality. 

In the following, we consider the special case where $ H^{\lle}_{\theta^\star} = 0_{d \times d}$.
This occurs when $\theta \to \ell_{\theta}$ is linear, e.g., for $\cF_{\cN, \Sigma}$ and $\cF_{\lap, b}$ and preferences based on $r_{\theta} = \log p_{\theta}$. 
Then, the condition~\eqref{eq:general_asymp_superiority_cdt} rewrites as
\begin{equation*} \label{eq:weak_asymp_superiority_cdt}
    R^{\lle}_{\theta^\star}  + \Delta^{\lle}_{\theta^\star} \succ 0_{d \times d} \quad \text{with} \quad  \Delta^{\lle}_{\theta^\star} \eqdef  2\Delta^{\spe}_{\theta^\star}   -  M_{2,\theta^\star}  \: .
\end{equation*}
Using that $\min_{x \in \R}\sigma (|x|)  = 1/2$ achieved only at $x = 0$, we have directly that, for all $x \in \R^k$,
\begin{align*}
     x\transpose \Delta^{\lle}_{\theta^\star}x = \|x\|^2 \bE_{p_{\theta^\star}^{\otimes 2}}[(2\sigma(|\ell_{\theta^\star}|) -1 ) \sigma(-|\ell_{\theta^\star}|) \langle x/\|x\|, \nabla_{\theta^\star} \ell_{\theta^\star} \rangle^2 ] \ge 0 \: ,
\end{align*}
It is direct to see that this inequality is strict except if $\bP_{p_{\theta^\star}^{\otimes 2}} \left(\ell_{\theta^\star} \langle x/\|x\|, \nabla_{\theta^\star} \ell_{\theta^\star} \rangle = 0  \right) = 1$.
Therefore, $\Delta^{\lle}_{\theta^\star}$ is a positive definite matrix if $\bP_{p_{\theta^\star}^{\otimes 2}}( |\ell_{\theta^\star} \langle u, \nabla_{\theta^\star}  \ell_{\theta^\star} \rangle| > 0) > 0$ for all $u \in \cS_{k-1}$.
Then, a sufficient condition for the condition~\eqref{eq:general_asymp_superiority_cdt} to hold is that $R^{\lle}_{\theta^\star} $ is a p.s.d. matrix, i.e., $R^{\lle}_{\theta^\star}  \succeq 0_{d \times d}$.

In summary, we have derived sufficient conditions for $\lle$ to be asymptotically better than $\spe$, namely $H^{\lle}_{\theta^\star} = 0_{d \times d}$, $R^{\lle}_{\theta^\star}  \succeq 0_{d \times d}$ and $\bP_{p_{\theta^\star}^{\otimes 2}}( |\ell_{\theta^\star} \langle u, \nabla_{\theta^\star}  \ell_{\theta^\star} \rangle| > 0) > 0$ for all $u \in \cS_{k-1}$.
Note that this last condition is implied by Assumption~\ref{ass:informative_direction}.    
\end{proof}

\section{Proofs of Section~\ref{sec:beyondMestimators}}
\label{app:beyondMestimators}

\subsection{Proof of Lemma~\ref{lem:statistical_dominance_dp_over_so}}

\begin{proof}
    For Gaussian distributions, this is a direct consequence of the following facts: $\theta^\star \in \cC_n$, $\wh \theta^{\dpe}_{n} \in \argmin_{\theta \in \cC_n} \|\theta - \wh \theta^{\so}_{n} \|_{\Sigma}^2$ and $\cC_n$ is convex.
\end{proof}

\subsection{Proof of Lemma~\ref{lem:deviation_direction_proba}}

\begin{proof}
    Let $u \in \cS_{k-1}$.
    Let $\wt F_{\theta^\star, u}$ be the c.d.f. of $V_{\theta^\star,u}(X,Y)$ when $(X,Y) \sim (p_{\theta^\star}^{\otimes 2})_{\mid \cG_1(\theta^\star, u)}$, i.e., $p_{\theta^\star}^{\otimes 2}$ truncated to $\cG_1(\theta^\star, u)$.
    Then, $\wt F_{\theta^\star, u}(\epsilon) = F_{\theta^\star, u}(\epsilon)/\alpha_{\theta^\star,u}$.
    Let $\alpha_{\theta^\star,u} = \bP_{p_{\theta^\star}^{\otimes 2}}(\cG_1(\theta^\star,u))$ and $N_{\theta^\star, u} = \sum_{i \in [n]} \indi{(X_i,Y_i) \in \cG_1(\theta^\star, u)} \sim \text{Bin}(n, \alpha_{\theta^\star,u})$.
    Let $\wt R_{n,u} = \min_{i \in [n], (X_i,Y_i) \in  \cG_1(\theta^\star, u)} V_{\theta^\star,  u}(X_i,Y_i)$.
    Using the derivation in Section~\ref{ssec:upper_bound}, we have that $R_{n, u} \le \wt R_{n,u}$.
    Let $\epsilon > 0$.
    Conditioned on $N_{\theta^\star,  u}$, it is direct to see that
    \begin{align*}
        \bP( \wt R_{n,u} > \epsilon \mid N_{\theta^\star,  u}) &= 1 -(1 - (1 - \wt F_{\theta^\star,  u}(\epsilon)  )^{N_{\theta^\star,  u}} ) =  \left(1 - \wt F_{\theta^\star,  u}(\epsilon)  \right)^{N_{\theta^\star,  u}}  \: .
    \end{align*}
    Using that $N_{\theta^\star, u} \sim \text{Bin}(n, \alpha_{\theta^\star,u})$, $\bE_{X \sim \text{Bin}(n,p)}[s^{X}] = (1-p + p s)^n$ and $1-x \le \exp(-x)$, we obtain that
    \[
        \bP(  R_{n,u} > \epsilon ) \le \bP( \wt R_{n,u} > \epsilon ) \le  \left(1 - \alpha_{\theta^\star,u} \wt F_{\theta^\star,  u}(\epsilon)  \right)^{n}  \le \exp\left( - n  F_{\theta^\star,  u}(\epsilon) \right)  \: .
    \]
    Taking $\epsilon = F_{\theta^\star, u}^{-1} \left( \min \left\{ 1,\log(1/\delta) / n\right\} \right)$ concludes the proof.
\end{proof}

\subsection{Proof of Theorem~\ref{thm:upper_bound}}

\begin{proof}
    For all $u \in \cS_{k-1}$, let $(A_{\theta^\star}, B_{\theta^\star}, C_{\theta^\star})$ defined as in Theorem~\ref{thm:upper_bound}.
    Since $\cC_{n} \subseteq \wt \cC_{n}$ under Assumption~\ref{ass:linearization}, we obtain that $\max_{\theta \in \cC_n}\|\theta - \theta^\star\| \le \max_{\theta \in \wt \cC_n} \|\theta - \theta^\star\|$.
    
    \textbf{Case $k = 1$.}
    Since $|\cS_{0}| = 2$, using Lemma~\ref{lem:deviation_direction_proba} with a union bound yield that, with probability at least $1-\delta$,
    \[
        \max_{\theta \in \wt \cC_n} \|\theta - \theta^\star\| \le \max_{u \in \cS_{0}}  R_{n,u} \le \max_{u \in \cS_{0}} F_{\theta^\star, u}^{-1} \left( \min\{1, \log(2/\delta) / n \} \right) \: .
    \]
    Under Assumption~\ref{ass:cdf_lower_bound}, for $n \ge B_{\theta^\star}\log(2/\delta)$, we can conclude the proof since 
    \[
       n\max_{\theta \in \cC_n}\|\theta - \theta^\star\| \le n \max_{\theta \in \wt \cC_n} \|\theta - \theta^\star\| \le   A_{\theta^\star} \log(2/\delta)  + C_{\theta^\star} \log(2/\delta)^2 / n  \: .
    \]
    \textbf{Case $k > 1$.}
    Let $N(\gamma)$ be the $\gamma$-covering number of $\Theta$ for the norm $\|\cdot\|$.
    Let $\{\theta_{j}\}_{j \in [N(\gamma)] }$ be such a $\gamma$-covering.
    For all $j \in [N(\gamma)]$, let $\epsilon_j = \|\theta_{j} - \theta^\star\|$ and $u_j = (\theta_{j} - \theta^\star)/\epsilon_j$.
    Using triangular inequality, we obtain
    \[
       \max_{\theta \in \wt \cC_n} \|\theta - \theta^\star\| \le \gamma + \max_{j \in [N(\gamma)], \: \theta_{j} \in \wt \cC_n} \|\theta_{j} - \theta^\star\|  \le \gamma + \max_{j \in [N(\gamma)]} \indi{ \theta^\star + \epsilon_j u_{j} \in \wt \cC_n} \epsilon_j \le \gamma + \max_{j \in [N(\gamma)]} R_{n,u_j} \: .
    \]
    Using Lemma~\ref{lem:deviation_direction_proba} with a union bound yield that, with probability at least $1-\delta$,
    \begin{align*}
      n\max_{\theta \in \cC_n} \|\theta - \theta^\star\| &\le n\gamma + \max_{j \in [N(\gamma)]}  n F_{\theta^\star, u_j}^{-1} \left(\log(N(\gamma)/\delta)/ n \right) \le n \gamma +   A_{\theta^\star} \log(N(\gamma)/\delta)  +   C_{\theta^\star} \log(N(\gamma)/\delta)^2/ n   \: .        
    \end{align*}
    where the last inequality relies on Assumption~\ref{ass:cdf_lower_bound} for $n \ge B_{\theta^\star}\log(N(\gamma)/\delta)$.
\end{proof}

\section{Proofs of Section~\ref{sec:lower_bound}}
\label{app:lower_bound}

\subsection{Proof of Lemma~\ref{lem:HellingerDistance}}

\begin{proof}

It is direct to see that
\[
    \sqrt{q_{\theta^\star}(x,y,z)q_{\theta}(x,y,z)} = \begin{cases}
        0 & \text{if } (x,y) \in \cD(\theta^\star,\theta) \cup \left( \cG_0(\theta^\star)^{\complement} \triangle \cG_0(\theta)^{\complement} \right) \\
        \sqrt{p_{\theta^\star}(x) p_{\theta^\star}(y) p_{\theta}(x) p_{\theta}(y)}  & \text{otherwise} 
    \end{cases} \: .
\]
Therefore, we have
\[
    \BC(q_{\theta^\star},q_{\theta}) = \int_{(x,y) \notin \cD(\theta^\star,\theta) \cup \left( \cG_0(\theta^\star)^{\complement} \triangle \cG_0(\theta)^{\complement} \right) } \sqrt{p_{\theta^\star}(x) p_{\theta^\star}(y) p_{\theta}(x) p_{\theta}(y)} \mathrm{d}x \mathrm{d}y  = \BC(p_{\theta^\star}^{\otimes 2},p_{\theta}^{\otimes 2}) - \wt \BC(\theta^\star, \theta) \: .
\]
Using that $\mH(\bP , \bQ) = 1 - \BC(\bP , \bQ)$, we conclude the proof.
\end{proof}

\subsection{Proof of Theorem~\ref{thm:lower_bound}}

    Consider the hypercube \( \Theta' = \{ \theta_b = \delta b : b\in \{0,1\}^d \} \subseteq \Theta \). Note that \( \norm{\theta_b - \theta_{b'}} \geq \frac{\delta}{\sqrt{k}} d_H(b, b')\), where \( d_H(b, b')\) denotes the Hamming distance between \(b\) and \(b'\). Then using Assouad’s lemma we have
\[
   R_{\max}  \geq \frac{\delta \sqrt{k}}{4} \left( 1 - \max_{d_H(b, b') =1} TV(q_{\theta_b}^{\otimes n}, q^{\otimes n}_{{\theta_{b'}}})\right).
\]

Upper bounding $\TV$ with $\mH$, Lemma \ref{lem:HellingerDistance} yields
\[
    TV(q_{\theta_b}^{\otimes n}, q^{\otimes n}_{{\theta_{b'}}}) \leq \sqrt{n (\wt \BC(\theta_b , \theta_{b'})  +  \mH(p_{\theta_b}^{\otimes 2}, p_{\theta_{b'}}^{\otimes 2})) }.
\]
Then, Assumption \ref{ass:Hellinger_control} implies 
\begin{align}
     R_{\max}  &\geq \frac{\delta \sqrt{k}}{4} \left( 1 -  \sqrt{n \left(  \frac{c_1 \delta }{\alpha_{\mathcal{F}}(k) } +  2 c_2 \delta^2 \right) }\right). \notag
\end{align}
Picking \(\delta = \frac{1}{2(c_1 +2 c_2)} \min \{ \frac{\alpha_{\mathcal{F}}(k)}{n}, \frac{1}{\sqrt{n}} \}\) ensures that the term in parenthesis is always greater than \(1/2\), hence
\[
    R_{\max} \geq \frac{\sqrt{k}}{8(c_1 +2 c_2)} \min \left\{ \frac{\alpha_{\mathcal{F}}(k)}{n}, \frac{1}{\sqrt{n}} \right\}. 
\]

\section{Multivariate Gaussian with Known Covariance}
\label{app:gaussian}

In the following, $\theta = \Sigma^{-1} \mu$ denote the natural parameter of multivariate Gaussian with known covariance matrix $\Sigma$.
We have $\cX = \R^d$ and $k = d$.
Let $\theta \in \Theta$ and $u \in \cS_{d-1}$ for the norm $\|\cdot\|_{\Sigma}$, i.e., $\|u\|_{\Sigma}=1$.
Let $\cS_{2, d-1} = \{x \in \R^d \mid \|u\|_{2}=1\}$.
It is direct to see that
\begin{align*}
    &\ell_{\theta}(x,y) = \log \frac{p_{\theta}(x)}{p_{\theta}(y)} = \langle x-y, \theta -  \Sigma^{-1}(x+y)/2  \rangle \quad \text{and} \quad \nabla_{\theta^\star} \ell_{\theta^\star}(x,y) = x-y \: .
\end{align*}
Therefore, we have 
\begin{align*}
    & \cG_0(\theta^\star) = \{(x,y) \in (\R^d)^2 \mid | \langle x-y, \theta^\star - (x+y)/2  \rangle | > 0\} \: , \\
    & \cG_1(\theta^\star) = \{(x,y) \in  \cG_0(\theta^\star) \mid \|x-y\| > 0  \} \: , \\
    & \cD(\theta^\star, \theta)  = \left\{(x,y) \in (\R^d)^2 \mid \langle x-y, \theta^\star - \Sigma^{-1}(x+y)/2  \rangle^2 +  \langle x-y, \theta^\star - \Sigma^{-1}(x+y)/2  \rangle \langle \theta - \theta^\star, x-y \rangle   < 0 \right\} \: , \\
    & \cG_1(\theta^\star,u) = \{(x,y) \in (\R^d)^2 \mid \langle x-y, \theta^\star - \Sigma^{-1}(x+y)/2  \rangle \langle u, x-y \rangle < 0\} \: , \\
    &\forall (x,y) \in \cG_1(\theta^\star,u), \quad V_{\theta^\star,u}(x,y) = \frac{ \langle x-y,\Sigma^{-1}(x+y)/2 - \theta^\star  \rangle}{\langle u, x-y \rangle} \: .
\end{align*}

\paragraph{Proof that $\bP_{p_{\theta^\star}^{\otimes 2}}(\cG_1(\theta^\star)) > 0$.}
It is direct to see that $\text{dim}(\cG_0(\theta^\star)^{\complement}) < 2d$ and $\text{dim}(\cG_0(\theta^\star) \setminus \cG_1(\theta^\star)) < 2d$.
Given that $p_{\theta^\star}^{\otimes 2}$ is a continuous distribution on $(\R^d)^2$, we obtain that $\bP_{p_{\theta^\star}^{\otimes 2}}(\cG_1(\theta^\star)) = \bP_{p_{\theta^\star}^{\otimes 2}}(\cG_0(\theta^\star)) = 1$. 

\paragraph{Condition in Lemma~\ref{lem:sp_mle_asymptotic_variance}.}
The condition of Lemma~\ref{lem:sp_mle_asymptotic_variance} is implied by Assumption~\ref{ass:informative_direction}, hence we refer to the proof of this result below.
Therefore, we have $\cI(q_{\theta^\star, h_{\sto}}) \succ \cI(p_{\theta^\star}^{\otimes 2})$.

\paragraph{Consistency of $\lle$.}
To study $\lle$ for $\cF_{\cN,\Sigma}$, we use the change of variable $D = \Sigma^{-1/2}(X - Y)/\sqrt{2}$ and $S = \sqrt{2}\Sigma^{-1/2}(\Sigma\theta^\star -  (X+Y)/2)$.
Then, we have $(D,S) \sim \cN(0_{2d},I_{2d})$ and
\begin{align*}
    \ell_{\theta^{\star}}(X,Y) = \langle S, D\rangle \quad , \quad  \nabla_{\theta^\star}\ell_{\theta^\star}(X,Y) = \sqrt{2}\Sigma^{1/2} D \quad , \quad \nabla_{\theta^\star}\log p^{\otimes 2}_{\theta^\star}(X,Y)  = 2\Sigma \theta^\star - \sqrt{2}  \Sigma^{1/2} S \: .
\end{align*}
Let $M(D,S) = \sign(\langle D,S\rangle)  \sigma (-|\langle D,S\rangle|) D $. Then, $M(-D,-S)= -M(D,S)$ for all $(D,S) \in \R^{2d}$.
By integration of an odd function with respect to $0_{d}$ with a symmetric distribution around $0_{2d}$, we obtain $\bE_{(D,S) \sim \cN(0_{2d},I_{2d})}\left[ M(D,S) \right] = 0_{d}$.
Therefore, the condition~\eqref{eq:consistency_lle} is satisfied and the $\lle$ is a consistent estimator.

\paragraph{Asymptotic variance of $\lle$.}
Let $H^{\lle}_{\theta^\star}$ and $R^{\lle}_{\theta^\star} $ defined in Lemma~\ref{lem:asymptotic_variance_lle}.
Since $ \ell_{\theta}(x,y) = \langle x-y, \theta - (x+y)/2  \rangle$ is linear in $\theta$, we have $\nabla_{\theta^\star}^2  \ell_{\theta^\star} = 0_{d \times d}$ and $H^{\lle}_{\theta^\star} = 0_{d \times d}$.
The condition $\bP_{p_{\theta^\star}^{\otimes 2}}( |\ell_{\theta^\star} \langle u, \nabla_{\theta^\star}  \ell_{\theta^\star} \rangle| > 0) > 0$ for all $u \in \cS_{d-1}$ is implied by Assumption~\ref{ass:informative_direction}, hence we refer to the proof of this result below.
Then, the condition $R^{\lle}_{\theta^\star}  \succeq 0_{d \times d}$ is equivalent to $M_{3} \succeq 0_{d \times d}$ where
\[
    M_{3} = \bE_{(D,S) \sim \cN(0_{2d},I_{2d})}\left[  \sign( \langle D , S   \rangle ) \sigma\left(-|\langle D ,  S  \rangle| \right) (D S\transpose + S D\transpose) \right] \: .
\]
When $d = 1$, we have $M_{3} = 2\bE_{(D,S) \sim \cN(0_{2},I_{2})}\left[   \sigma\left(-| D ,  S | \right) |DS|  \right] > 0$.
When $d > 1$, for all $u \in \cS_{d-1}$, we have
\begin{align*}
    u \transpose M_{3} u = 2 \bE_{(D,S) \sim \cN(0_{2d},I_{2d})}\left[  \sign( \langle D , S   \rangle ) \sigma\left(-|\langle D ,  S  \rangle| \right) \langle u, D \rangle \langle u ,S\rangle \right] \: ,
\end{align*}
By rotational symmetry of $\cN(0_{2d},I_{2d})$ and the function to be integrated, showing that $\min_{u \in \cS_{d-1} } u \transpose M_{3} u \ge 0$ is equivalent to showing that $e_1 \transpose M_{3} e_1 \ge 0$, i.e.,
\[
    \bE_{(D,S) \sim \cN(0_{2d},I_{2d})}\left[  \sign( \langle D , S   \rangle ) \sigma\left(-|\langle D ,  S  \rangle| \right) D_1 S_1 \right] \: .
\]
By symmetry, we conjecture that $\Delta^{\spe}_{\cN(0_d,I_d)}$, $\Delta^{\lle}_{\cN(0_d,I_d)}$ and $R^{\lle}_{\cN(0_d,I_d)}$ are of the form $\alpha_d I_d$ where $\alpha_d > 0$ is decreasing in $d$.
Figure~\ref{fig:GaussianMatrices} validates this conjecture numerically.

Using the sufficient condition derived in Appendix~\ref{app:deterministic_prefMestimators}, we have shown that $\lle$ is asymptotically better than $\spe$.

\begin{figure}[t]
    \centering
    \includegraphics[width=0.48\linewidth]{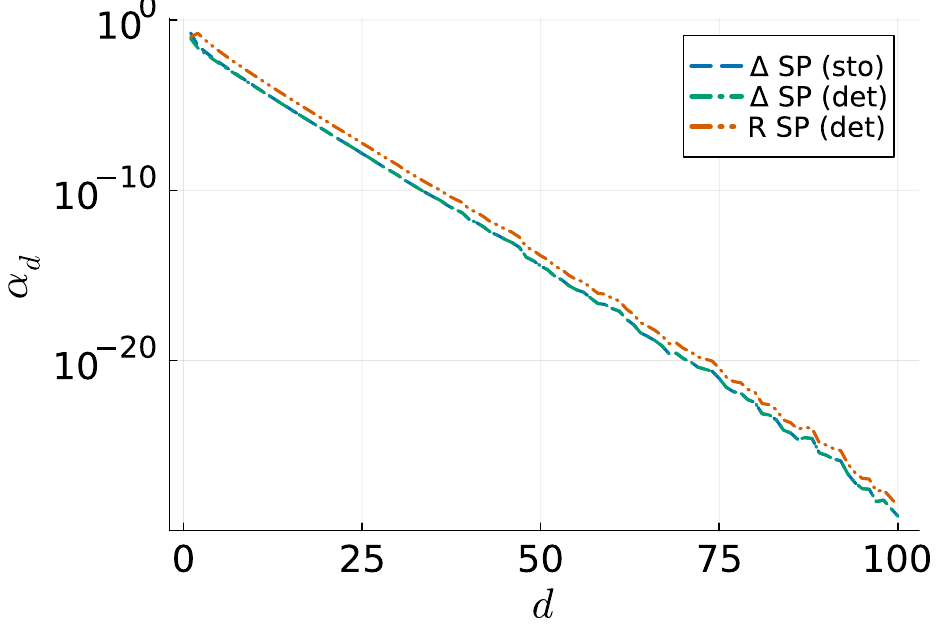}
    \includegraphics[width=0.48\linewidth]{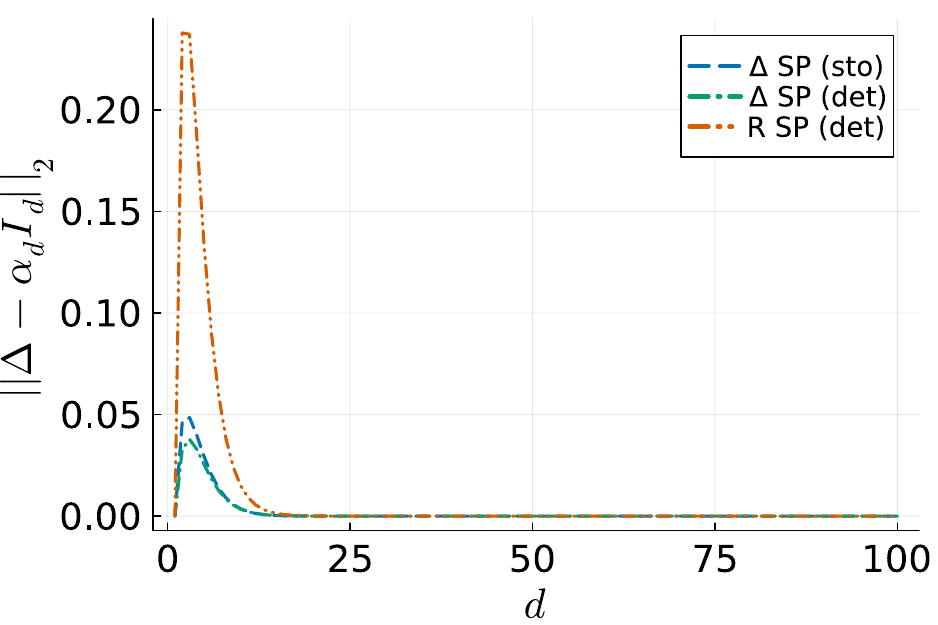}
    \caption{Approximations of $\Delta^{\spe}_{\cN(0_d,I_d)}$, $\Delta^{\lle}_{\cN(0_d,I_d)}$ and $R^{\lle}_{\cN(0_d,I_d)}$ by (a) $\alpha_d I_d$ and (b) associated error for varying $d$. $N_{\text{runs}} = 10^6$.}
    \label{fig:GaussianMatrices}
\end{figure}

\paragraph{Proof of Assumption~\ref{ass:linearization}.}
Since $ \ell_{\theta}(x,y) = \langle x-y, \theta - (x+y)/2  \rangle$ is linear in $\theta$, we have $\cD(\theta^\star, \theta) = \wt \cD(\theta^\star, \theta)$.

\paragraph{Proof of Assumption~\ref{ass:informative_direction}}
For $(X,Y) \sim p_{\theta^\star}^{\otimes 2}$, let $D = \Sigma^{-1/2}(X - Y)/\sqrt{2}$ and $S = \sqrt{2}\Sigma^{-1/2}\left( (X+Y)/2 - \Sigma \theta^\star\right) $.
Then, we have $(D,S) \sim \cN(0_{2d},I_{2d})$.
Defining $U = D/\|D\|$, we have $U \sim \cU(\cS_{2, d-1})$ is independent of $S$.
Since $U = D/\|D\| \sim \cU(\cS_{2, d-1})$ and $\Sigma^{-1}(x+y)/2 - \theta^\star = \Sigma^{-1/2} S /\sqrt{2}$, we obtain
\begin{align*}
    &\bP_{(X,Y) \sim p_{\theta^\star}^{\otimes 2}}((X,Y) \in \cG_1(\theta^\star,u)) = \bP_{(U,S) \sim \cU(\cS_{2, d-1}) \otimes\cN(0_{d},I_{d})} \left( \langle U,    S \rangle \langle u,  \Sigma^{1/2} U \rangle > 0 \right) \\
    &= \bP_{U \sim \cU(\cS_{2, d-1})} \left(\langle \Sigma^{1/2} u, U \rangle > 0\right) /2+ \bP_{U \sim \cU(\cS_{2, d-1})} \left(\langle \Sigma^{1/2} u, U \rangle < 0\right)/2 = 1/2 \: .
\end{align*}
where we used that, conditioned on $U$, $\langle U,  S  \rangle \sim \cN(0,1)$ and $\bP_{X \sim \cN(0,1)}(X < 0) = \bP_{X \sim \cN(0,1)}(X > 0) =1/2$.
The last equality uses that $\bP_{U \sim \cU(\cS_{2, d-1})} \left(\langle \Sigma^{1/2} u, U \rangle > 0\right) = \bP_{U \sim \cU(\cS_{2, d-1})} \left(\langle \Sigma^{1/2} u, U \rangle < 0\right) = 1/2$ by symmetry of the uniform distribution.
Therefore, we have shown that $\bP_{p_{\theta^\star}^{\otimes 2}}(\cG_1(\theta^\star,u)) = 1/2$ for all $u \in \cS_{2, d-1}$.

\paragraph{Proof of Assumption~\ref{ass:cdf_lower_bound}.}
Let us define $v = \Sigma^{1/2} u$, hence $v \in \cU(\cS_{2, d-1})$.
Let $\Phi$ denote the c.d.f. of $\cN(0,1)$ and $\text{erf}(x) = 2\Phi(x\sqrt{2}) - 1$ be the error function.
Let $\epsilon > 0$.
Similarly as above, we obtain that
\begin{align*}
     F_{\theta^\star,u}(\epsilon) &= \bP_{(X,Y) \sim p_{\theta^\star}^{\otimes 2}}(0 < V_{\theta^\star,u}(X,Y) \le \epsilon) \\
    &=  \bP_{(U,S) \sim \cU(\cS_{2, d-1}) \otimes\cN(0_{d},I_{d})} \left( 0 < \frac{ \langle  U, S  \rangle}{\langle  v, U \rangle} \le \sqrt{2}  \epsilon\right) \\
    &= \frac{1}{2}\bE_{U \sim \cU(\cS_{2, d-1}) } \left[ 2 \Phi\left(  \sqrt{2}  \epsilon |\langle v, U \rangle|\right) - 1 \right] = \frac{1}{2}\bE_{U \sim \cU(\cS_{2, d-1}) } \left[\text{erf}\left(\epsilon |\langle  v, U \rangle|\right)\right] \: .
\end{align*}
where we use conditioning by $U$ as above. 
By change of variable, we obtain that 
\begin{align*}
     F_{\theta^\star,u}(\epsilon) &= \frac{1}{\sqrt{\pi}} \bE_{U \sim \cU(\cS_{2, d-1}) } \left[ \int_{0}^{\epsilon |\langle  v, U \rangle|}  e^{-t^2}\mathrm{d} t \right] = \frac{\epsilon }{\sqrt{\pi}} \bE_{U \sim \cU(\cS_{2, d-1}) } \left[ |\langle v, U \rangle| \int_{0}^{1}  e^{-x^2\epsilon^2 \langle  v, U \rangle^2}\mathrm{d} x \right] \: .
\end{align*}
 Using that $1-x^2\le e^{-x^2} \le 1$, we obtain that
 \begin{align*}
      0 \ge \frac{\sqrt{\pi}}{\epsilon} F_{\theta^\star,u}(\epsilon) - \bE_{U \sim \cU(\cS_{2, d-1}) } \left[ |\langle  v, U \rangle| \right] &\ge - \frac{\epsilon^2}{3} \bE_{U \sim \cU(\cS_{2, d-1}) } \left[ |\langle v, U \rangle|^3 \right] \: .
 \end{align*}
Using that 
\[
    \int_{0}^{1} x (-2x\epsilon^2 \langle v, U \rangle^2) e^{-x^2\epsilon^2 \langle v, U \rangle^2}\mathrm{d} x  =   e^{-\epsilon^2 \langle  v, U \rangle^2}  - \int_{0}^{1}  e^{-x^2\epsilon^2 \langle  v, U \rangle^2}\mathrm{d} x \: ,
\]
we obtain
\begin{align*}
     F_{\theta^\star,u}'(\epsilon) &=   \frac{1}{\sqrt{\pi}} \bE_{U \sim \cU(\cS_{2, d-1}) } \left[ |\langle  v, U \rangle|   e^{-\epsilon^2 \langle  v, U \rangle^2}  \right] \quad \text{and} \quad  F_{\theta^\star,u}''(\epsilon) =  \frac{2\epsilon}{\sqrt{\pi}} \bE_{U \sim \cU(\cS_{2, d-1}) } \left[ |\langle  v, U \rangle|^3   e^{-\epsilon^2 \langle  v, U \rangle^2}  \right] \: .
\end{align*}
Therefore, using Lemma~\ref{lem:mean_absolute_value_marginal_Unif}, we have 
\[
    F_{\theta^\star,u}'(0) =   \frac{1}{\sqrt{\pi}} \bE_{U \sim \cU(\cS_{2, d-1}) } \left[ |\langle  v, U \rangle|  \right] =  \frac{2}{d-1} \frac{\Gamma(d/2)} {\pi \Gamma ((d-1)/2)} =_{d \to +\infty} \cO(1/\sqrt{d})
\]
Let us define
\[
    \epsilon_{\theta^\star,u} =  \sqrt{ \frac{\bE_{U \sim \cU(\cS_{2, d-1}) } \left[ |\langle  v, U \rangle| \right]}{2\bE_{U \sim \cU(\cS_{2, d-1}) } \left[  |\langle  v, U \rangle|^3 \right] }} \quad \text{and} \quad M_{\theta^\star,u} = 4 \pi  \sqrt{\frac{\bE_{U \sim \cU(\cS_{2, d-1}) } \left[ |\langle  v, U \rangle|^3   \right]}{\bE_{U \sim \cU(\cS_{2, d-1}) } \left[ |\langle  v, U \rangle| \right]^5}}  \: .
\]
Then, for all $\epsilon \in (0,\epsilon_{\theta^\star,u} ]$, we obtain that
\begin{align*}
      \frac{F_{\theta^\star,u}''(\epsilon)}{ F_{\theta^\star,u}'(\epsilon)^3 } &=  \frac{\pi \epsilon} {2} \frac{\bE_{U \sim \cU(\cS_{2, d-1}) } \left[ |\langle  v, U \rangle|^3   e^{-\epsilon^2 \langle  v, U \rangle^2}  \right]}{\bE_{U \sim \cU(\cS_{2, d-1}) } \left[ |\langle v, U \rangle|   e^{-\epsilon^2 \langle  v, U \rangle^2}  \right]^3}   \\
    &\le  \frac{\pi \epsilon} {2} \frac{\bE_{U \sim \cU(\cS_{2, d-1}) } \left[ |\langle  v, U \rangle|^3   \right]}{\left(\bE_{U \sim \cU(\cS_{2, d-1}) } \left[ |\langle v, U \rangle| \right] - \epsilon^2 \bE_{U \sim \cU(\cS_{2, d-1}) } \left[  |\langle  v, U \rangle|^3 \right] \right)^3} \le M_{\theta^\star,u}  \: . \\
\end{align*}
Since we have $(F_{\theta^\star,u}^{-1})''(x) = -\frac{F_{\theta^\star,u}''(F_{\theta^\star,u}^{-1}(x))}{F_{\theta^\star,u}'(F_{\theta^\star,u}^{-1}(x))^3}$, we obtain
\[
    \sup_{x \in (0,x_{\theta^\star,u}]} |(F_{\theta^\star,u}^{-1})''(x)| \le M_{\theta^\star,u}  \quad \text{where} \quad x_{\theta^\star,u} = F_{\theta^\star,u}(\epsilon_{\theta^\star,u}) \le  \sqrt{ \frac{\bE_{U \sim \cU(\cS_{2, d-1}) } \left[ |\langle  \Sigma^{1/2} u, U \rangle| \right]^3}{2\pi\bE_{U \sim \cU(\cS_{2, d-1}) } \left[  |\langle  \Sigma^{1/2} u, U \rangle|^3 \right] }}  \: .
\]

\paragraph{Proof of Assumption~\ref{ass:identifiability}.}
Let $\epsilon = \|\theta^\star -  \theta\|$ and $u = (\theta^\star -  \theta)/\epsilon$.
Then, we have 
\begin{align*}
     \bP_{p_{\theta^\star}^{\otimes 2}} \left( \cD(\theta^\star, \theta) \right) =  \bP_{p_{\theta^\star}^{\otimes 2}} \left( \cD(\theta^\star, \theta^\star + \epsilon u) \right) \ge \bP_{p_{\theta^\star}^{\otimes 2}} \left( \cD(\theta^\star, \theta^\star + \epsilon u) \cap \cG_1(\theta^\star,u) \right) = \bP_{(X,Y) \sim p_{\theta^\star}^{\otimes 2}}(0 < V_{\theta^\star,u}(X,Y) < \epsilon)
\end{align*}
Using the above computation, we obtain that $\bP_{(X,Y) \sim p_{\theta^\star}^{\otimes 2}}(0 < V_{\theta^\star,u}(X,Y) < \epsilon) > 0$, hence $ \bP_{p_{\theta^\star}^{\otimes 2}} \left( \cD(\theta^\star, \theta) \right) > 0$.

\paragraph{Proof of Assumption~\ref{ass:Hellinger_control}.}
Using that $1-e^{-x} \le x$, we obtain 
\[
    \mH(p_{\theta^\star}, p_{\theta}) = 1 - \exp\left(- \frac{1}{8} \|\theta^\star - \theta\|_{\Sigma}^2 \right) \le \frac{1}{8} \|\theta^\star - \theta\|^2_{\Sigma} \: .
\]
First, we notice that $\text{dim}\left(\cG_0(\theta^\star)^{\complement} \triangle \cG_0(\theta)^{\complement}\right) < 2d$, hence we can show that
\begin{align*}
    \int_{(x,y) \in \cG_0(\theta^\star)^{\complement} \triangle \cG_0(\theta)^{\complement}} \sqrt{p_{\theta^\star}(x) p_{\theta^\star}(y) p_{\theta}(x) p_{\theta}(y)} \mathrm{d}x \mathrm{d}y  = 0 \: .
\end{align*}
Second, we see that
\begin{align*}
    &\|x - \Sigma\theta^\star\|^2_{\Sigma^{-1}} + \|y - \Sigma\theta^\star\|^2_{\Sigma^{-1}} + \|x - \Sigma\theta\|^2_{\Sigma^{-1}} + \|y - \Sigma\theta\|^2_{\Sigma^{-1}}\\
    &\qquad \qquad =  \|x - y\|^2_{\Sigma^{-1}} + \|\theta - \theta^\star\|^2_{\Sigma} + \left\| x + y  - \Sigma (\theta^\star+ \theta)\right\|^2_{\Sigma^{-1}} \: , \\
    &\cD(\theta^\star, \theta)  = \left\{(x,y) \in (\R^d)^2 \mid \langle x-y, \theta^\star - \Sigma^{-1}(x+y)/2  \rangle^2 +  \langle x-y, \theta^\star - \Sigma^{-1}(x+y)/2  \rangle \langle \theta - \theta^\star, x-y \rangle   < 0 \right\} \: ,
\end{align*}
Then, we consider the change of variable $u = \Sigma^{-1/2}(x - y)$ and $v = \Sigma^{-1/2}(x + y)$, whose Jacobian has $\text{det}(\Sigma)2^{-d}$ as absolute value of its determinant.
Therefore, we obtain
\begin{align*}
    e^{\frac{1}{4}\|\theta - \theta^\star\|^2_{\Sigma}}\wt \BC(\theta^\star, \theta) &= \frac{1}{(4\pi)^d}\int_{(u,v)} \indi{ 0 < - \frac{\langle u ,  \Sigma^{1/2}\theta^\star - v /2  \rangle}{\langle \Sigma^{1/2} (\theta - \theta^\star), u \rangle} < 1} e^{-\frac{1}{4}\|u\|^2 - \frac{1}{4}\|v - \Sigma^{1/2}(\theta + \theta^\star)\|^2} \mathrm{d}u \mathrm{d}v  \\
    &= \frac{1}{(2\pi)^d}\int_{(\tilde u, \tilde v)} \indi{  \left| \frac{\langle \tilde u ,  \tilde v\rangle}{\langle \Sigma^{1/2} (\theta - \theta^\star), \tilde u \rangle} \right| < \sqrt{2}} e^{-\frac{1}{2}\|\tilde u\|^2 - \frac{1}{2}\|\tilde v\|^2} \mathrm{d}\tilde u \mathrm{d} \tilde v \\
    &= \bP_{(X,Y) \sim \cN(0_d,I_d)^{\otimes 2}} \left(  \left| \frac{\langle X ,  Y\rangle}{\langle \Sigma^{1/2} (\theta - \theta^\star), X \rangle} \right| < \sqrt{2}\right) \\
    &=\bE_{U \sim \cU(\cS_{2, d-1})} \left[\text{erf}\left( |\langle \Sigma^{1/2} (\theta - \theta^\star), U \rangle| \right)  \right] = 2 F_{\theta^\star,u}(\epsilon)
\end{align*}
where the second equality uses the change of variable $\tilde u = u/\sqrt{2}$ and $\tilde v = (v - \Sigma^{1/2}(\theta + \theta^\star))/\sqrt{2}$, whose Jacobian has determinant $2^d$.
The third and the fourth re-uses computation done previously with $\epsilon =\|\theta - \theta^\star\|_{\Sigma}$ and $u=(\theta - \theta^\star)/\epsilon$.
Using Lemma~\ref{lem:mean_absolute_value_marginal_Unif} and the above upper bound on $F_{\theta^\star,u}(\epsilon)$, we obtain
\[
    \wt \BC(\theta^\star, \theta) = 2 e^{- \epsilon^2/4}  F_{\theta^\star,u}(\epsilon) \le   \frac{4}{d-1} \frac{\Gamma(d/2)} {\pi \Gamma ((d-1)/2)}  \|\theta^\star - \theta\|_{\Sigma}     \: .
\]

\begin{lemma} \label{lem:mean_absolute_value_marginal_Unif}
    Let $\Gamma$ be the $\Gamma$ function. Then, 
    \[
    \forall u \in \cS_{2, d-1}, \quad \bE_{U \sim \cU(\cS_{2, d-1}) } \left[ |\langle u, U \rangle| \right] = \frac{2}{d-1} \frac{\Gamma(d/2)} {\sqrt{\pi} \Gamma ((d-1)/2)} =_{d \to +\infty} \cO(1/\sqrt{d}) \: .
    \]
\end{lemma}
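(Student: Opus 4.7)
}
The plan is to reduce the problem to a one-dimensional integral via rotational invariance, then evaluate it explicitly and extract the asymptotic.

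First, I would invoke the orthogonal invariance of the uniform measure on $\cS_{2,d-1}$: for any orthogonal transformation $O$, $OU$ has the same law as $U$. Choosing $O$ so that $Ou = e_1$ gives $\bE[|\langle u,U\rangle|] = \bE[|U_1|]$, so it suffices to compute the first moment of $|U_1|$. The marginal density of $U_1$ under $\cU(\cS_{2,d-1})$ is the classical formula
\[
f_{U_1}(x) = \frac{\Gamma(d/2)}{\sqrt{\pi}\,\Gamma((d-1)/2)}(1-x^2)^{(d-3)/2}, \qquad x \in [-1,1] \: ,
\]
which can be derived either by projecting the spherical surface measure in polar coordinates or by writing $U = Z/\|Z\|_2$ for $Z \sim \cN(0_d,I_d)$ and integrating out $Z_2,\dots,Z_d$.

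Second, I would compute
\[
\bE[|U_1|] = 2\int_0^1 x f_{U_1}(x)\,\mathrm{d}x = \frac{2\Gamma(d/2)}{\sqrt{\pi}\,\Gamma((d-1)/2)} \int_0^1 x(1-x^2)^{(d-3)/2}\,\mathrm{d}x \: .
\]
The substitution $t = 1-x^2$ turns the remaining integral into $\tfrac12 \int_0^1 t^{(d-3)/2}\,\mathrm{d}t = 1/(d-1)$, yielding the claimed closed form.

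Third, for the asymptotic, I would apply the standard Gamma ratio estimate $\Gamma(d/2)/\Gamma((d-1)/2) \sim \sqrt{d/2}$ as $d \to +\infty$ (a consequence of Stirling, or of the duplication formula), which gives $\bE[|U_1|] \sim \sqrt{2/(\pi d)} = \cO(1/\sqrt d)$. None of the steps present any real obstacle: the main thing to keep track of is the normalization constant of the marginal density, which is fixed by requiring the density integrates to $1$, and this can be cross-checked against $\int_{-1}^1 (1-x^2)^{(d-3)/2}\,\mathrm{d}x = B(1/2,(d-1)/2) = \sqrt{\pi}\,\Gamma((d-1)/2)/\Gamma(d/2)$.
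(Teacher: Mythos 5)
Your proposal is correct and follows essentially the same route as the paper's proof: reduce to $\bE[|U_1|]$ by rotational invariance, use the standard marginal density $f_{U_1}(x) \propto (1-x^2)^{(d-3)/2}$, evaluate the integral by the substitution $t=1-x^2$ to get $1/(d-1)$, and conclude the $\cO(1/\sqrt{d})$ asymptotic from the Gamma ratio. No gaps.
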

\begin{proof}
    Due to rotational symmetry of the distribution, for any unit vector \(u\), 
\[
    \bE_{U \sim \cU(\cS_{2, d-1}) } \left[ |\langle u, U \rangle| \right] = \bE_{U \sim \cU(\cS_{2, d-1}) } \left[ |\langle  e_1, U \rangle| \right] = \bE_{U \sim \cU(\cS_{2, d-1}) } \left[ |U_1| \right].
\]
The density of \(U_1\) is given by
\[
  f_{U_{1}}(x) 
  \;=\;
  \frac{\Gamma\!\bigl(\tfrac{d}{2}\bigr)}{\sqrt{\pi}\,\Gamma\!\bigl(\tfrac{d-1}{2}\bigr)}
  \;(1 - x^{2})^{\tfrac{d-3}{2}},
  \quad x \in [-1,1],
\]
and the expectation can be computed as 
\begin{align}
    \mathbb{E}\bigl[|U_{1}|\bigr]
    \;&=\; \int_{-1}^{1} |x|\;f_{U_{1}}(x)\,\mathrm{d}x
    \;=\; 2 \int_{0}^{1} x\;\frac{\Gamma\!\bigl(\tfrac{d}{2}\bigr)}{\sqrt{\pi}\,\Gamma\!\bigl(\tfrac{d-1}{2}\bigr)}
    \,(1 - x^{2})^{\tfrac{d-3}{2}}
    \,\mathrm{d}x = \frac{2 \Gamma\!\bigl(\tfrac{d}{2}\bigr)}{\sqrt{\pi}\,\Gamma\!\bigl(\tfrac{d-1}{2}\bigr)} \int_{0}^{1}
    x\,(1 - x^{2})^{\tfrac{d-3}{2}}
    \,\mathrm{d}x \notag \\
    &= \frac{\Gamma\!\bigl(\tfrac{d}{2}\bigr)}{\sqrt{\pi}\,\Gamma\!\bigl(\tfrac{d-1}{2}\bigr)} \int_{0}^{1} (1 - u)^{\tfrac{d-3}{2}} \,\mathrm{d}u
    \;=\; \frac{\Gamma\!\bigl(\tfrac{d}{2}\bigr)}{\sqrt{\pi}\,\Gamma\!\bigl(\tfrac{d-1}{2}\bigr)} \cdot\frac{1}{\tfrac{d-1}{2}}
    \;=\; \frac{2}{d-1} \,\frac{\Gamma\!\bigl(\tfrac{d}{2}\bigr)} {\sqrt{\pi}\,\Gamma\!\bigl(\tfrac{d-1}{2}\bigr)}. \notag
\end{align}
Therefore, for large \(d\), $\mathbb{E}\bigl[|U_{1}|\bigr] =_{d \to +\infty} \cO(1/\sqrt{d})$.
\end{proof}

\section{Laplace with Known Scale}
\label{app:laplace}

In the following, $\theta$ denote the mean parameter of Laplace distribution with known scale $b$.
We have $\cX = \R$ and $k= d=1$.
Let $\theta \in \Theta$ and $u \in \{\pm 1\}$.
It is direct to see that
\begin{align*}
    &\ell_{\theta}(x,y) = \log \frac{p_{\theta}(x)}{p_{\theta}(y)} = |y-\theta|/b - |x - \theta|/b = \frac{1}{b}\begin{cases}
	y-x & \text{if } \theta < \min\{x,y\} \\
	x-y & \text{if } \theta > \max\{x,y\}\\
	(2 \theta - (x+y)) \sign(x - y) & \text{if } \theta \in [\min\{x,y\}, \max\{x,y\}]
		\end{cases} \: , \\
    &\text{and} \quad \nabla_{\theta^\star} \ell_{\theta^\star}(x,y)  = \begin{cases}
	0 & \text{if } \theta^\star < \min\{x,y\} \text{ or } \theta^\star > \max\{x,y\} \\
	\frac{2}{b}  \sign(x - y) & \text{if } \theta^\star \in [\min\{x,y\}, \max\{x,y\}]
		\end{cases} \: .
\end{align*}
Therefore, we have
\begin{align*}
     &\cG_0(\theta^\star) = \{(x,y) \in \R^2 \mid \left| |y-\theta^\star| - |x - \theta^\star| \right| > 0\} \: , \\
     &\cG_1(\theta^\star) = \{(x,y) \in \R^2 \mid  \theta^\star \in [\min\{x,y\}, \max\{x,y\}] \} \: ,\\
     &\cG_1(\theta^\star,u) = \{(x,y) \in \R^2  \mid \theta^\star \in [\min\{x,y\}, \max\{x,y\}] \: \land \:   u( (x+y)/2 -  \theta^\star ) > 0\} \: , \\
     &\wt \cD(\theta^\star, \theta) = \left \{(x,y) \in \R^2  \mid  \theta^\star \in [\min\{x,y\}, \max\{x,y\}] \: \land \: 0 < \sign(\theta - \theta^\star)( (x+y)/2 - \theta^\star  ) <  |\theta - \theta^\star|   \right\} \: , \\
     &\forall (x,y) \in \cG_{1}(\theta^\star,u), \quad V_{\theta^\star,u}(x,y) =  u( (x+y)/2 -  \theta^\star ) \: .
\end{align*}
When $\theta^\star > \theta$, we have
\begin{align*}    
     \cD(\theta^\star, \theta) &= \left\{(x,y) \mid \{\theta^\star, \theta\} \subset [\min\{x,y\}, \max\{x,y\}] \: \land \: \theta <   (x+y)/2  <  \theta^\star \right\} \\
     &\cup \left\{ (x,y)  \mid  \theta < \min\{x,y\} \: \land \:  \theta^\star  \in ((x+y)/2, \max\{x,y\}]  \right\}\\
     &\cup \left\{ (x,y)  \mid  \theta < \min\{x,y\} \: \land \:  \theta^\star > \max\{x,y\}\right\}\\
     &\cup \left\{ (x,y)  \mid   \theta^\star > \max\{x,y\} \: \land \:  \theta \in [\min\{x,y\},  (x+y)/2 )  \right\}  \: .
\end{align*}
When $\theta^\star < \theta$, we have
\begin{align*}    
     \cD(\theta^\star, \theta) &= \left\{(x,y) \mid \{\theta^\star, \theta\} \subset [\min\{x,y\}, \max\{x,y\}] \: \land \: \theta^\star <   (x+y)/2  <  \theta \right\}\\
     &\cup \left\{ (x,y)  \mid  \theta > \max\{x,y\} \: \land \:  \theta^\star  \in [\min\{x,y\},  (x+y)/2 ) \right\} \\
     &\cup \left\{ (x,y)  \mid   \theta^\star < \min\{x,y\} \: \land \:  \theta > \max\{x,y\} \right\}\\
     &\cup \left\{ (x,y)  \mid   \theta^\star < \min\{x,y\} \: \land \:  \theta \in ((x+y)/2, \max\{x,y\}] \right\}  \: . 
\end{align*}

\paragraph{Proof that $\bP_{p_{\theta^\star}^{\otimes 2}}(\cG_1(\theta^\star)) > 0$.}
It is direct to see that $\text{dim}(\cG_0(\theta^\star)^{\complement}) < 2$.
Given that $p_{\theta^\star}^{\otimes 2}$ is a continuous distribution on $(\R)^2$, we obtain that $ \bP_{p_{\theta^\star}^{\otimes 2}}(\cG_0(\theta^\star)) = 1$. 
Using the symmetry of the Laplace distribution around its mean, we have that
\begin{align*}
    \bP_{p_{\theta^\star}^{\otimes 2}}(\cG_1(\theta^\star)) &= \bP_{p_{\theta^\star}^{\otimes 2}}\left((-\infty, \theta^\star) \times (\theta^\star, +\infty)\right) + \bP_{p_{\theta^\star}^{\otimes 2}}\left((\theta^\star, +\infty) \times (-\infty, \theta^\star) \right)  = 1/2 \: .
\end{align*}

\paragraph{Condition in Lemma~\ref{lem:sp_mle_asymptotic_variance}.}
The condition of Lemma~\ref{lem:sp_mle_asymptotic_variance} is implied by Assumption~\ref{ass:informative_direction}, hence we refer to the proof of this result below.
Therefore, we have $\cI(q_{\theta^\star, h_{\sto}}) \succ \cI(p_{\theta^\star}^{\otimes 2})$.

\paragraph{Consistency of $\lle$.}
To study $\lle$ for $\cF_{\lap, b}$, we use the change of variable $D = \theta^\star - X$ and $S = \theta^\star - Y$.
For all $(D,S) \in \cG_{1}(0)$, we have
\begin{align*}
    &\ell_{\theta^{\star}}(X,Y) = \frac{1}{b}(D + S)\sign(S-D)  \: , \quad  \nabla_{\theta^\star}\ell_{\theta^\star}(X,Y) = \frac{2}{b}\sign(S-D)  \: , \quad \nabla_{\theta^\star}\log p^{\otimes 2}_{\theta^\star}(X,Y)  = 0 \: . 
\end{align*}
For all $(D,S) \notin \cG_{1}(0)$, we have $\nabla_{\theta^\star}\ell_{\theta^\star}(X,Y) = 0$ and $ \nabla_{\theta^\star}\log p^{\otimes 2}_{\theta^\star}(X,Y)  \ne 0$.
Let $M(D,S) = \indi{(D,S) \in \cG_{1}(0) }   \sigma (-| D+S |/b) \sign(D+S)$. Then, $M(-D,-S)= -M(D,S)$ for all $(D,S) \in \R^{2}$.
By integration of an odd function with respect to $0$ with a symmetric distribution around $0_{2}$, we obtain $\bE_{(D,S) \sim \cN(0_{2d},I_{2d})}\left[ M(D,S) \right] = 0$.
Therefore, the condition~\eqref{eq:consistency_lle} is satisfied and $\lle$ is a consistent estimator.

\paragraph{Asymptotic variance of $\lle$.}
Let $H^{\lle}_{\theta^\star}$ and $R^{\lle}_{\theta^\star} $ defined in Lemma~\ref{lem:asymptotic_variance_lle}.
By definition of $\ell_{\theta}$, we obtain $\nabla_{\theta^\star}^2  \ell_{\theta^\star} = 0$ and $H^{\lle}_{\theta^\star} = 0$.
Moreover, using the above formula, we have $\nabla_{\theta^\star}\ell_{\theta^\star}(X,Y) \nabla_{\theta^\star}\log p^{\otimes 2}_{\theta^\star}(X,Y)  = 0$ for all $(D,S) \in \cG_{1}(0)$, hence we obtain $R^{\lle}_{\theta^\star}  = 0$.
The condition $\bP_{p_{\theta^\star}^{\otimes 2}}( |\ell_{\theta^\star} \langle u, \nabla_{\theta^\star}  \ell_{\theta^\star} \rangle| > 0) > 0$ for all $u \in \cS_{d-1}$ is implied by Assumption~\ref{ass:informative_direction}, hence we refer to the proof of this result below.
Using the sufficient condition derived in Appendix~\ref{app:deterministic_prefMestimators}, we have shown that $\lle$ is asymptotically better than $\spe$.

\paragraph{Proof of Assumption~\ref{ass:linearization}.} 
Using that $\wt \cD(\theta^\star, \theta) \subseteq \cG_1(\theta^\star)$, we simply need to show that $\wt \cD(\theta^\star, \theta) \subseteq \cG_1(\theta^\star) \cap \cD(\theta^\star, \theta)$.
Let us consider the case $\theta^\star > \theta$. 
Then, we have
\begin{align*}
    \wt \cD(\theta^\star, \theta) &= \left \{(x,y) \in \R^2  \mid  \theta^\star \in [\min\{x,y\}, \max\{x,y\}] \: \land \: \theta <   (x+y)/2  <  \theta^\star  \right\} \\
     &= \left\{(x,y) \mid \{\theta^\star, \theta\} \subset [\min\{x,y\}, \max\{x,y\}] \: \land \: \theta <   (x+y)/2  <  \theta^\star  \right\} \\
     &\cup \left\{ (x,y)  \mid  \theta < \min\{x,y\} \: \land \:  \theta^\star  \in ((x+y)/2, \max\{x,y\}]  \right\} =\cG_1(\theta^\star) \cap \cD(\theta^\star, \theta) \: .
\end{align*}
The same result follows when $\theta^\star < \theta$ by using the same argument. 
In summary, we have shown that $\wt \cD(\theta^\star, \theta) = \cG_1(\theta^\star) \cap \cD(\theta^\star, \theta) \subseteq \cD(\theta^\star, \theta)$.

\paragraph{Proof of Assumption~\ref{ass:informative_direction}.}
Using the symmetry of the Laplace distribution around its mean, we have $\bP_{p_{\theta^\star}^{\otimes 2}}(\cG_1(\theta^\star,u)) = \bP_{p_{\theta^\star}^{\otimes 2}}(\cG_1(\theta^\star,1))$ for all $u \in \{\pm 1\}$.
Then, by integrating for $x < y$, we obtain
\begin{align*}
    \bP_{p_{\theta^\star}^{\otimes 2}}(\cG_1(\theta^\star,1)) &= \frac{1}{2b^2} \int_{x \in (-\infty, \theta^\star)} e^{x/b} \left( \int_{y \in ( 2\theta^\star - x, +\infty)}  e^{-y/b}   \mathrm{d} y \right)\mathrm{d} x =  \frac{1}{2b} \int_{x \in (-\infty, \theta^\star)} e^{ 2x - 2\theta^\star/b}  \mathrm{d} x = \frac{1}{4}\: .
\end{align*}

\paragraph{Proof of Assumption~\ref{ass:cdf_lower_bound}.}
Let $\epsilon > 0$.
Using the symmetry of the Laplace distribution around its mean, we have $ F_{\theta^\star,u}(\epsilon) =  F_{\theta^\star,1}(\epsilon)$ for all $u \in \{\pm 1\}$.
Similarly as above, by integrating for $x < y$, we obtain that
\begin{align*}
     F_{\theta^\star,1}(\epsilon) &= \bP_{(X,Y) \sim p_{\theta^\star}^{\otimes 2}}(0 < V_{\theta^\star,1}(X,Y) \le \epsilon) \\
    &=  \frac{1}{2b^2} \int_{x \in (-\infty, \theta^\star)} e^{x/b} \left( \int_{y \in (2\theta^\star -x, 2\epsilon + 2\theta^\star -x)}  e^{-y/b}   \mathrm{d} y \right)\mathrm{d} x \\
    &=  \frac{1}{2b} \left(\int_{x \in (-\infty, \theta^\star)} e^{(2x-2\theta^\star)/b}  \mathrm{d} x  -  \int_{x \in (-\infty, \theta^\star)}  e^{(2x-2\theta^\star - 2\epsilon)/b}   \mathrm{d} x \right) = \frac{1}{4} \left( 1  -  e^{- 2\epsilon/b} \right) \: .   \\
\end{align*}
Therefore, we have
\begin{align*}
     F_{\theta^\star,u}'(x) = \frac{1}{2b} e^{- 2\epsilon/b} \quad \text{,} \quad   F_{\theta^\star,u}^{-1}(x) = - \frac{b}{2} \log(1- 4 x) \quad \text{and} \quad  (F_{\theta^\star,u}^{-1})''(x) =  \frac{8b}{(1-4x)^2} \: .
\end{align*}
Then, we obtain $F_{\theta^\star,u}'(0) = \frac{1}{2b} $ and we can take $x_{\theta^\star,u}=1/8$ and $M_{\theta^\star,u} = 32b$.

\paragraph{Proof of Assumption~\ref{ass:identifiability}.}
Let $\epsilon = |\theta^\star -  \theta|$ and $u = \sign(\theta^\star -  \theta)$.
Using the above computation, we have 
\begin{align*}
     \bP_{p_{\theta^\star}^{\otimes 2}} \left( \cD(\theta^\star, \theta) \right) \ge \bP_{p_{\theta^\star}^{\otimes 2}} \left( \wt \cD(\theta^\star, \theta^\star + \epsilon u) \right) \ge \bP_{p_{\theta^\star}^{\otimes 2}} \left( \wt\cD(\theta^\star, \theta^\star + \epsilon u) \cap \cG_1(\theta^\star,u) \right) = \bP_{(X,Y) \sim p_{\theta^\star}^{\otimes 2}}(0 < V_{\theta^\star,u}(X,Y) < \epsilon)
\end{align*}
Using the above computation, we obtain that $\bP_{(X,Y) \sim p_{\theta^\star}^{\otimes 2}}(0 < V_{\theta^\star,u}(X,Y) < \epsilon) > 0$, hence $ \bP_{p_{\theta^\star}^{\otimes 2}} \left( \cD(\theta^\star, \theta) \right) > 0$.
 
\paragraph{Proof of Assumption~\ref{ass:Hellinger_control}.}
Using that $f(x) = x^2 - 1 + (1+x)e^{-x}$ is positive on $\R_{+}$, we obtain
\[
   \mH(p_{\theta^\star}, p_{\theta})  = 1 - \left(1 + \frac{|\theta^\star - \theta|}{2b}\right) \exp\left( -\frac{|\theta^\star - \theta|}{2b}\right) \le \frac{(\theta^\star - \theta)^2}{4b^2} \: .
\]
First, we notice that $\text{dim}\left(\cG_0(\theta^\star)^{\complement} \triangle \cG_0(\theta)^{\complement}\right) < 2$, hence we can show that
\begin{align*}
    \int_{(x,y) \in \cG_0(\theta^\star)^{\complement} \triangle \cG_0(\theta)^{\complement}} \sqrt{p_{\theta^\star}(x) p_{\theta^\star}(y) p_{\theta}(x) p_{\theta}(y)} \mathrm{d}x \mathrm{d}y  = 0 \: .
\end{align*}
We consider the case $\theta^\star < \theta$ since $\theta^\star > \theta$ is done similarly as $\wt \BC(\theta^\star, \theta) = \wt \BC(\theta, \theta^\star)$.
Let $\epsilon = \theta - \theta^\star$.
By integrating for $x < y$, we have
\begin{align*}
    \wt \BC(\theta^\star, \theta) &= \frac{1}{2b^2} \int_{x}e^{x/b} \left( \int_{y} \indi{ x \le \theta^\star <   (x+y)/2  < \theta^\star + \epsilon \le y} e^{-y/b} \mathrm{d} y \right)\mathrm{d} x \\
    &+ \frac{e^{-(\epsilon+\theta^\star)/b}}{2b^2} \int_{x}e^{x/b} \left( \int_{y} \indi{ y < \theta^\star + \epsilon  \: \land \:   x \le  \theta^\star < (x+y)/2 }\mathrm{d} y \right)\mathrm{d} x\\
    &+ \frac{e^{-\epsilon/b}}{2b^2} \int_{x} \left( \int_{y} \indi{\theta^\star < x < y < \theta^\star + \epsilon} \mathrm{d} y \right)\mathrm{d} x\\
    &+ \frac{e^{-\theta^\star/b}}{2b^2} \int_{y} e^{-y/b} \left(  \int_{x}  \indi{\theta^\star < x  \: \land \: (x+y)/2< \theta^\star + \epsilon \le y } \mathrm{d} x \right) \mathrm{d} y 
\end{align*}
Direct computation yields
\begin{align*}
    &\int_{x \in (\theta^\star - \epsilon, \theta^\star)}e^{x/b} \left( \int_{y \in (2\theta^\star - x, \theta^\star+ \epsilon)} 1\mathrm{d} y \right)\mathrm{d} x = \int_{x \in (\theta^\star - \epsilon, \theta^\star)}e^{x/b} \left( x+ \epsilon-\theta^\star \right)\mathrm{d} x= e^{(\theta^\star - \epsilon)/b}\int_{u \in (0, \epsilon)} u e^{u/b} \mathrm{d} u  \: , \\
    &\int_{u \in (0, \epsilon)} u e^{u/b} \mathrm{d} u = b\left( e^{\epsilon/b}(\epsilon-b)+ b \right) \: , \\
    &\int_{x} \left( \int_{y} \indi{\theta^\star < x < y < \theta^\star + \epsilon} \mathrm{d} y \right)\mathrm{d} x = \int_{x \in (\theta^\star, \theta^\star + \epsilon)} (\theta^\star + \epsilon -x) \mathrm{d} x = \frac{\epsilon^2}{2} \: ,\\
    &\int_{y \in (\theta^\star+ \epsilon,\theta^\star+ 2\epsilon)} e^{-y/b} \left(  \int_{x \in (\theta^\star, 2\theta^\star + 2\epsilon -y)} 1 \mathrm{d} x \right) \mathrm{d} y = \int_{y \in (\theta^\star+ \epsilon,\theta^\star+ 2\epsilon)} e^{-y/b} \left(\theta^\star + 2\epsilon -y \right) \mathrm{d} y \\
    &\qquad= e^{-(\theta^\star + 2\epsilon)/b}\int_{u \in (0, \epsilon)} u e^{u/b} \mathrm{d} u \: .
\end{align*}
Moreover, we have
\begin{align*}
    &\int_{x}e^{x/b} \left( \int_{y} \indi{ x \le \theta^\star <   (x+y)/2  < \theta^\star + \epsilon \le y} e^{-y/b} \mathrm{d} y \right)\mathrm{d} x \\
    &=\int_{x \in (-\infty,\theta^\star - \epsilon)}e^{x/b} \left( \int_{y \in (2\theta^\star - x, 2\theta^\star + 2\epsilon -x)} e^{-y/b} \mathrm{d} y \right)\mathrm{d} x + \int_{x \in (\theta^\star - \epsilon, \theta^\star)}e^{x/b} \left( \int_{y \in (\theta^\star + \epsilon, 2\theta^\star + 2\epsilon -x)}  e^{-y/b} \mathrm{d} y \right)\mathrm{d} x  \\
    &=b\int_{x \in (-\infty,\theta^\star - \epsilon)} \left( e^{-(2\theta^\star - 2x)/b} - e^{-( 2\theta^\star + 2\epsilon -2x)/b} \right)\mathrm{d} x + b\int_{x \in (\theta^\star - \epsilon, \theta^\star)} \left( e^{-(\theta^\star + \epsilon-x)/b} - e^{-(2\theta^\star + 2\epsilon -2x)/b} \right)\mathrm{d} x  \\
    &=b \left( \frac{b}{2}e^{-2\epsilon/b}  - \frac{b}{2}e^{-4\epsilon/b} + b \left( e^{- \epsilon/b} - e^{-2\epsilon/b} \right) + \frac{b}{2} \left( e^{-4\epsilon/b} - e^{-2\epsilon/b}\right) \right)  = b^2 \left( e^{- \epsilon/b} - e^{-2\epsilon/b} \right)
\end{align*}
Therefore, we have
\begin{align*}        
    \wt \BC(\theta^\star, \theta^\star + \epsilon) &= \frac{1}{2}\left( e^{- \epsilon/b} - e^{-2\epsilon/b} \right) + \frac{1}{2b} \left( e^{-2\epsilon/b} + e^{-2(\theta^\star+\epsilon)/b} \right)  \left( e^{\epsilon/b}(\epsilon-b)+ b \right) +  e^{-\epsilon/b}  \frac{\epsilon^2}{4b^2}  \\
     &= \frac{1}{2}\left( e^{- \epsilon/b} - e^{-2\epsilon/b} \right) + \frac{1}{2} \left( e^{-2\epsilon/b} + e^{-2(\theta^\star+\epsilon)/b} \right)  \left( \frac{\epsilon}{b} e^{\epsilon/b} - e^{\epsilon/b}+ 1 \right) +  e^{-\epsilon/b}  \frac{\epsilon^2}{4b^2}   \\
     &= \frac{1}{2}\left(  e^{-2(\theta^\star+\epsilon)/b} - e^{-(2\theta^\star +\epsilon)/b} \right)   + \frac{1}{2} \left( e^{-\epsilon/b} + e^{-(2\theta^\star+\epsilon)/b} \right)  \frac{\epsilon}{b}   +  e^{-\epsilon/b}  \frac{\epsilon^2}{4b^2}    \\
     &= \frac{1}{2}e^{-\epsilon/b}  \left(  e^{-2\theta^\star/b} (e^{-\epsilon/b}-1+  \epsilon/b ) +   \frac{\epsilon}{b} +  \frac{\epsilon^2}{2b^2}   \right)  \: .  
\end{align*}
Then, we can conclude that
\[
    \wt \BC(\theta^\star, \theta^\star - \epsilon) = \wt \BC(\theta^\star - \epsilon, \theta^\star) =  \frac{1}{2}e^{-\epsilon/b}  \left(  e^{-2(\theta^\star - \epsilon)/b} (e^{-\epsilon/b}-1+  \epsilon/b ) +   \frac{\epsilon}{b} +  \frac{\epsilon^2}{2b^2}   \right) \: .
\]
Using that $f(x) =  1 - x + x^2/2  - e^{-x}$ is positive on $\R_{+}$, we obtain
\[
     \wt \BC(\theta^\star, \theta )  \le  \frac{|\theta^\star - \theta|}{2b} \left( 1 +  \frac{|\theta^\star - \theta|}{2b} \left( 1 + e^{-2\min\{\theta^\star, \theta\}/b} \right)  \right) \: .
\]

\section{Supplementary Experiments}
\label{app:supp_experiments}

Using the same empirical setup as in Section~\ref{sec:experiments}, we conduct additional experiments to support our theoretical claims for other distributions (Appendix~\ref{app:ssec_other_distributions}), other estimators for Gaussian distributions based on $\cC_n$ (Appendix~\ref{app:ssec_other_estimators}), other convex surrogates of the $0$-$1$ loss (Appendix~\ref{app:ssec_other_losses}) or normalized/regularized versions of the logistic loss (Appendix~\ref{app:ssec_norm_regu}).

\paragraph{Reproducibility.} 
Code for reproducing our empirical results is available at \url{https://github.com/tml-epfl/learning-parametric-distributions-from-samples-and-preferences}.
Our code is implemented in \texttt{Julia}~\citep{Julia-2017}, version \texttt{1.11.5}.
The plots are generated with \texttt{StatsPlots}. 
The optimization problems defining some of our estimators are solved numerically with \texttt{JuMP}~\citep{lubin2023jump}, by using the \texttt{Ipopt}~\citep{wachter2006implementation} and \texttt{HiGHS}~\citep{huangfu2018parallelizing} solvers. 
Other dependencies are listed in the \texttt{Readme.md} that provides detailed julia instructions to reproduce our experiments, as well as a \texttt{script.sh} to run them all at once. 
Our experiments are conducted on 12 Intel(R) Core(TM) Ultra 7 165U 4.9GHz CPU.

\paragraph{Gaussian distribution with known variance.}
For $\cF_{\cN, 1}$, the $\lle$ and $\spe$ estimators are computed with the \texttt{Ipopt} solver.
For $\cF_{\cN, I_{d}}$, the $\lle$, $\spe$, $\dpe$ and $\we$ estimators are computed with the \texttt{Ipopt} solver, and the $\anye$ estimator uses the \texttt{HiGHS} solver.

\subsection{Accelerated Rates for Other Distributions}
\label{app:ssec_other_distributions}

\begin{figure}[t]
    \centering
    \includegraphics[width=0.48\linewidth]{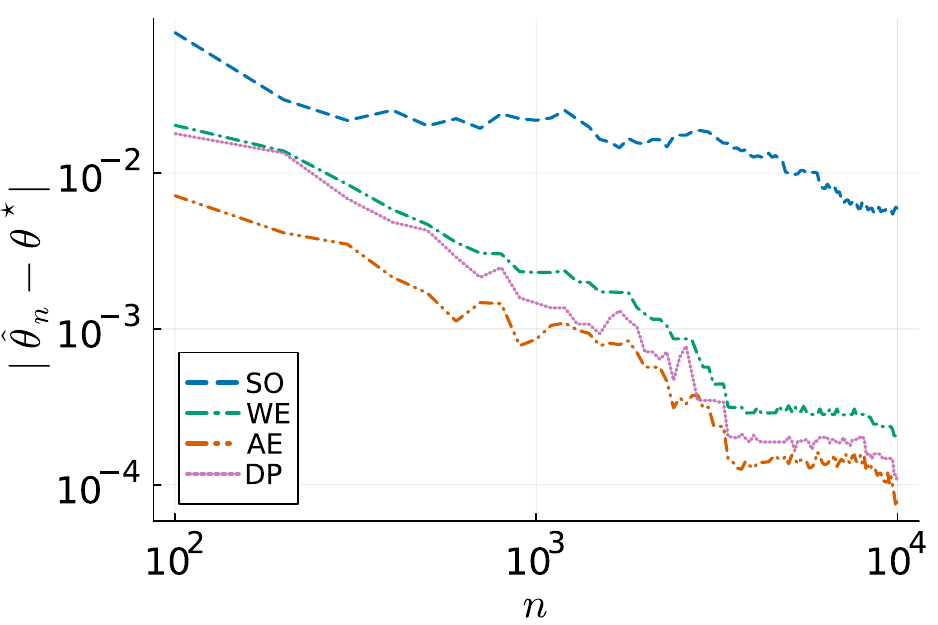}  
    \includegraphics[width=0.48\linewidth]{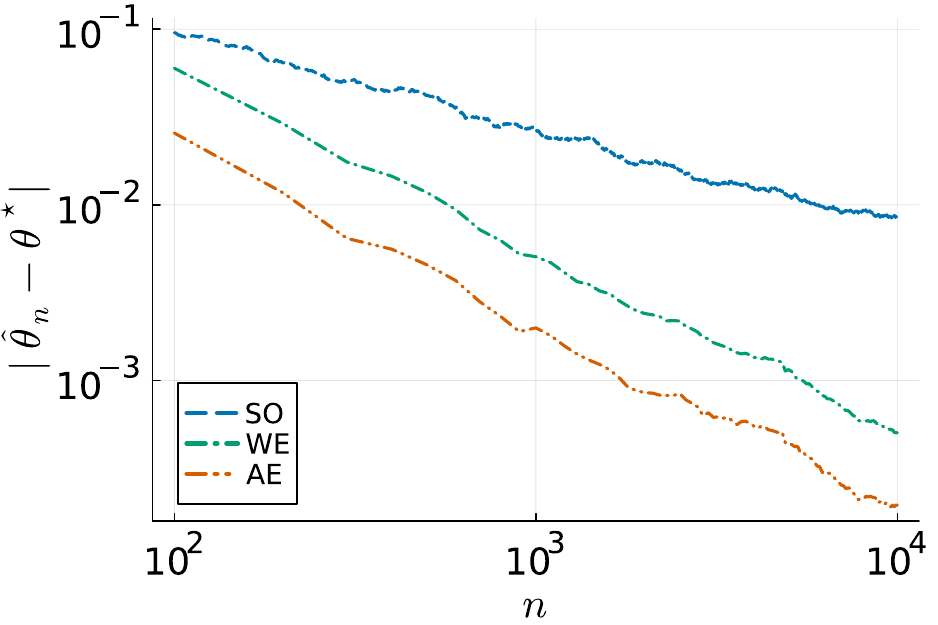} 
    \caption{Estimation errors for (a) Lap$(\theta^\star,1)$ where $\theta^\star \sim \cU([1,2])$ with $N_{\text{runs}} = 10$ and (b) Rayleigh$(\sqrt{\theta^\star})$ where $\theta^\star \sim \cU([1,2])$ with $N_{\text{runs}} = 10^2$.}
    \label{fig:OtherDistFctnErrorSupp}
\end{figure}

\subsubsection{Laplace Distribution with Known Scale}

\paragraph{Estimators.}
For $\cF_{\lap, 1}$ (Appendix~\ref{app:laplace}), we have
\[
    \wh \theta^{\so}_{n} = \textrm{median}(\{X_i\}_{i \in [n]} \cup \{Y_i\}_{i \in [n]} ) \quad \text{and} \quad \cC_n = \{\theta \mid \forall i \in [n], \: Z_i(|Y_i - \theta| - |X_i - \theta|) \ge 0\} \: .
\]
The estimators based on $\cC_n$ are $\wh \theta^{\anye}_{n} \in \cC_{n}$, $\wh \theta^{\we}_{n} \eqdef \argmax_{\theta \in \cC_{n}} |\theta - \theta^\star|$ and
\[
    \wh \theta^{\dpe}_{n} = \argmax_{\theta \in \cC_{n}} \sum_{i \in [n]} \left( |Y_i - \theta| + |X_i - \theta| \right) \: .
\]
Those three estimators are computed with the \texttt{Ipopt} solver.

\paragraph{Experiments.}
Figure~\ref{fig:OtherDistFctnErrorSupp}(a) confirms empirically the difference in estimation rate between the M-estimators (\ref{eq:somle})---obtaining $\cO(1/\sqrt{n})$---and our estimators based on $\cC_n$---achieving $\cO(1/n)$.
Moreover, $\anye$ and $\we$ perform on par with \ref{eq:dple}.

\subsubsection{Rayleigh Distribution}

Let $\sigma > 0$ be the scale parameter characterizing a Rayleigh distribution.
In the following, let $\theta = - \frac{1}{2\sigma^2} < 0$ denote the natural parameter of a Rayleigh distribution.
We have $\Theta \subseteq \R_{-}^{\star}$, $\cX = \R_+$ and $k = d = 1$.
The probability density function is defined as
\[
	\forall x \in \R_+, \quad p_{\theta}(x) = \exp \left( x^2 \theta + \log(x) + \log(2\theta) \right) \: .
\]
Let $\theta \in \Theta$ and $u \in \{\pm 1\}$.
It is direct to see that, for all $(x,y) \in \R_+^2$,
\begin{align*}
	&\ell_{\theta}(x,y)= \log\frac{p_{\theta}(x)}{p_{\theta}(y)} = (x^2 - y^2)\theta + \log (x/y)   \quad \text{and} \quad  \frac{\mathrm{d} \ell_{\theta^\star}}{\mathrm{d} \theta^\star} (x,y) = x^2 - y^2 = (x-y)(x+y)\: .
\end{align*}
Therefore, we have
\begin{align*}
    &\cG_0(\theta^\star) = \{(x,y) \in \R_+^2 \mid |(x^2 - y^2)\theta^\star + \log (x/y)| > 0\} \: , \\
    &\cG_1(\theta^\star) = \{(x,y) \in \R_+^2 \mid |x - y| > 0\} \: , \\
    &\cG_1(\theta^\star,u) = \{(x,y) \in \R_+^2 \mid u((x^2 - y^2)^2\theta^\star + (x^2-y^2)\log (x/y)) < 0 \} \: , \\
    &\cD(\theta^\star, \theta) = \{(x,y) \in \R_+^2 \mid ((x^2 - y^2)\theta^\star + \log (x/y))^2 + (x^2 - y^2) (\theta - \theta^\star)\left((x^2 - y^2)\theta^\star + \log (x/y) \right) \} \: , \\
    &\forall (x,y) \in \cG_1(\theta^\star,u), \quad V_{\theta^\star,u}(x,y) = - u \left( \theta^\star + \frac{1}{x+y} \frac{\log (x) - \log(y)}{x - y} \right) \: .
\end{align*}

\paragraph{Proof that $\bP_{p_{\theta^\star}^{\otimes 2}}(\cG_1(\theta^\star)) > 0$.}
It is direct to see that $\text{dim}(\cG_0(\theta^\star)^{\complement}) < 2$ and $\text{dim}(\cG_0(\theta^\star) \setminus \cG_1(\theta^\star)) < 2$.
Given that $p_{\theta^\star}^{\otimes 2}$ is a continuous distribution on $(\R_{+})^2$, we obtain that $ \bP_{p_{\theta^\star}^{\otimes 2}}(\cG_0(\theta^\star)) =  \bP_{p_{\theta^\star}^{\otimes 2}}(\cG_1(\theta^\star)) = 1$. 

\paragraph{Proof of Assumption~\ref{ass:linearization} and~\ref{ass:informative_direction}.}
Since $ \ell_{\theta}(x,y) =  (x^2 - y^2)\theta + \log (x/y) $ is linear in $\theta$, we have $\cD(\theta^\star, \theta) = \wt \cD(\theta^\star, \theta)$.
Let $(X,Y) \sim p_{\theta^\star}^{\otimes 2}$.
Then, we have 
\begin{align*}
    & \bP_{p_{\theta^\star}^{\otimes 2}}(\cG_1(\theta^\star,1)) = \bP_{(X,Y) \sim p_{\theta^\star}^{\otimes 2}} \left(\theta^\star < \frac{1}{X^2} \frac{\log(Y/X)}{1 - (Y/X)^2}  \right)  > 0 \: , \\ 
     &\bP_{p_{\theta^\star}^{\otimes 2}}(\cG_0(\theta^\star,-1)) =  \bP_{(X,Y) \sim p_{\theta^\star}^{\otimes 2}} \left(\theta^\star > \frac{1}{X^2} \frac{\log(Y/X)}{1 - (Y/X)^2}  \right) > 0 \: .
\end{align*}

\paragraph{Estimators.}
We have
\[
    \wh \theta^{\so}_{n} =  \frac{1}{4n}\sum_{i \in [n]} (X_i^2 + Y_i^2)  \quad \text{and} \quad \cC_n = \{\theta \mid \forall i \in [n], \: Z_i((X_i^2 - Y_i^2)\theta + \log (X_i/Y_i)) \ge 0\} \: .
\]
The estimators based on $\cC_n$ are $\wh \theta^{\anye}_{n} \in \cC_{n}$, $\wh \theta^{\we}_{n} \eqdef \argmax_{\theta \in \cC_{n}} |\theta - \theta^\star|$.
Those two estimators are computed with the \texttt{Ipopt} solver.

\paragraph{Experiments.}
Figure~\ref{fig:OtherDistFctnErrorSupp}(b) confirms empirically the difference in estimation rate between the M-estimators (\ref{eq:somle})---obtaining $\cO(1/\sqrt{n})$---and our estimators based on $\cC_n$---achieving $\cO(1/n)$.
Moreover, $\anye$ and $\we$ perform similarly.

\subsection{Other Estimators for Gaussian Distributions}
\label{app:ssec_other_estimators}

\begin{figure}[t]
    \centering
    \includegraphics[width=0.48\linewidth]{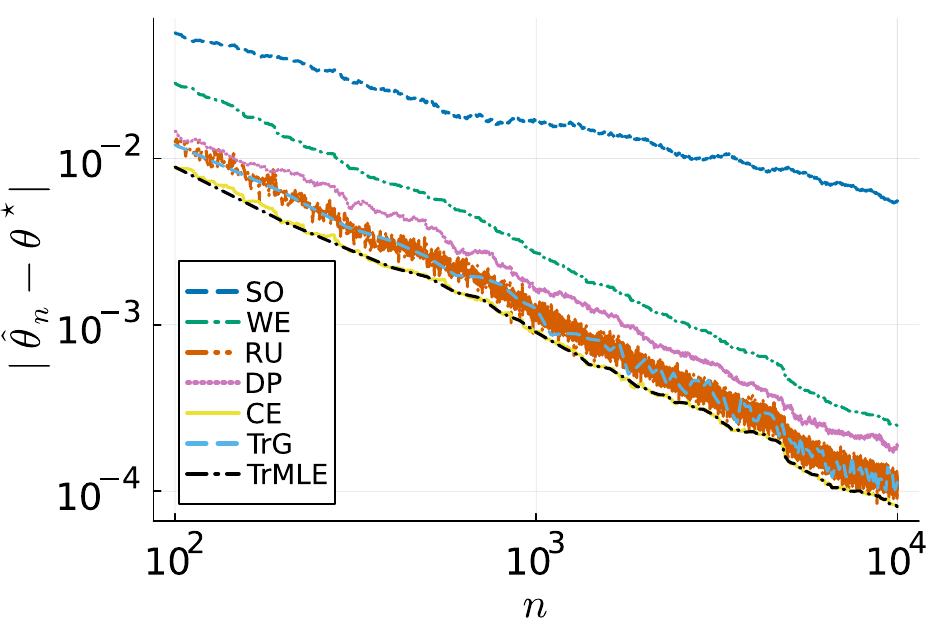}   
    \includegraphics[width=0.48\linewidth]{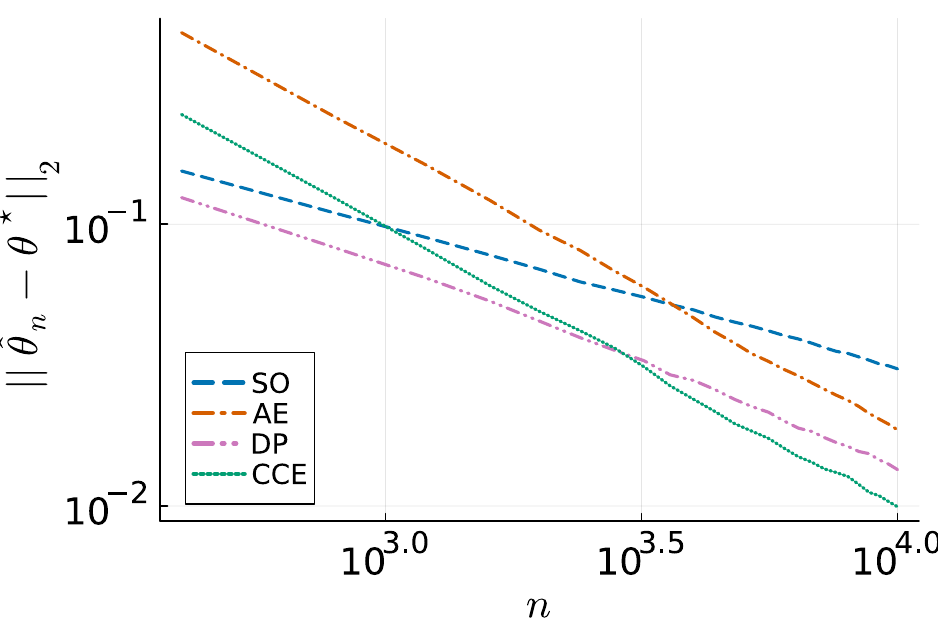}   
    \caption{Estimation errors for $\cN(\theta^\star,I_d)$ where $\theta^\star \sim \cU([1,2]^{d})$ with $N_{\text{runs}} = 10^2$ for (a) $d=1$ and (b) $d=20$.}
    \label{fig:GaussianFctnErrorSupp}
\end{figure}

To better understand the surprising performance of the RU estimator, we consider other estimators that disentangle the effect of RU’s randomness versus its mean behavior.

\paragraph{Univariate Gaussian.}
The center estimator (CE) returns the center of the interval $\mathcal C_n$. 
The truncated Gaussian estimator (TrG) returns a realization from a Gaussian distribution with mean CE and variance $4/n$, which is truncated to $\mathcal C_n$. 
The truncated MLE (TrMLE) returns the average of the observations $(\{X_i\}_{i \in [n]} \cup \{Y_i\}_{i \in [n]}) \cap \mathcal C_n$. 

Figure~\ref{fig:GaussianFctnErrorSupp}(a) reveals that TrG performs on par with $\rue$, yet CE and TrMLE outperform both TrG and RU. 
This suggests that being far away from the boundary of $\mathcal C_n$ improves performance compared to $\dpe$ that lies on the boundary of $\mathcal C_n$ (as observed empirically). 
Moreover, randomization on $\mathcal C_n$ worsens performance compared to CE.

Using the derivation in the introduction on univariate Gaussian, it is coherent that CE improves on $\dpe$ by a multiplicative constant: the average of those two (non-independent) random variables decreases faster. 
Formally, this could be proven by refining the proof of Lemma~\ref{lem:deviation_direction_proba} to account for the property that $n = N_{\theta^\star,-1} + N_{\theta^\star,1}$.

\begin{figure}[t]
    \centering
    \includegraphics[width=0.6\linewidth]{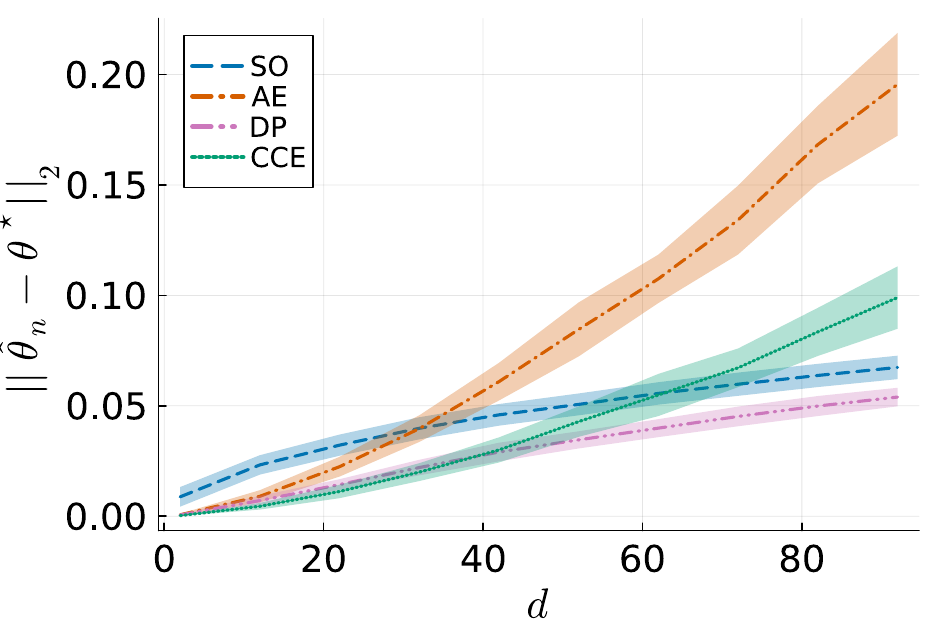}
    \caption{Estimation errors as a function of $d$ with $\cN(\theta^\star,I_d)$ where $\theta^\star \sim \cU([1,2]^{d})$, for $n=10^4$ and $N_{\text{runs}} = 10^2$.}
    \label{fig:GaussianVarDErrorSupp}
\end{figure}

\paragraph{Multivariate Gaussian.}
For $d>1$, multiple centers exist.
We use the Chebyshev center estimator (CCE) of $\mathcal C_n$. 

Figures~\ref{fig:GaussianFctnErrorSupp}(b) and~\ref{fig:GaussianVarDErrorSupp} shows that CCE outperforms $\anye$ by a constant margin. 
It only outperforms $\dpe$ in the regime of large $n$ compared to $d$ and performs worse than \ref{eq:somle} for small $n$. 
Geometrically, for small $n$ and large $d$, we conjecture that the random polytope $\mathcal C_n$ is more likely to be ``spiky'' along some directions. 
Due to those distant vertices, the center would become a worse estimator than $\dpe$, since the ``average'' is intuitively less robust to outliers. 
In contrast, \ref{eq:dple} dominates \ref{eq:somle} statistically (Lemma~\ref{lem:statistical_dominance_dp_over_so}), hence it achieves rate $O(\sqrt{d/n})$ when $n$ is small compared to $d$.

\subsection{Estimators Based on Convex Surrogate of the $0$-$1$ Loss}
\label{app:ssec_other_losses}

While \ref{eq:dple} minimizes an objective that minimizes the $0$-$1$ loss, \ref{eq:spmle} minimizes an objective involving the logistic loss $f_{\textrm{Log}}(x) = \log (1+\exp(-x))$.
As in~\citet{tang2024generalized}, we can generalize this approach to $f$ any convex surrogate of the $0$-$1$ loss, see Figure~\ref{fig:OtherLossesDErrorSupp}(a).
For example, we consider the Hinge loss (Hin), i.e., $f_{\textrm{Hin}} (x) \eqdef  \max\{0,1-x\}$, the square loss (Squ), i.e., $f_{\textrm{Squ}} (x) \eqdef  (1-x)^2$, the truncated square loss (TrS), i.e., $f_{\textrm{TrS}} (x) \eqdef  \max\{0,1-x\}^2 $, the Savage loss (Sav), i.e., $ f_{\textrm{Sav}} (x) \eqdef  (1+\exp(x))^{-2} $, and the exponential loss (Exp), i.e., $ f_{\textrm{Exp}} (x) \eqdef  \exp(-x)$.

Given $(X_i,Y_i, Z_i)_{i \in [n]} \sim q_{\theta^\star, h_{\det}}^{\otimes [n]}$ and a loss $f$, we consider the estimator
\begin{align*}
    \wh \theta^{f}_{n} \in \argmin_{\theta \in \Theta} \left\{ L^{\so}_{n}(\theta) +  \sum_{i \in [n]}  f (Z_i\ell_{\theta}(X_i,Y_i)) \right\} \: .
\end{align*}
All those estimators are computed with the \texttt{Ipopt} solver.

Figure~\ref{fig:OtherLossesDErrorSupp}(b) shows that all estimators perform on par with \ref{eq:spmle}, i.e., the one based on the logistic loss.

\begin{figure}[t]
    \centering
    \includegraphics[width=0.38\linewidth]{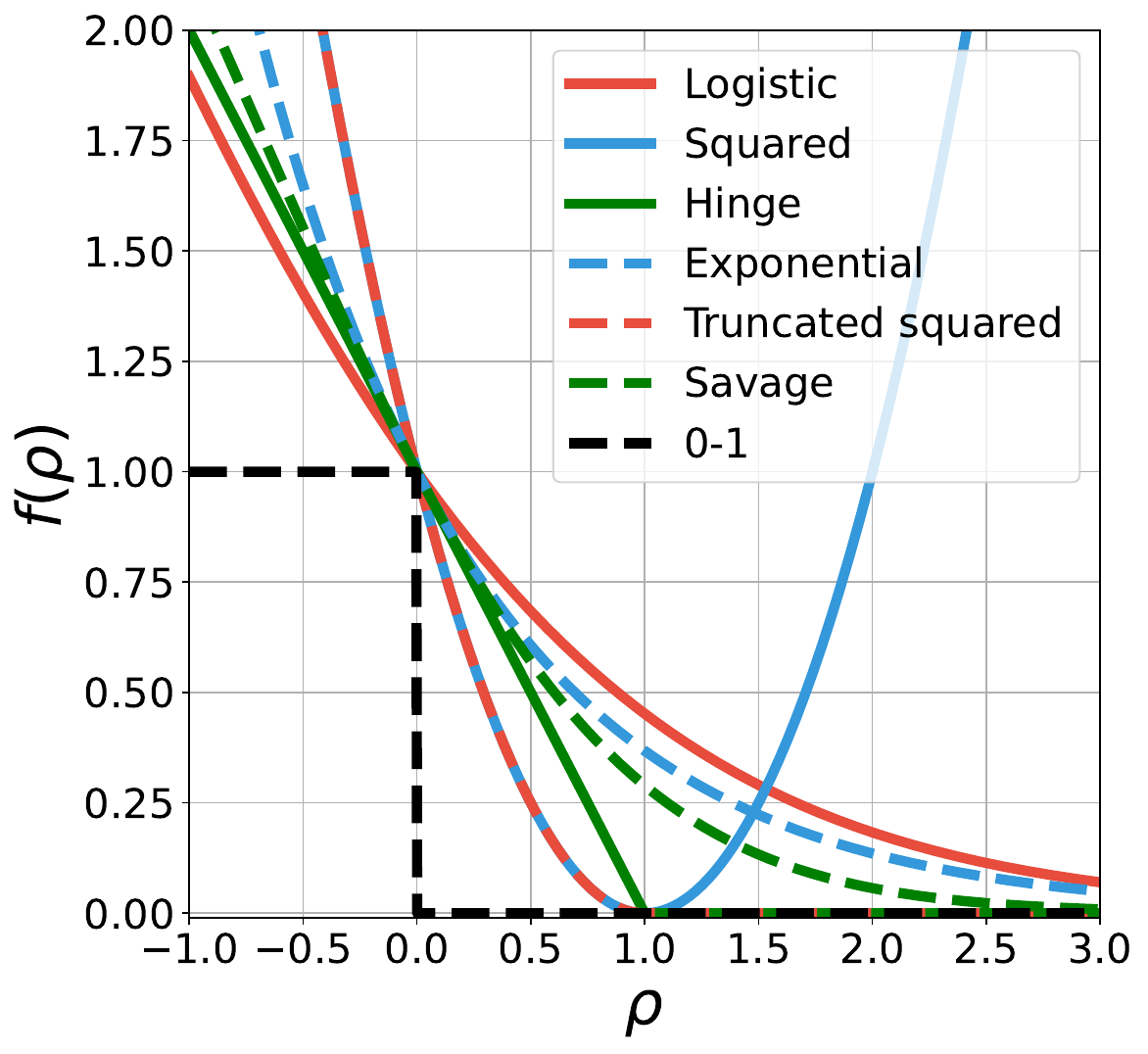}
    \includegraphics[width=0.58\linewidth]{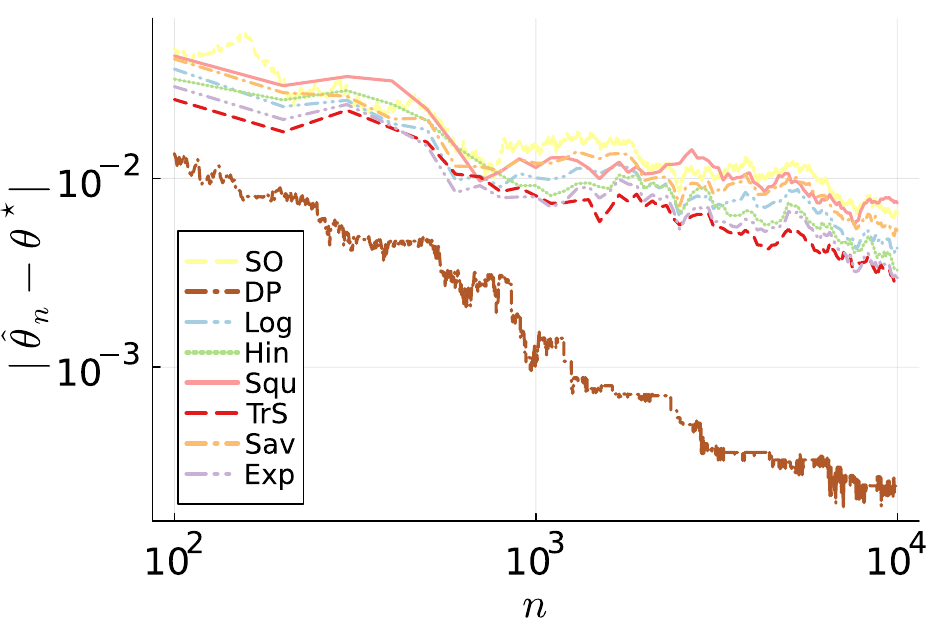}
    \caption{(a) Figure 2 in~\citet{tang2024generalized}: notable examples of binary classification loss functions. (b) Estimation errors when minimizing the empirical losses for $\cN(\theta^\star,1)$ where $\theta^\star \sim \cU([1,2])$ with $N_{\text{runs}} = 10$.}
    \label{fig:OtherLossesDErrorSupp}
\end{figure}

\subsection{Impact of Normalization and Regularization}
\label{app:ssec_norm_regu}

The estimator defined in Appendix~\ref{app:ssec_other_losses} can be further generalized by introducing a regularization parameter $\lambda \ge 0$ and a normalization parameter $\beta > 0$, see, e.g.,~\citet{gorbatovski2025differences}.
Given $(X_i,Y_i, Z_i)_{i \in [n]} \sim q_{\theta^\star, h_{\det}}^{\otimes [n]}$, a loss $f$ and regularization/normalization $(\lambda,\beta)$, we consider the estimator
\begin{align*}
    \wh \theta^{f,\lambda,\beta}_{n} \in \argmin_{\theta \in \Theta} \left\{ L^{\so}_{n}(\theta) + \lambda \sum_{i \in [n]}  f (\beta Z_i\ell_{\theta}(X_i,Y_i)) \right\} \: .
\end{align*}
While similar modifications could be made for other losses, we focus on the logistic loss $f_{\textrm{Log}}(x) = \log (1+\exp(-x))$.
In particular, we recover \ref{eq:spmle} by taking $\lambda = \beta = 1$.

Figures~\ref{fig:NormReguErrorSupp}(a) and (b) showcase the ``mild'' impact of normalization and regularization.

\begin{figure}[ht]
    \centering
    \includegraphics[width=0.48\linewidth]{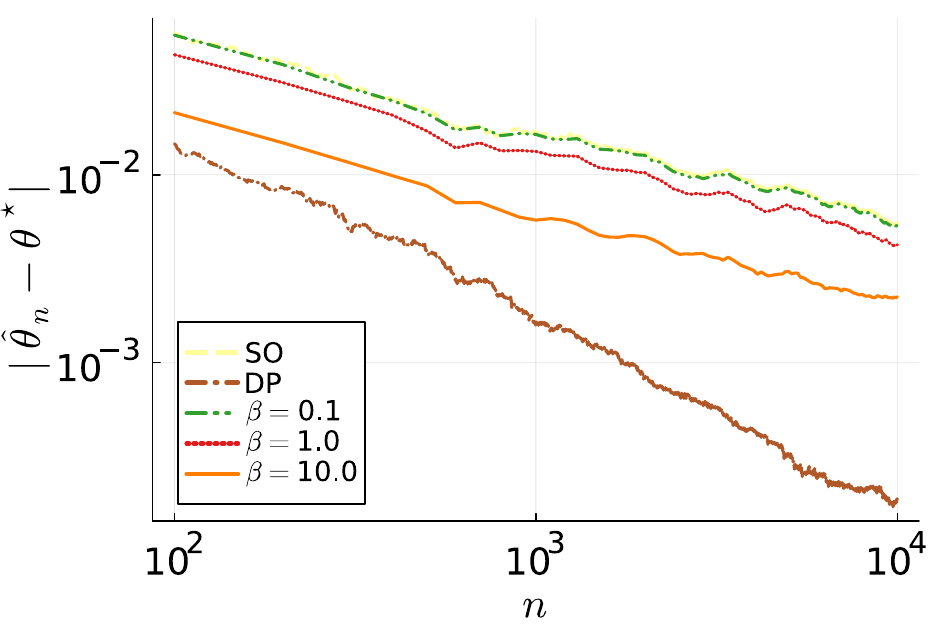}
    \includegraphics[width=0.48\linewidth]{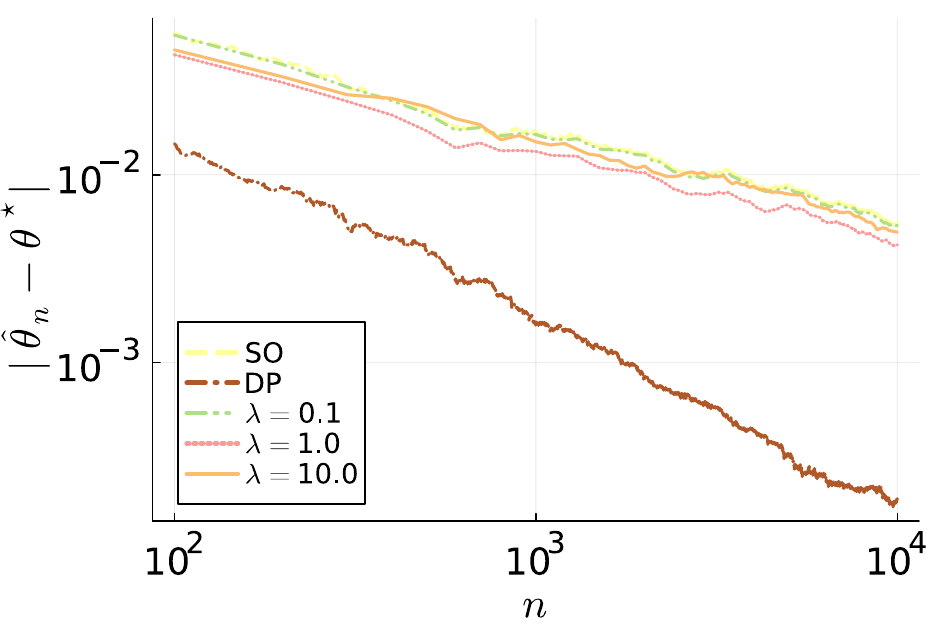}
    \caption{Estimation errors when minimizing the empirical losses for $\cN(\theta^\star,1)$ where $\theta^\star \sim \cU([1,2])$ with $N_{\text{runs}} = 10^2$ when (a) normalizing by $\beta$ with regularization $\lambda=1$ and (b) regularizing by $\lambda$ with normalization $\beta=1$.}
    \label{fig:NormReguErrorSupp}
\end{figure}

\end{document}